\documentclass[10pt,a4paper]{article}

\usepackage[utf8]{inputenc}
\usepackage[T1]{fontenc}
\usepackage{amsmath,amssymb,amsfonts,amsthm,bm}
\usepackage{graphicx}

\newtheorem{theorem}{Theorem}[section]

\newtheorem{lemma}[theorem]{Lemma}

\theoremstyle{remark}
\newtheorem{remark}[theorem]{Remark}
\usepackage{booktabs}
\usepackage{hyperref}
\usepackage[margin=1in]{geometry}
\usepackage{xcolor}
\usepackage{enumitem}
\usepackage{natbib}
\usepackage[capitalise,noabbrev]{cleveref}
\usepackage{multirow}
\usepackage[ruled,lined,noend]{algorithm2e}
\usepackage{soul}
\usepackage{wrapfig}
\usepackage{subcaption}
\usepackage{placeins}
\usepackage{morefloats}
\usepackage{alphalph}
\title{\Large\bfseries Priming: Hybrid State Space Models From Pre-trained Transformers}

\author{
Aditya Chattopadhyay\textsuperscript{*} \and 
Elvis Nunez\textsuperscript{*} \and 
Prannay Kaul\textsuperscript{*} \and 
Benjamin Bowman\textsuperscript{*} \and 
Evan Becker\textsuperscript{*} \and 
Luca Zancato\textsuperscript{*,\textdagger} \and 
David Thomas \and 
Wei Xia \and 
Stefano Soatto}

\footnotetext[1]{Equal contribution}
\footnotetext[2]{Corresponding author: \texttt{zancato@amazon.com}}

\date{
AWS Agentic AI\\
}

\begin{document}
\maketitle

\begin{abstract}

Hybrid State-Space models combine Attention with recurrent State-Space Model (SSM) layers, balancing eidetic memory from Attention with compressed fading
memory from SSMs. This yields smaller Key-Value caches and faster decoding than Transformers, along with a richer architectural design space. Exploring that design space at scale has so far required training each configuration from scratch, a barrier that has kept most large-model Hybrid research within a narrow range of architectures.  We introduce \emph{Priming}, a method that initializes a Hybrid State-Space model from a pre-trained Transformer and, through short alignment and post-training phases, recovers downstream quality using less than $0.5\%$ of the source model's pre-training token budget. 
Priming starts by selecting which Attention layers to replace with SSMs and transferring the source weights accordingly. A short distillation phase then aligns each new SSM layer to the Attention layer it replaces, followed by end-to-end adaptation with next-token prediction loss. Priming is agnostic to the source Transformer family (e.g., Qwen, Llama, Mistral), model class (dense or Mixture-of-Experts), and model scale.
Priming turns Hybrid architecture design from a pre-training problem into a knowledge transfer problem, enabling us to run the first controlled comparison of SSM layer types at scale under identical conditions. We evaluate three SSM families: Gated KalmaNet (GKA), Gated DeltaNet (GDN), and Mamba-2. We show that the theoretical expressiveness hierarchy among them (GKA~$>$~GDN~$>$~Mamba-2) translates directly into downstream performance: GKA outperforms GDN, which outperforms Mamba-2 on long-context reasoning tasks. SSM-based Hybrid models further surpass Hybrid models that use Sliding Window Attention in place of SSMs, even when the sliding window is sized such that its KV cache is up to $4\times$ larger than the SSM state, demonstrating that a compressed fading memory is more useful than a larger window of verbatim tokens when complementing Attention.
We scale these findings to 8B and 32B reasoning models with native 128K-token contexts. Our GKA-Primed-HQwen3-32B-Reasoner achieves a $+3.8$ average percentage-point gain across reasoning benchmarks over its source Qwen3-32B model, while remaining within $1\%$ of the same Transformer post-trained with our recipe. Because SSM layers maintain only a fixed-size state instead of a growing Key-Value cache, the primed model's inference-time memory footprint is roughly halved, enabling ${\sim}2{\times}$ concurrent sequences on the same hardware and up to $2.3{\times}$ decode throughput. GKA additionally offers a runtime compute-quality knob: the iteration count of its internal solver can be adjusted at serving time without retraining, letting practitioners trade latency for accuracy.  To foster further research on Hybrid architectures, we release a \textit{model zoo of primed Hybrid models}\footnote{Training and inference code: \url{https://github.com/awslabs/hybrid-model-factory}} for long-context reasoning and instruction following, together with the Priming training and inference code (Sequence Parallelism algorithms for long-context training, optimized GKA kernels, and vLLM serving plugin), all under Apache~2.0 License.\footnote{Model zoo: \url{https://huggingface.co/collections/amazon/primed-hybrid-models-collection}}

{\small ~\\
\noindent  {\bf Keywords:} Generative AI, Large Language Models, Long Context Reasoning, Test-Time Scaling; Knowledge Distillation, Cross-Architecture Transfer; Dynamical Systems, Realization Theory, Recurrent Neural Networks, Hybrid State Space Models.}

\begin{figure*}[t]
  \centering
  \begin{minipage}[c]{0.63\textwidth}
    \centering
    \includegraphics[width=\linewidth, height=0.92\textheight, keepaspectratio]{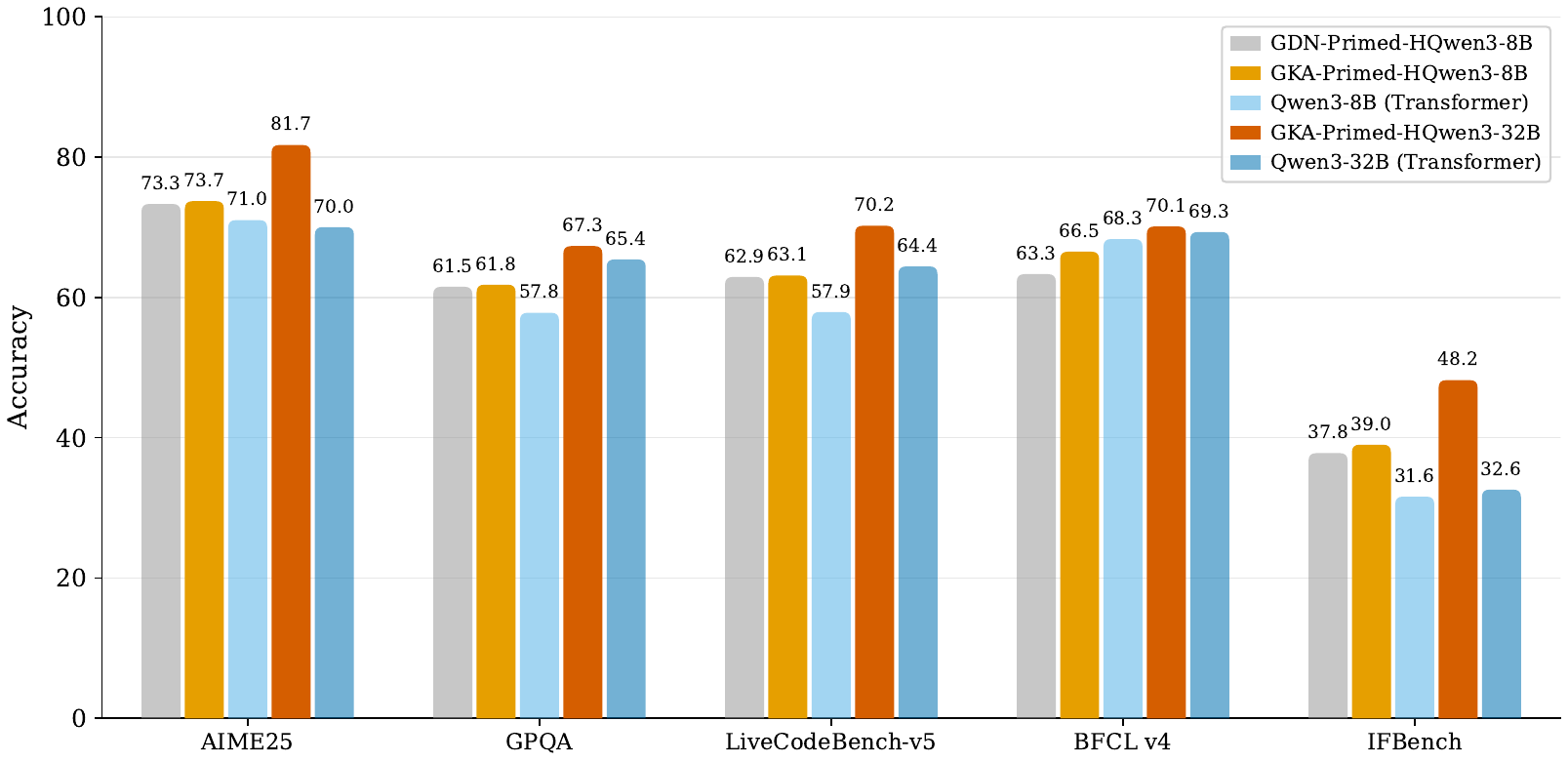}
  \end{minipage}%
  \hfill
  \begin{minipage}[c]{0.32\textwidth}
    \centering
    \includegraphics[width=\linewidth]{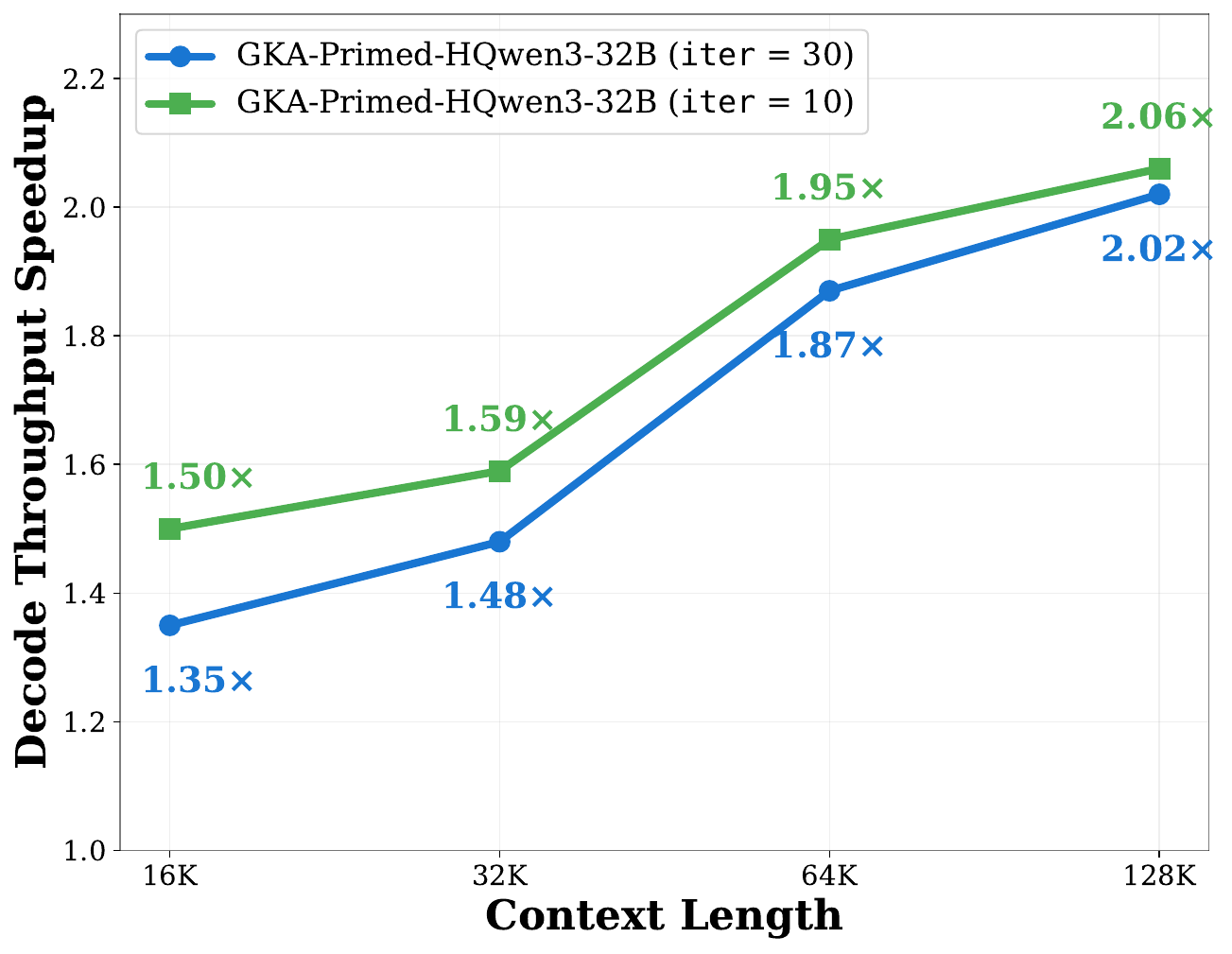}

    \vspace{0.5em}

    \includegraphics[width=\linewidth]{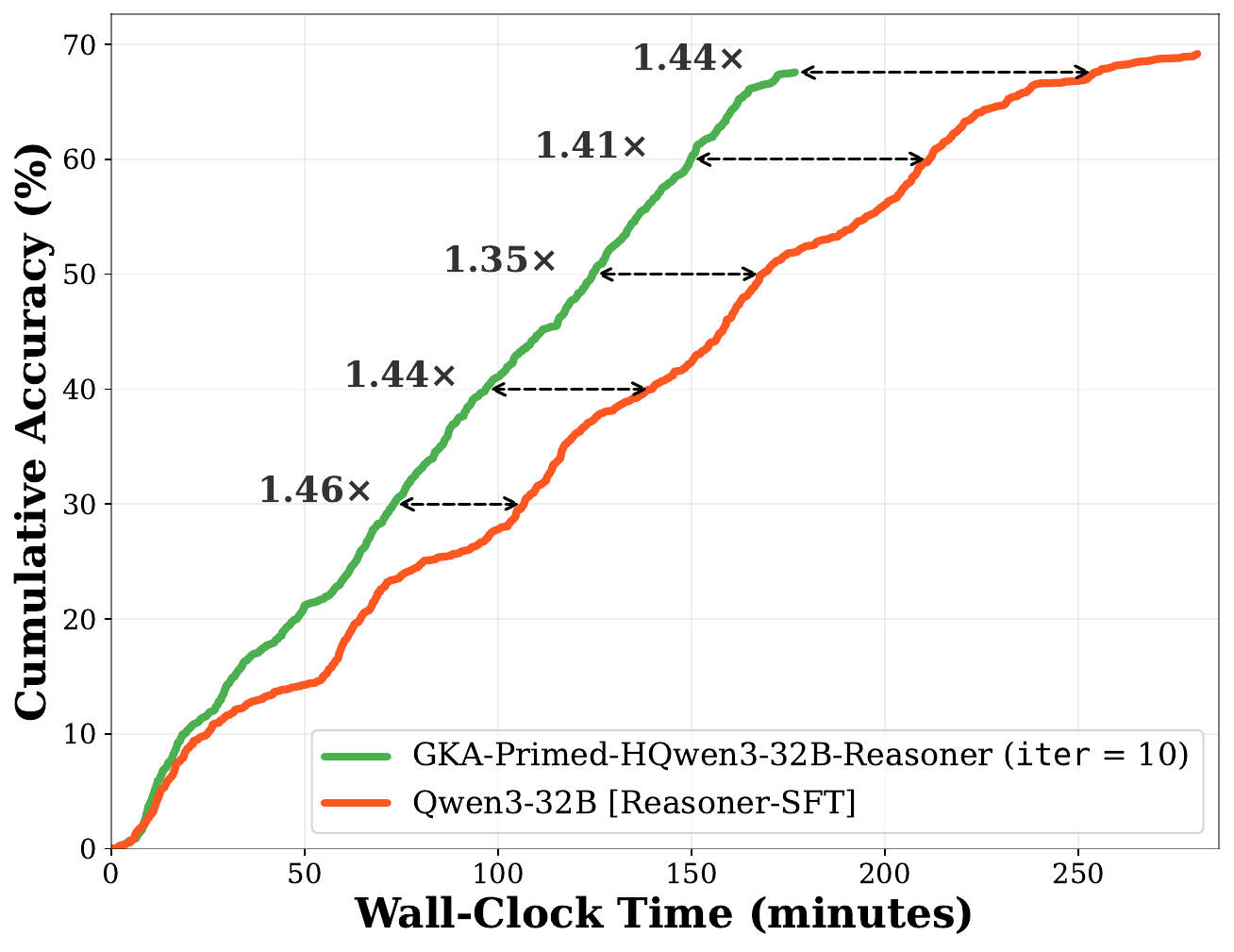}
  \end{minipage}

    \caption{\textbf{Primed Hybrid models match Transformer quality at half the memory and decode faster.}
      \textbf{(Left)}~Reasoning accuracy of Primed Hybrid reasoning models vs.~the source Qwen3 Transformer across five benchmarks at 8B and 32B scale. At 8B, the GKA Hybrid uniformly outperforms its GDN counterpart, consistent with the expressiveness hierarchy GKA~$>$~GDN (\cref{sec:ssm-comparison}).
      \textbf{(Top right)}~Decode throughput speedup of GKA-Primed-HQwen3-32B over the source Qwen3-32B Transformer as a function of context length. Replacing half the Attention layers with fixed-size SSMs roughly halves the KV cache, yielding up to $2.06{\times}$ throughput at 128K. Lowering the GKA solver iterations from 30 to 10 at serving time (without retraining) trades a small accuracy margin for additional throughput.
      \textbf{(Bottom right)}~Test-time scaling on the AIME~2025 hard subset: cumulative accuracy vs.\ wall-clock time. Cumulative accuracy at time $t$ is the fraction of rollouts for which a correct final answer has been produced by time $t$. 
      GKA-Primed-HQwen3-32B-Reasoner (10 iters) reaches each accuracy threshold faster than the source Transformer trained on the same Priming data (Qwen3-32B [Reasoner-SFT]) while converging to similar final accuracy. GKA-Primed-HQwen3-32B-Reasoner translates its per-token throughput advantage into faster time-to-answer and higher accuracy for given inference budget.
      All models are evaluated at 64K generation length with thinking mode enabled.}
  \label{fig:splash}
\end{figure*}
\FloatBarrier

\end{abstract}

\section{Introduction}
\label{sec:intro}

Hybrid State-Space models~\citep{bmojo, nemotronsuper, qwen3codernext2026} combine Attention layers with recurrent State-Space Model (SSM) layers. While Attention stores every past token verbatim in a Key-Value (KV) cache that grows linearly with context length, each SSM layer maintains a fixed-dimensional latent state that contains a compressed summary of the entire past. The resulting hybrid architecture balances two complementary forms of memory: \emph{eidetic} memory from Attention, which retains tokens exactly at quadratic cost, and \emph{fading} memory from SSMs, which compresses history into a fixed-size state at linear cost. This trade-off yields a smaller KV cache, faster decoding, and a richer design space than pure Transformers. 

Yet exploring this design space is prohibitively expensive. Pre-training a single large model consumes trillions of tokens, thousands of GPU-hours, and substantial energy costs. Because most large-scale checkpoints today are Transformers, the knowledge they encode is, in practice, locked to their architecture. If one wishes to study how the eidetic-to-fading memory ratio affects quality, or which SSM variant works best, the conventional path is to train each configuration from scratch, a barrier that has kept the Hybrid design space largely unexplored at scale.

We introduce \emph{Priming}, a method that circumvents this training cost: it initializes a Hybrid State-Space model from a pre-trained Transformer’s weights and adapts it through two short training phases using less than $0.5\%$ of the source model’s pre-training token budget. Priming is possible because a Transformer is itself a special case of a state-space realization~\citep{bmojo, ssd}: Attention and SSM layers share a common algebraic structure via the matrix mixer abstraction. This implies that a Transformer's weights already encode directions along which the model reads, writes, and routes information that are meaningful for SSM layers as well. Given a target Hybrid ratio (the ratio of SSM layers to Attention layers in the Hybrid model), Priming first performs \emph{layer selection} to identify the Attention layers in the source Transformer that are most amenable to SSM realization and marks the rest to be retained as Attention in the final Hybrid model. The source weights are then transferred into the corresponding layers. The structural correspondence between Attention and SSM projections seeds the SSM layers, while embeddings, feedforward networks, layer norms, lm-head, and the retained Attention layers are inherited directly. Priming then proceeds through two additional stages: \emph{Stage 1}, an alignment phase that trains the SSM layers to match the behavior of the Attention layers they replace, and \emph{Stage 2}, an end-to-end post-training phase with next-token prediction loss. Priming is agnostic to the Transformer family (Qwen, Llama, Mistral), model class (dense or Mixture-of-Experts), and model size; in this work we demonstrate it on Qwen3 models from 8B to 32B parameters.

Priming turns Hybrid architecture design from a pre-training problem into a
knowledge transfer problem, enabling cheap, systematic exploration of the design space: SSM layer type, Hybrid ratio, and layer assignment.\footnote{Given a target Hybrid ratio, `layer assignment' refers to the process of deciding which specific layers of the Hybrid model are realized as SSMs and which are kept as Attention.} We exploit this to conduct, to the best of our knowledge, the first controlled comparison at scale of SSM layer types under identical conditions: same source model, same data, same hyper-parameters, and same Hybrid ratio. We evaluate three SSM families: Gated KalmaNet (GKA)~\citep{gatedkalmanet}, Gated DeltaNet (GDN)~\citep{yang2024gated}, and Mamba-2~\citep{ssd}. We also consider the B'MOJO-F layer~\citep{bmojo}, which fuses eidetic and fading memory within a single sublayer. As can be demonstrated theoretically, GKA admits a more expressive state update than GDN, which in turn is more expressive than Mamba-2's scalar decay. We show that this hierarchy carries over to downstream accuracy: GKA-based Hybrid models are our strongest Primed models, outperforming their GDN-based counterparts on long-context reasoning tasks, with Mamba-2-based Hybrid models trailing behind. Furthermore, all SSM-based Hybrid models outperform a comparable Hybrid that uses Sliding Window Attention (SWA) in place of SSMs by at least 5.5\% (relative) on long-context benchmarks, even when the SWA window is sized to give a 2-4$\times$ larger KV cache footprint than the SSM state. Evidence that fading memory, which compresses the full history, is more useful than a sliding window of verbatim tokens~\citep{bmojo} when complementing Attention layers.

Among the SSM families we study, GKA occupies a unique position. Whereas Mamba-2 and GDN perform a fixed computation per layer, GKA solves an online ridge-regression objective via an iterative Chebyshev solver whose iteration count can be adjusted at serving time without retraining. This exposes a runtime quality-compute knob: reducing iterations lowers accuracy but improves latency, enabling practitioners to select an operating point along the accuracy-cost tradeoff curve.

We scale these findings to production-grade models. Through post-training and context extension, we build long-context reasoning Hybrid models at 8B and 32B scales. Our most performant Hybrid model supports 128K context tokens and achieves a $+3.8$ average percentage-point gain on reasoning benchmarks over its source Qwen3-32B, while remaining within ${\sim}1\%$ of the same source Qwen3-32B Transformer post-trained with our recipe. Because SSM layers require no KV cache, our model's memory footprint is roughly halved, enabling ${\sim}2{\times}$ the concurrent sequences on the same hardware and up to $2.3{\times}$ decode throughput at 128K context.
This throughput advantage is particularly consequential for inference and reinforcement learning post-training, where the training throughput is gated by how quickly the policy generates correct trajectories: on AIME~2025, GKA-Primed-HQwen3-32B-Reasoner produces correct rollouts up to $1.6{\times}$ faster than the equivalent Transformer across a range of concurrency settings.

Achieving these results at 128K context and 32B scale requires new infrastructure. In addition to optimized implementations and kernels for Priming, we develop a Point-to-Point (P2P) Sequence Parallelism algorithm that enables efficient long-context training of SSM layers. P2P shards the activations across GPUs with sequence-length-independent communication, making it up to $2.3\times$ faster for multi-node training than the AllToAll-based SP approach for SSMs used in large-scale training frameworks like Megatron-LM~\citep{shoeybi2019megatron}.

\subsection*{Contributions}
\begin{enumerate}[nosep]
  \item \textbf{Priming}: a general method for building Hybrid State-Space models by initializing from pre-trained Transformers at less than $0.5\%$ of the pre-training token budget. Priming is applicable across model families, scales, and SSM layer types (\cref{sec:why-priming}).

  \item \textbf{Controlled SSM comparison}: head-to-head evaluation of SSM variants under identical Priming conditions, establishing the empirical hierarchy GKA~$>$~GDN~$>$~Mamba-2 and demonstrating that fading-memory layers consistently outperform windowed eidetic memory (SWA layers) in Hybrid settings (\cref{sec:ssm-comparison}).

  \item \textbf{Hybrid reasoning models at scale}: GKA-primed models at 8B and 32B that match or exceed their source Transformers on reasoning benchmarks while offering up to $2.3{\times}$ higher decode throughput. We also introduce the first training-free context extension method for Hybrid architectures, extending the native context of our Primed models from 128K to 256K tokens (\cref{sec:experiments}, \cref{subsec:tts}, \cref{subsec:state-composition}).

  \item \textbf{Training infrastructure}: two Sequence Parallelism algorithms for SSM layers (Point-to-Point and Universal SP), a fused-architecture implementation of Priming that reduces training memory overhead (\cref{sec:scalable-impl}), and optimized GKA kernels (\cref{sec:gka_main_ideas}).

  \item \textbf{Open release}: We open-source the optimized Priming codebase, a vLLM serving plugin for Hybrid architectures, and a model zoo of Primed Hybrid models for long-context reasoning and instruction following, all under the Apache 2.0 License.
  
\end{enumerate}

\tableofcontents

\section{Hybrid Architectures}
\label{sec:hybrid-arch}

\subsection{Preliminaries: Attention and State Space Models}
\label{subsec:prelim}

An auto-regressive language model is a stack of \emph{decoder layers}, each composed of a \emph{sequence-mixing} sublayer followed by a feed-forward network (also referred to as a Multilayer Perceptron or MLP).
The sequence-mixing sublayer is the component that aggregates information across token positions, the two families of sequence-mixing layers relevant to this work, Attention and State Space Models, differ fundamentally in \emph{how} they store and access past context.

\paragraph{Attention.}
Given an input sequence of $T$ token representations $\mathbf{X} = \{\textbf{x}_1, \textbf{x}_2, \dots, \textbf{x}_T\}$, a single causal Attention head computes query, key, and value vectors $\bm{q}_t, \bm{k}_t \in \mathbb{R}^{d_k}$ and $\bm{v}_t \in \mathbb{R}^{d_v}$ at each position $t$ via learned non-linear functions of $\mathbf{x}_t$\footnote{For example, Qwen3 employs linear projection followed by unit normalization of the query and key vectors.}.  The output at position $t$ is then
\begin{equation}
  \bm{y}_t
  = \sum_{i=1}^{t}
    \frac{\exp(\bm{q}_t^\top \bm{k}_i)}
         {\sum_{j=1}^{t}\exp(\bm{q}_t^\top \bm{k}_j)}\;
    \bm{v}_i\,,
  \label{eq:attention}
\end{equation}
a softmax-weighted retrieval over all past key--value pairs.
At inference time, the pairs $(\bm{k}_i, \bm{v}_i)$ for $i = 1, \dots, t$ are stored verbatim in a \emph{KV cache} that grows
linearly with context length. This cache is an \emph{eidetic memory}: every token representation is retained exactly, providing precise access to the full context at a memory and compute cost that scales with the number of tokens stored.

\paragraph{State Space Models (SSMs).}
An SSM replaces the growing KV cache with a fixed-size \emph{recurrent state} $\mathbf{S}_t \in \mathbb{R}^{d_v \times d_k}$ that is updated at each time step and read out to produce the layer's output $\bm{y}_t$:
\begin{equation}
  \mathbf{S}_t
  = \mathbf{S}_{t-1}\,\mathbf{A}_t + \bm{v}_t \, \bm{B}_t ,
  \qquad
  \bm{y}_t = \mathbf{S}_t\,\bm{q}_t\ + \bm{v}_t\,D_t,
  \label{eq:ssm-generic}
\end{equation}
where $\mathbf{A}_t \in \mathbb{R}^{d_k \times d_k}$ is a \emph{transition operator} that controls how the previous state is transformed, and $\bm{B}_t \in \mathbb{R}^{1 \times d_k}$ is a \emph{gating operator} that controls how the incoming how the incoming key-value pair is written into the state. 
$D_t \in \mathbb{R}$ is a \emph{feedthrough} scalar that controls how much of the current input is passed directly to the output without being mediated by the state. The state $\mathbf{S}_t$ acts as a \emph{fading memory}: it compresses the entire history into a representation of bounded size, yielding $O(1)$ memory regardless of context length, but inherently losing information as the sequence grows.

Different SSM families are distinguished by their choice of $\mathbf{A}_t$ and $\bm{B}_t$; we describe the three families used in this work in \cref{subsec:ssm-zoo}. Note that when $\mathbf{A}_t = \mathbf{I}$ (the identity) and $ \bm{B}_t = \bm{k}_t^T$, the recurrence reduces to vanilla Linear Attention~\cite{linearattn}, making the SSM framework a strict generalisation of Linear Attention.

\subsection{Hybrid Models and Priming}
\label{subsec:hybrid-def}

A \emph{Hybrid model}, in our usage, is an auto-regressive language model that employs \emph{both} Attention layers and SSM layers as sequence-mixing sublayers within the same network \cite{bmojo}.
Standard Transformers store all tokens verbatim in a KV cache that grows linearly with context, making long-context training expensive and inference demanding in both memory and latency. Hybrid models address this by replacing a subset of Attention layers with recurrent SSM layers, each of which maintains a fixed-dimensional latent state that acts as a compressed summary of the remote past. Because only the remaining Attention layers require a KV cache, the overall cache footprint shrinks and decoding becomes faster. The resulting architecture balances two complementary memory regimes: eidetic memory from Attention (quadratic cost) and fading memory from SSMs (linear cost) \cite{bmojo}.

Several recent efforts have developed Hybrid models trained from scratch. Jamba~\cite{jamba}, Griffin~\cite{griffin}, Samba~\cite{samba}, the Nemotron family~\cite{nemotron3, nemotron_nano2, nemotronsuper} and B'MOJO \cite{bmojo} each propose architectures that interleave Attention and SSM layers in a regular pattern or that combine Attention and SSMs within the same layer like in B'MOJO \cite{bmojo}. 
These approaches demonstrate strong results but require the full cost of pre-training from a random initialization.

\paragraph{Priming.}
Our work takes a different approach.
\emph{Priming} instantiates a Hybrid model from the weights of an existing pre-trained Transformer, using both its architecture and its learned parameters as a starting point.
This avoids the cost of training from scratch and enables any Transformer with public weights to serve as a \textit{source model}.
For context, the pre-training of Qwen3-8B consumed 36 trillion tokens; our Primed Hybrid 8B models require fewer than 150 billion training tokens, less than $0.5\%$ of the pre-training budget.

\subsection{Priming Design Space}
\label{subsec:design-space}

Three axes parameterize the space of Primed Hybrid architectures:
\begin{enumerate}[label=(\arabic*)]
  \item The \textbf{Hybrid ratio}: the fraction of sequence-mixing
        layers that are SSM layers. A 75\% ratio means three out of every four sequence-mixing layers are SSMs; higher ratios yield greater inference efficiency at the cost of reduced eidetic capacity.
  \item The \textbf{layer pattern}: the specific assignment of each position in the layer stack to either Attention or SSM. As Priming instantiates a Hybrid model from the weights of a pre-trained Transformer, the choice of \emph{which} layers are SSM layers matters: not all layers in the source Transformer are equally amenable to SSM instantiation. (\cref{subsec:from_theory_to_practice}). We address this via importance-based layer selection in~\cref{subsec:layer_selection}.
  \item The \textbf{SSM layer type}: which SSM family is used for the
        fading-memory layers. We evaluate three representatives: Mamba-2, Gated DeltaNet (GDN), and Gated KalmaNet (GKA). Each can optionally be paired with Sliding Window Attention within a single Hybrid sublayer (the B'MOJO-F configuration described in~\cref{subsec:bmojof}).
\end{enumerate}

\subsection{The Fading--Eidetic Tradeoff}
\label{subsec:tradeoff}

The design of a Hybrid model is fundamentally a question of how to balance two kinds of memory.
An Attention layer stores every key--value pair $(\bm{k}_i, \bm{v}_i)$ exactly in its KV cache, providing precise retrieval at a cost that grows with context length. An SSM layer compresses the entire history into a single state matrix $\mathbf{S}_t \in \mathbb{R}^{d_v \times d_k}$, bounding memory regardless of context length but inevitably losing information.

Given a target Hybrid architecture with a desired ratio of eidetic to fading memory layers (e.g.\ 50\%), the key design question is which layers should have eidetic and which should have fading memory. We use the source Transformer to answer this: its layer-wise behavior reveals where eidetic memory is structurally necessary and where fading memory suffices, guiding the placement of each layer type in the Hybrid. Identifying this assignment is the subject of our layer selection procedure (\cref{subsec:layer_selection}), and the formal conditions under which a fading-memory SSM can approximate an Attention layer's function are developed next in~\cref{sec:why-priming}.

\section{What Is Priming and Why Does It Work?}
\label{sec:why-priming}

Training a Hybrid LLM from scratch demands the full cost of pre-training. \emph{Priming} sidesteps this by initializing a Hybrid model directly from a pre-trained Transformer's weights, a cross-architecture knowledge transfer that consumes fewer than $0.5\%$ of the source Transformer's pre-training token budget. \emph{Why should such a transfer work at all?}
Attention and SSM layers compute different input-output mappings: one produces a softmax-weighted retrieval over an eidetic cache, the other evolves a compressed recurrent state through learned transitions.
On what grounds can an SSM inherit the representations induced by Attention weights? In this section we develop a principled answer, grounded in the key ideas of classical realization theory of dynamical systems.%

\subsection{Realization-theoretic Foundations for Attention-to-SSM Priming}

\emph{When and why can an SSM layer recover the representations of an Attention layer in a pre-trained Transformer?}
Classical realization theory from system identification~\citep{lindquist1979stochastic,DewildeVanderVeen1998,ljung1995system} gives us a useful language to design and understand our Priming pipeline. 

\subsubsection{Attention as an Input-Output Map}

Consider a single Attention head in a layer of the source Transformer. Given an input sequence $\mathbf{X} \in \mathbb{R}^{T \times d}$, causal (auto-regressive) self-attention computes 
\begin{equation} 
\mathbf{Y} = \mathbf{M}(\mathbf{X}) \mathbf{V}(\mathbf{X}), 
\end{equation} 
where $\mathbf{V}(\mathbf{X}) \in \mathbb{R}^{T \times d}$ is the matrix of value projections and $\mathbf{M}(\mathbf{X}) \in \mathbb{R}^{T \times T}$ is the \emph{mixing matrix}, lower-triangular (causal), row-stochastic (due to softmax), and input-dependent (function of keys and queries). The entry $M_{ij}$ encodes how much token $j$ contributes to the representation of token~$i$.

Following the \emph{matrix mixer} abstraction introduced by~\citet{ssd} and operationalized for distillation by~\citet{mohawk}, both Attention and SSMs can be viewed as applying a $T \times T$ mixing matrix to a token sequence. Attention produces $\mathbf{M}$ via softmax over query-key inner products while an SSM with state dimension $n$ produces a lower-triangular semi-separable matrix $\mathbf{T}$ of order $n$ \cite{lindquist1979stochastic}, determined by the inputs and the recurrent dynamics. This shared abstraction is the foundation for Priming: if the SSM's mixing matrix can represent the Transformer's, that is, if $\mathbf{T}(\mathbf{X}) \mathbf{V}(\mathbf{X}) \approx \mathbf{M}(\mathbf{X}) \mathbf{V}(\mathbf{X})$, then the SSM inherits the Attention layer's input-output behavior and the value projections and downstream MLPs can be transferred directly. This mixing matrix $\mathbf{M}(\mathbf{X})$ is therefore the central object of interest for Priming Hybrid architectures. In the next section we develop theoretical connections that characterize when $\mathbf{M}(\mathbf{X})$ can be realized by an SSM.

\subsubsection{The Realization Theory Problem}
\label{subsec. realization theory}
An SSM with state dimension $n$ (e.g., Mamba-2, GDN, GKA) yields a \emph{finite-horizon input-output matrix}\footnote{We refer to the SSM's mixing matrix as its \emph{finite-horizon input-output matrix} to distinguish it from Attention's mixing matrix $\mathbf{M}(\mathbf{X})$.} $\mathbf{T}(\mathbf{X}) \in \mathbb{R}^{T \times T}$ that relates inputs to outputs over a length-$T$ sequence (see \cref{sec: mixing matrix and finite horizon input-output matrix} for the explicit form for a general linear time-varying SSM). This matrix is lower-triangular and \emph{semi-separable of order $n$}: every submatrix lying strictly below the main diagonal has rank at most $n$, because all information from the past must pass through the $n$-dimensional state bottleneck.

Classical realization theory~\citep{DewildeVanderVeen1998, lindquist1979stochastic} studies the following question: given a causal input-output map, here represented by the lower-triangular Attention mixing matrix $\mathbf{M}(\mathbf{X})$, what is the smallest state dimension $n$ of a state-space model that can realize it exactly?

The key tool used in the literature of linear input-varying SSMs is the rank of the \emph{Hankel submatrices}, which measure how much information must be carried from the past into the future. A Hankel submatrix is obtained by cutting the mixing matrix at time $k$ and retaining only the block that maps past inputs to future outputs. Its rank tells us how many degrees of freedom must pass through the latent state. 
For each split point $k$, define $\mathbf{H}(\mathbf{X})_k = \mathbf{M}(\mathbf{X})_{k+1:T,1:k},$ the block that maps inputs from the first $k$ timesteps (the ``past'') to outputs from timesteps $k+1,\dots,T$ (the ``future''). Intuitively, $\mathbf{H}(\mathbf{X})_k$ captures the past-to-future influence across the time cut at instant $k$.
The minimal state dimension to realize the input-output causal map (given by $\mathbf{M}(\mathbf{X}$)) is then
\begin{equation}\label{eq:max_hankel_rank}\tag{Hankel rank}
n_{\min}(\mathbf{X}) = \max_{1\leq k < T} \operatorname{rank}(\mathbf{H}(\mathbf{X})_k).
\end{equation}

At one extreme, any $T \times T$ lower-triangular matrix can always be realized by an SSM with state dimension $T$, by storing the entire history in the state. This is a trivial bound that defeats the purpose of using a compact recurrent state. The key result from realization theory is that there exists an SSM with state $n \ll T$ that can realize $\mathbf{M}(\mathbf{X})$ exactly whenever the Hankel rank is bounded by $n$. We make this notion precise in the following theorem that adapts the classical time-varying realization theorem \citep[Theorem~3.7]{DewildeVanderVeen1998} to the causal Attention setting of our work, we refer the reader to our proof in \cref{app:realization-theory}. \footnote{The actual SSM layers in this work operate on matrix-valued states 
$\mathbf{S}_t \in \mathbb{R}^{d_v \times d_k}$ with vector-valued inputs and 
outputs. We show how to adapt the classical proof for the scalar input-output case in \cref{app:realization-theory}. The key insight is that the \emph{finite-horizon input-output matrix} acts identically and independently on each of the $d_v$ columns, so the analysis applies element-wise.}

\begin{theorem}[Finite-Horizon Per-Instance Realization of the Attention Map]
\label{thm:realization_of_mixing_matrices}
Fix a sequence $\mathbf X\in\mathbb R^{T\times d}$, let $\mathbf V(\mathbf X)\in\mathbb R^{T\times d_v}$ be the sequence of values, and let $\mathbf M(\mathbf X)\in\mathbb R^{T\times T}$ be the causal Attention mixing matrix of a single head. 
Let $\mathbf X_{\le t}$ denote the input sequence up to time~$t$. Then there exists a linear time-varying state-space realization
\[
\mathbf S_{t}
  = \mathbf S_{t-1}\,\mathbf A_t(\mathbf X_{\le t})
    + \bm{v}_t\,\mathbf B_t(\mathbf X_{\le t}),
\qquad
\bm{y}_t = \mathbf S_t\,\mathbf C_t(\mathbf X_{\le t})
           + \bm{v}_t\,D_t(\mathbf X_{\le t}),
\]
with $\bm{v}_t\in\mathbb R^{d_v}$, $\bm{y}_t\in\mathbb R^{d_v}$, zero initial state, and state dimension $n_{\min}(\mathbf X)$, whose \textit{finite-horizon input-output matrix} $\mathbf T(\mathbf X)$ is exactly the \textit{mixing matrix} $\mathbf M(\mathbf X)$. Moreover, no realization of this finite-horizon input-output map with smaller state dimension exists.
\end{theorem}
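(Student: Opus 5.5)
Since $\mathbf X$ is fixed, $\mathbf M:=\mathbf M(\mathbf X)$ is just a fixed lower-triangular $T\times T$ real matrix. I would therefore prove the classical finite-horizon time-varying realization theorem for a scalar input--output map and lift it to vector values at the end. The dependence of $\mathbf A_t,\mathbf B_t,\mathbf C_t,D_t$ on $\mathbf X_{\le t}$ is then harmless: we construct numbers determined by $\mathbf M$, and row $t$ of $\mathbf M$ depends only on $\mathbf X_{\le t}$ (causal softmax). For the lift, the recurrence $\mathbf S_t=\mathbf S_{t-1}\mathbf A_t+\bm v_t\mathbf B_t$ with $\mathbf S_t\in\mathbb R^{d_v\times n}$ acts row-wise. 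Each row of $\mathbf S_t$ is a scalar-input SSM driven by one coordinate of $\bm v_t$, all sharing $(\mathbf A_t,\mathbf B_t,\mathbf C_t,D_t)$. Consequently $\mathbf Y=\mathbf T\mathbf V$ with a single scalar finite-horizon matrix
\[
T_{ij}=\mathbf B_j\mathbf A_{j+1}\cdots\mathbf A_i\mathbf C_i \quad (j<i),\qquad T_{ii}=\mathbf B_i\mathbf C_i+D_i .
\]
It therefore suffices to find factors with $T_{ij}=M_{ij}$.

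\textbf{Lower bound (minimality).} Take any realization with state dimension $n$. Its Hankel block at cut $k$ factors as $\mathbf H_k=\mathcal R_k\,\mathcal O_k$. Here $\mathcal R_k$ is the $k\times n$ reachability matrix with rows $\mathbf B_j\mathbf A_{j+1}\cdots\mathbf A_k$ for $j\le k$. The matrix $\mathcal O_k$ is the $n\times(T-k)$ observability matrix with columns $\mathbf A_{k+1}\cdots\mathbf A_i\mathbf C_i$ for $i>k$, transposed appropriately to match the indexing $\mathbf M_{k+1:T,1:k}$. Hence $\operatorname{rank}\mathbf H_k\le n$ for every $k$, so $n\ge n_{\min}$. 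This part is routine.

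\textbf{Upper bound (construction).} I would follow the Dewilde--van der Veen approach, using a varying state dimension $n_k=\operatorname{rank}\mathbf H_k$ and padding with zeros to the uniform dimension $n_{\min}=\max_k n_k$. For each $k$, I would pick a rank factorization $\mathbf H_k=\mathcal R_k\mathcal O_k$ with $\mathcal R_k\in\mathbb R^{k\times n_k}$ and full column rank; the natural choice is for $\mathcal R_k$'s columns to span the row space of the past inputs' influence. The key structural fact is the shift relation between consecutive Hankel blocks. Removing the first future row of $\mathbf H_k$ gives the first $k$ columns of $\mathbf H_{k+1}$, and appending the new column $k+1$ yields $\mathbf H_{k+1}$. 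This lets me define the maps as follows:
\begin{itemize}
\item $\mathbf A_{k+1}$ (size $n_k\times n_{k+1}$) solves $\mathcal R_k\mathbf A_{k+1}=$ (first $k$ rows of $\mathcal R_{k+1}$);
\item $\mathbf B_{k+1}$ is the last row of $\mathcal R_{k+1}$;
\item $\mathbf C_{k+1}$ is read off from the first future column of $\mathcal O_{k}$ and $\mathcal O_{k+1}$ consistently;
\item $D_k=M_{kk}-\mathbf B_k\mathbf C_k$.
\end{itemize}
Solvability of the equation for $\mathbf A_{k+1}$ needs the column space of the restricted $\mathcal R_{k+1}$ to lie in that of $\mathcal R_k$. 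This follows by choosing the factorizations compatibly, for instance by fixing $\mathcal O_k$ first and setting $\mathcal R_k=\mathbf H_k\mathcal O_k^{+}$, so that each $\mathcal R$ is a restriction of Hankel data. Induction on $i-j$ then verifies $T_{ij}=M_{ij}$.

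I expect the main obstacle to be this consistency step: choosing the factorizations so that the $\mathbf A$, $\mathbf B$, $\mathbf C$ obtained at successive cuts telescope to reproduce every entry $M_{ij}$, and not merely the Hankel blocks one cut at a time. I would handle it by using the canonical choice in which the state is the projection of the past inputs onto the row space of $\mathbf H_k$. In that choice the state $\mathbf S_k$ is a fixed linear function of $\bm v_{1:k}$, and the update equation becomes a tautology of nested subspaces. Zero-padding to dimension $n_{\min}$ (extra zero rows and columns in $\mathbf A_t$, zeros in $\mathbf B_t,\mathbf C_t$) preserves $\mathbf T$ and completes the proof.
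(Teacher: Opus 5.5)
Your SISO lift and your lower bound are fine. The genuine gap is the line where $\mathbf C_{k+1}$ is ``read off from the first future column of $\mathcal O_k$ and $\mathcal O_{k+1}$ consistently.'' In the convention you adopt, which is the one in the statement, $\bm y_t=\mathbf S_t\mathbf C_t+\bm v_tD_t$ reads the state \emph{after} it has been multiplied by $\mathbf A_t$ and has absorbed $\bm v_t$. So $\mathbf C_{k+1}$ never appears alone in an observability column: the first column of $\mathcal O_k$ is $\mathbf A_{k+1}\mathbf C_{k+1}$. What you actually need is $(\mathcal R_{k+1})_{1:k,:}\,\mathbf C_{k+1}=\mathbf M_{k+1,1:k}^\top$. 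In your orientation $\mathbf H_{k+1}^\top=\mathcal R_{k+1}\mathcal O_{k+1}$, so for any rank factorization $(\mathcal R_{k+1})_{1:k,:}=(\mathbf M_{k+2:T,1:k})^\top\mathcal O_{k+1}^{+}$, with $\mathcal O_{k+1}^{+}$ a right inverse. Its columns lie in the span of the rows of $\mathbf M_{k+2:T,1:k}$, and that span need not contain $\mathbf M_{k+1,1:k}$.

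A cleverer factorization cannot fix this. In this convention $\mathbf S_{k+1}$ must carry both the past's contribution to $y_{k+1}$ and the entire future tail. Concretely, the block $\mathbf M_{k+1:T,1:k+1}$ with its $(1,k+1)$ entry replaced by $\mathbf B_{k+1}\mathbf C_{k+1}$ equals $\bigl(\mathcal R_{k+1}[\,\mathbf C_{k+1},\ \mathbf A_{k+2}\mathbf C_{k+2},\ \dots\,]\bigr)^\top$, so it has rank at most $n$.

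Here is a counterexample. Take $T=5$ and let rows $3,4,5$ of $\mathbf M$ begin with $(1,2,\ast)$, $(1,1,1)$, $(1,1,2)$. Make all other causal entries positive and rescale each row to sum to one. Such a matrix is realizable by causal softmax with $d_k\ge T$, and the row scaling affects neither rank computation below. Every Hankel block then has rank at most $2$, so $n_{\min}=2$, whereas
\[
\det\begin{bmatrix}1&2&c\\1&1&1\\1&1&2\end{bmatrix}=-1\qquad\text{for every } c\in\mathbb R .
\]
Hence every realization of the stated form needs $n\ge 3$. The existence claim, as literally written, is false, and no argument can close this step. With this readout the sharp general bound is $n_{\min}+1$: append the column $\mathbf S_{t-1}\mathbf C_t$ of a pre-update realization to its state.

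What is true is the Dewilde--van der Veen form with readout from the pre-update state, $\bm y_t=\mathbf S_{t-1}\mathbf C_t+\bm v_tD_t$. That is, $T_{ij}=\mathbf B_j\mathbf A_{j+1}\cdots\mathbf A_{i-1}\mathbf C_i$ for $i>j$ and $T_{ii}=D_i$. This is what your $\mathbf A,\mathbf B$ recursion and the paper's Lemma actually build. There your plan closes with $\mathbf C_{k+1}:=\mathcal O_k e_1$ and $D_k:=M_{kk}$. Indeed, $\mathcal R_k\mathbf A_{k+1}\mathcal O_{k+1}=(\mathcal R_{k+1}\mathcal O_{k+1})_{1:k,:}=(\mathbf M_{k+2:T,1:k})^\top=\mathcal R_k(\mathcal O_k)_{:,2:}$, and $\mathcal R_k$ has full column rank. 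Therefore $\mathcal O_k=[\,\mathbf C_{k+1},\ \mathbf A_{k+1}\mathcal O_{k+1}\,]$, and $M_{ij}=(\mathcal R_j)_{j,:}(\mathcal O_j)_{:,i-j}=\mathbf B_j\mathbf A_{j+1}\cdots\mathbf A_{i-1}\mathbf C_i$ telescopes.

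The compatibility you flag as the main obstacle is automatic for arbitrary rank factorizations. The reason is that $\operatorname{col}(\mathcal R_k)=\operatorname{col}(\mathbf H_k^\top)$ already contains the rows of $\mathbf M_{k+2:T,1:k}$.

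The paper's proof is this same construction in future-tail coordinates; its basis $Q_k$ of $\operatorname{im}\mathbf H_k$ plays the role of $\mathcal O_k^\top$. It hides the same problem. Its state $\sigma_{k+1}$ depends only on $v_{1:k}$, and $y^{\mathrm{past}}_{k+1}=\sigma_{k+1}\mathbf C_{k+1}$. So its factors satisfy $M_{ij}=\mathbf B_j\mathbf A_{j+1}\cdots\mathbf A_{i-1}\mathbf C_i$, and its verification line claiming $\mathbf B_j\mathbf A_{j+1}\cdots\mathbf A_i\mathbf C_i$ is off by one.

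Separately, causality of $\mathbf M$ does not make your $\mathbf A_t,\mathbf B_t,\mathbf C_t$ functions of $\mathbf X_{\le t}$. They are built from Hankel blocks containing rows after $t$. With $\mathbf X$ fixed, that requirement is simply vacuous.
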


\Cref{thm:realization_of_mixing_matrices} is encouraging but presents two practical challenges for initializing Hybrid models from a pre-trained
Transformer via Priming. \textbf{(i) Per-instance existence.} The guarantee
holds for a fixed $\mathbf{X}$: different input sequences generally induce
different minimal realizations, with different state dimensions and different
parameter maps $(\mathbf{A}_t, \mathbf{B}_t, \mathbf{C}_t, D_t)$. An SSM layer in a Primed Hybrid model, however,  is expected to operate across multiple input sequences with a single shared parametrization. \textbf{(ii) Arbitrarily large Hankel rank.} Since Attention is eidetic,
nothing constrains its mixing matrix to be low-rank across time cuts, that is,
$n_{\min}(\mathbf{X})$ can be as large as $T$. Approximating $\mathbf{M}(\mathbf{X})$ with an SSM of state size $n \ll T$ is therefore prone to approximation errors.

Priming addresses these challenges on two fronts. For (i), rather than realizing a per-instance SSM, we optimize a \emph{single} SSM parametrization
over the training distribution, so the realization is amortized across inputs rather than constructed per-sequence. For (ii), we exploit the empirical
observation that \ref{eq:max_hankel_rank} varies substantially across layers: some layers perform local computations that an SSM can
capture with modest state size, while others perform complex, long-range,
input-dependent retrieval that requires a large state dimension to realize with an SSM. Our Hybrid architecture is therefore designed around this heterogeneity: high-Hankel-rank
layers are kept as Attention, and only layers amenable to low-rank realization are replaced with SSMs. \Cref{subsec:layer_selection} describes our selection
procedure.

\subsubsection{From Theory to Practice} \label{subsec:from_theory_to_practice}

The difficulty of transferring Attention layers into SSM layers is that, \emph{a priori}, we do not know how the \ref{eq:max_hankel_rank} of $\mathbf{M}(\mathbf{X})$ varies across the input distribution. The learned SSM parameterization must therefore handle the full distribution, including the high-rank Attention patterns that strain the SSM's capacity. This is what makes naive one-shot distillation insufficient and motivates the multi-stage design of Priming:
\begin{enumerate}[nosep] 
\item \emph{Avoid the hardest cases.} Layer selection (\cref{subsec:layer_selection}) identifies Attention layers in the source Transformer whose mixing matrices are empirically most complex (high \ref{eq:max_hankel_rank}) and keeps them as Attention in our Hybrid model. Thus, rather than forcing an approximation that the theory predicts will be poor, or that requires a prohibitively large state size $n$, we keep those layers as Attention. 
\item \emph{Start from a good initialization.} Following layer selection, Stage 0 (\cref{subsec:stage0-theory}) uses the Attention-SSM correspondence to initialize the SSM layers in our Hybrid model near a known (yet imperfect) realization, rather than from random weights. 
\item \emph{Provide enough capacity to cover the input distribution.} State expansion (\cref{subsec:stateexp-theory}) increases the SSM's degrees of freedom, through learned projections, so that the parameterization has more room to represent complex temporal patterns.
\item \emph{Align per-layer structure.} Stage 1 (\cref{subsec:stage1-theory}) trains the SSM layers in the Hybrid model to match the Attention layer's actual output (in the source Transformer) across a distribution of inputs, this is the practical \textit{amortized} version of the per-instance realization algorithm. 
\item \emph{Align to post-training data distribution.} Stage~2 (\cref{subsec:stage2-theory}) optimizes the full Hybrid model with cross-entropy loss end-to-end to obtain strong models on downstream domains of interest for the users (e.g. long-context reasoning and instruction following).
\end{enumerate}
The next sections describe the key ideas in each stage of our Priming pipeline in detail.

\subsection{The Priming Pipeline as Approximation of the Realization-Theoretic Ideal}
\label{subsec: priming pipeline}
In principle, one could compute the \ref{eq:max_hankel_rank} of each Attention layer's mixing matrix per input and amortize these realizations into a single SSM parameterization. However, this is not practical since, the entire input distribution cannot be enumerated at training time, $\mathbf{M}(\mathbf{X})$ is never materialized on the GPU \cite{dao2022flashattention}, and no closed-form amortization of the per-input realizations into a single SSM parameterization exists. Priming addresses each of these obstacles through a sequence of pragmatic approximations.

\subsubsection{Layer Selection: Avoiding the Hardest Cases}
\label{subsec. layer select}
Realization theory predicts that Attention layers with high \ref{eq:max_hankel_rank} are the hardest to approximate with a fixed-size SSM, so we keep those layers as Attention. Computing the \ref{eq:max_hankel_rank} directly is impractical, so we use a cheap empirical proxy: replace each Attention layer individually with Sliding Window Attention (SWA) of window $w$ and measure the resulting degradation on a set of target benchmarks. A SWA mixer has banded structure, which caps its Hankel rank at $w$ uniformly across inputs; a layer that tolerates this substitution therefore has low effective Hankel rank and is a natural candidate for SSM instantiation. Conversely, layers that degrade sharply under SWA rely on high-rank structure (long-range retrieval, global aggregation, sharp input-dependent routing) that a fixed-size SSM would struggle to realize, and we retain these as Attention in the Primed Hybrid. We refer to the per-layer SWA-induced degradation as the layer's \emph{importance}; layers with high importance are those most sensitive to the SWA substitution and hence predicted by the theory to have high Hankel rank.

Empirically, this proxy works well. Importance-based layer selection consistently outperforms uniform (evenly spaced) placement across all model scales we tested (\cref{subsec:pattern_ablations}). Alternative heuristics, such as concentrating Attention layers in the early or late portions of the network, degrade quality; in our experiments the high-importance layers do not follow a regular or predictable pattern across the source Transformer (see \cref{fig:importance_scores}).

\paragraph{Importance-based layer selection algorithm.}
\label{subsec:layer_selection}

For each Attention layer $i$ in the source Transformer, we create a variant in which only layer $i$ is replaced with SWA of window size $w$\footnote{We use $w = 2048$ in all our experiments.} and evaluate this variant on a set of tasks representative of long-context abilities, such as many-shot in-context learning, retrieval-augmented generation, and passage re-ranking. The \emph{importance} of layer $i$ is the performance drop relative to the unmodified baseline. Given a target of $M$ SSM layers (determined by the desired Hybrid ratio), we rank all layers by their importance scores and instantiate the $M$ least important layers as SSM layers. The full procedure is summarized in Algorithm~\ref{alg:layer_selection}. The resulting selection pattern defines the Hybrid architecture produced by Stage~0 of Priming (\cref{subsec:stage0-theory}).

In \cref{subsec:pattern_ablations}, we apply this procedure to Qwen3-8B and Qwen3-32B and find that importance is concentrated in a small set of middle-to-late layers (Figure~\ref{fig:importance_scores}). Using the resulting selection pattern during Priming outperforms a uniform (evenly spaced) pattern by 30\% (relative) on long-context aggregate performance.

\subsubsection{Stage~0: Transferring Knowledge from Source Transformer}\label{subsec:stage0-theory}

Stage~0 provides a good initialization for the SSM layers by exploiting a structural correspondence between Attention and the SSM recurrence. The correspondence motivates a direct transfer of the source Transformer's projection matrices into the Hybrid's SSM layers, without training. This lets Priming bootstrap the alignment between Hybrid SSM layers and their source Attention counterparts by re-using the Transformer's pre-trained weights rather than starting from random initialization.

\paragraph{Query, key, and value projections.} As described in \cref{sec:hybrid-arch}, both Attention and SSM layers perform their computations on a common set of query $\mathbf{q}_t$, key $\mathbf{k}_t$, and value $\mathbf{v}_t$ vectors (\cref{eq:attention} and \cref{eq:ssm-generic}), obtained by linear projections $\mathbf{W}_Q, \mathbf{W}_K, \mathbf{W}_V$ applied to the input $\mathbf{x}_t$. Priming exploits this correspondence by initializing the SSM's projection matrices directly from the corresponding Attention layer in the source Transformer.

\paragraph{Output and gating projections.} Beyond these parameters, both Attention and SSM layers employ an output projection $\mathbf{W}_O$ that maps per-head outputs back to the model dimension, which we transfer directly from the source Attention layer. The SSM additionally has a gating projection $\mathbf{W}_G$ that modulates the recurrent readout $\mathbf{y}_t$ before it is mapped by $\mathbf{W}_O$. Although $\mathbf{W}_G$ has no Attention counterpart, it operates in the same representational space as $\mathbf{W}_O$ and $\mathbf{W}_V$, so we initialize it from these two matrices (details and ablation in \cref{sec:priming_gate_init}) to reuse the knowledge stored in the source Transformer's weights rather than starting from random initialization as in prior work~\citep{mambainllama}. The remaining SSM-specific parameters (transition dynamics, convolutions, and other layer-type-specific components) have no Attention counterpart and are initialized randomly.

\paragraph{Other Hybrid components.} All remaining components of the Hybrid model, the MLPs, layer norms, embeddings, LM head, and the Attention layers retained by layer selection (\cref{subsec:layer_selection}), are copied directly from the source Transformer. These components are unchanged by the Hybrid architecture, so their pre-trained weights transfer without any modification.\\

This initialization is not the exact parameter setting for the SSM that realizes the Attention layer's mixing matrix: while realization theory guarantees that such a setting exists, copying the Attention projections gives us an approximate surrogate rather than the true solution. The transferred projections already encode the subspaces in which the Attention layer operates, giving the SSM a well-aligned starting point. Stage~1 (\cref{subsec:stage1-theory}) closes the remaining gap by learning parameters that amortize the realization across the input data distribution.

\subsubsection{State Expansion via Adaptive GQA}\label{subsec:stateexp-theory}

Realization theory tells us that the SSM state must carry enough degrees of freedom to approximate $\mathbf{M}(\mathbf{X})$ across the input distribution (see \cref{subsec. realization theory}). A grouped KV structure (inherited from the source Transformer during Stage~0) reduces the SSM's effective state capacity and thus its ability to meet this requirement. We unpack this mismatch below, then describe our State Expansion approach as a remedy to restore the SSM's capacity.

Modern Transformers use Grouped Query Attention (GQA)~\citep{gqa}, where a small number of key-value heads $H_{\mathrm{KV}}$ are shared across a larger number of query heads $H_Q$. Since Stage~0 reuses the Transformer's projection weights to initialize the SSM layers, this grouped structure carries over to the Hybrid model. In Attention, sharing KV heads does not substantially degrade performance, since the full token history is retained in the reduced KV cache.

The SSM state, by contrast, is already a compressed summary of the entire history. As defined in \cref{eq:ssm-generic}, the transition operator $\mathbf{A}_t$ and the gating operator $\bm{B}_t$ are functions of the current key $\bm{k}_t$, with value $\bm{v}_t$ being written to the state through $\bm{v}_t\,\bm{B}_t$ (see \cref{tab:ssm-recurrences-main} for the specific forms in each SSM we consider). When $\bm{k}_t$ and $\bm{v}_t$ are shared across the $m = H_Q / H_{\mathrm{KV}}$ query heads in a group, the $\bm{k}_t$-dependent components of $\mathbf{A}_t$ and $\bm{B}_t$ are shared too, and the state updates for heads in the same group can differ only through per-head gating/decay parameters ($\gamma_t, \beta_t$, etc.). This reduces the SSM's effective state capacity relative to having $m$ independent key-value heads.  The per-head query $\bm{q}_t$ controls only what is read from the resulting state, and cannot recover information that was never written or was already forgotten. Attention does not suffer from this restriction: the full history is retained verbatim in the KV cache (\cref{eq:attention}), and each query head can retrospectively select different information via softmax-weighted routing, even when keys and values are shared within a group.

\paragraph{Adaptive GQA (AGQA).}
To recover per-head diversity without abandoning the GQA initialization, we introduce Adaptive GQA (AGQA). Instead of replicating each KV head $m$ times identically, we learn a lightweight, low-rank projection that maps the $H_{\mathrm{KV}}$ shared heads to $H_Q$ potentially distinct ones. A residual connection ensures the projection starts as standard GQA replication (by initializing the learned branch to zero) and gradually deviates during training, introducing per-head variation only as needed (see \cref{sec:priming_agqa} for details).

AGQA adds less than $1\%$ of the total model parameters for Primed Hybrid 32B models. A related method, M1~\citep{m1}, uses a full linear projection requiring ${\sim}1.8\times$ more parameters than AGQA and no residual connection, so its projection must learn the GQA mapping from scratch. By recovering per-head diversity with minimal overhead, AGQA allows the SSM layers of our Primed Hybrid model to better utilize their state capacity despite inheriting a grouped KV structure from the source Transformer.

\subsubsection{Stage~1: Layerwise Alignment}\label{subsec:stage1-theory}

Stage~0 initializes the SSM via a correspondence with the Attention layer's projection matrices. However, this yields only an approximate realization of the Attention layer. Stage~1 closes this gap by \emph{amortizing} the per-instance realization argument across the data distribution: we train the SSM parameters on generic web text data (\cref{training_recipes:stage1}). Concretely, we use the source Transformer as a teacher and train the SSM layers of the Hybrid to replicate the teacher's outputs. This is a \textit{practical proxy} for aligning the SSM's finite-horizon input-output matrix $\mathbf{T}(\mathbf{X})$ with the Attention mixing matrix $\mathbf{M}(\mathbf{X})$, without having to materialize the full $T\times T$ matrices. Two alignment objectives naturally arise.

\paragraph{Layerwise Alignment Objectives.}
We define a \emph{decoder layer} as a block consisting of a sequence mixer (Attention or SSM) and an MLP. Let $\ell$ denote the total number of decoder layers, $F_i$ the $i$-th decoder layer, and $\mathbf{X}_i = F_i(\mathbf{X}_{i-1})$ the hidden states after layer~$i$, with $\mathbf{X}_0$ denoting the sequence of input token embeddings. The alignment objectives operate on hidden states via a Mean-Squared Error (MSE) loss. The following are the MSE objectives considered in this work.

$\;$ %

\noindent \emph{End-to-end MSE.} The loss is applied once, on the final hidden states before the LM head:
\begin{equation}
\label{eq:e2e-loss}
\mathcal{L}_{\mathrm{e2e}}
  = \bigl\| \mathbf{X}_\ell^{\mathrm{Hybrid}} - \mathbf{X}_\ell^{\mathrm{Transformer}} \bigr\|^2,
\end{equation}
where $\mathbf{X}_\ell^{\mathrm{Hybrid}}$ and $\mathbf{X}_\ell^{\mathrm{Transformer}}$ denote the final-layer hidden states of the Hybrid model and the source Transformer respectively, each passed through the final layer norm (pre-LM Head representation) before comparison. This objective gives the SSM layers freedom to adjust their own internal representations, they need only match the teacher at the final output, not at every intermediate layer.

$\;$ %

\noindent \emph{Layerwise MSE.} The loss is applied at each decoder layer independently:

\begin{equation}
\label{eq:lw-loss}
\mathcal{L}_{\mathrm{LW}}
  = \frac{1}{\ell} \sum_{i=1}^{\ell}
    \bigl\| F_i^{\mathrm{Transformer}}(\mathbf{X}_{i-1}^{\mathrm{Transformer}})
         - F_i^{\mathrm{Hybrid}}(\mathbf{X}_{i-1}^{\mathrm{Hybrid}}) \bigr\|^2.
\end{equation}

\noindent This provides denser supervision by constraining each SSM layer to align with the Attention layer's output at the same depth. Note that $F_i$ includes both the sequence mixer and the MLP so the alignment operates at the full decoder-layer level, giving the MLP room to compensate for approximation errors in the SSM.\\

We ablate both objectives in \cref{subsubsec:e2e-vs-lw_ablations} and find that end-to-end MSE outperforms the layerwise variant. Consequently, Primed Hybrid models use end-to-end MSE for Stage 1 alignment unless otherwise specified.

\subsubsection{Stage~2: Task Adaptation}\label{subsec:stage2-theory}

After Stage~1, the Hybrid model retains the source Transformer's high \ref{eq:max_hankel_rank} Attention layers unchanged and replaces the remaining layers with SSMs trained to \textit{realize} their low-rank Attention counterparts. Stage~2 takes this standalone model and fine-tunes it end-to-end on the downstream task of interest using standard cross-entropy next-token prediction. In this work we consider two recipes, both starting from the Stage~1 Hybrid model.

\paragraph{Long-context Instruct model.}
We first extend the model's effective context length to 128K via continued pretraining on long-context data. We then perform supervised fine-tuning (SFT) on instruction-following data to produce a general-purpose long-context instruct model. The full training pipeline for our instruct-tuned models is detailed in \cref{training_recipes:it}.

\paragraph{Long-context Reasoning model.}
We first perform reasoning SFT at 32K context on chain-of-thought traces. We then extend the context to 128K using a mixture of reasoning traces and long documents, and finally align the model to the instruction format via a long-context instruction-alignment phase. The full training pipeline for our reasoning models is detailed in \cref{subsec:long-context-reasoning}.
\\

\section{SSM Layers for Fading Memory in Hybrids}
\label{sec:ssm-comparison}

We now describe the specific sequence-mixing layers we use to realize Attention in our Primed Hybrid models. We consider four candidates, the pure-SSM families Mamba-2 \citep{ssd}, GDN \citep{yang2024gated}, and GKA \citep{gatedkalmanet}, along with B'MOJO-F \citep{bmojo}, which is a Hybrid layer that couples an SSM with Sliding Window Attention inside a single sublayer. We describe the three pure-SSM families next through a common lens that highlights their differences in expressivity and state-update dynamics in \cref{subsec:ssm-zoo}. GKA receives additional treatment in \cref{sec:gka_main_ideas}, owing to its distinctive property of exposing a runtime compute-quality knob that can be adjusted at inference time without retraining. B'MOJO-F, being a Hybrid layer, is discussed separately in \cref{subsec:bmojof};

\subsection{SSM Layer Types}
\label{subsec:ssm-zoo}

The three SSM layer types we consider share the generic recurrence \cref{eq:ssm-generic}, differing only in the structure of the transition operator $\mathbf{A}_t$ and the gating operator $\mathbf{B}_t$. These structural differences determine how each layer manages its finite memory. We describe the three choices below and summarize their transition structures in \cref{tab:ssm-recurrences-main}.

\paragraph{Mamba-2~\citep{ssd}.}
Mamba-2 can be represented as the simplest state update (up to specific parametrization): an input-dependent scalar decay applied
uniformly to the entire state, combined with a direct key write,
\begin{equation}
\mathbf{A}_t = \gamma_t\,\mathbf{I},
\qquad
\bm{B}_t = \bm{k}_t^\top
\qquad\Longrightarrow\qquad
\mathbf{S}_t = \gamma_t\,\mathbf{S}_{t-1} + \bm{v}_t\,\bm{k}_t^\top,
\label{eq:mamba-2}
\end{equation}
where $\gamma_t \in (0,1)$ is a scalar decay computed from the current input token $\bm{x}_t$ through a learned projection and nonlinearity. Because $\mathbf{A}_t$ is a scalar multiple of the identity, every past key-value association is progressively forgotten as more associations are observed regardless of their content.

\paragraph{Gated DeltaNet (GDN)~\citep{yang2024gated}.}
GDN augments Mamba-2's uniform decay with a \emph{single-key erase} based on the \emph{delta update rule}~\citep{widrow1960adaptive}: before writing the new value, it subtracts from the state a rank-1 term aligned with the current key, selectively removing the value previously associated with that key,
\begin{equation}
\mathbf{A}_t = \gamma_t\bigl(\mathbf{I} - \beta_t\,\bm{k}_t\,\bm{k}_t^\top\bigr),
\qquad
\bm{B}_t = \beta_t\,\bm{k}_t^\top
\qquad\Longrightarrow\qquad
\mathbf{S}_t = \mathbf{S}_{t-1}\,\gamma_t\bigl(\mathbf{I} - \beta_t\,\bm{k}_t\,\bm{k}_t^\top\bigr) + \beta_t\,\bm{v}_t\,\bm{k}_t^\top,
\label{eq:gdn}
\end{equation}
where $\gamma_t, \beta_t \in (0,1)$ are scalar factors computed from the current input token $\bm{x}_t$ through learned projections and nonlinearities, with $\gamma_t$ the scalar decay inherited from Mamba-2 and $\beta_t$ the strength of the key-specific erase. Whereas Mamba-2's $\mathbf{A}_t$ is a pure scalar multiple of the identity, GDN's $\mathbf{A}_t$ has the structure \emph{scalar-times-identity plus a rank-1 correction}, so GDN can erase along a specific key direction without applying equally strong decay elsewhere. In the limit $\gamma_t \to 1$, global decay vanishes and the update becomes a pure key-specific revision; in general, GDN interpolates between uniform decay and key-specific revision at each step. This targeted update makes GDN strictly more expressive than Mamba-2.

\paragraph{Gated KalmaNet (GKA)~\citep{gatedkalmanet}.}
GKA (pronounced "gee-ka") retains GDN's scalar-times-identity-plus-rank-1 structure, but replaces GDN's \emph{single-key erase} with a \emph{history-aware erase} that accounts for every key seen so far. Concretely, the rank-1 erase direction is no longer aligned with the current key alone; instead, it is
an input-varying vector $\mathbf{g}_t^{\top} \in \mathbb{R}^{1 \times d_k}$ that
depends on all past keys $(\bm{k}_1, \ldots, \bm{k}_{t-1})$,
\begin{equation}
\mathbf{A}_t = \mathbf{I} - \bm{k}_t\,\mathbf{g}_t^{\top},
\qquad
\bm{B}_t = \mathbf{g}_t^{\top}
\qquad\Longrightarrow\qquad
\mathbf{S}_t = \mathbf{S}_{t-1}\bigl(\mathbf{I} - \bm{k}_t\,\mathbf{g}_t^{\top}\bigr) + \bm{v}_t\,\mathbf{g}_t^{\top}.
\label{eq:gka}
\end{equation}
The vector $\mathbf{g}_t$ is obtained as a function of all the past keys as follows\footnote{Note, the state update form in Equation (KF) from \cite{gatedkalmanet} is in terms of $\bm\Phi_{t-1}$ which can be obtained using the Sherman-Morrison identity that $\bm{\Phi}_t = \bm{\Phi}_{t-1} - \dfrac{\beta_t\,\bm{\Phi}_{t-1}\,\bm{k}_t\,\bm{k}_t^\top\,\bm{\Phi}_{t-1}}{1 + \beta_t\,\bm{k}_t^\top\,\bm{\Phi}_{t-1}\,\bm{k}_t}$.}:
\begin{equation}
\mathbf{g}_t = \beta_t\,\bm\Phi_{t}\,\bm{k}_t,
\qquad
\bm\Phi_{t} \;:=\; \Big(\sum_{i \leq t} \beta_i \big(\prod_{j>i} \gamma_j\big)\,\bm{k}_i\bm{k}_i^\top + \lambda_t\mathbf{I}\Big)^{-1},
\label{eq:gka-gain-closed-form}
\end{equation}
where $\lambda_t$ is a time-dependent regularizer for proper conditioning of the inverse matrix $\bm{\Phi}_t$ (see \cite[Equation 4]{gatedkalmanet}). We note that the input-selectivity parameter $\beta_t$ in \cref{eq:gka-gain-closed-form} is not part of the original GKA formulation in~\citet{gatedkalmanet}; we introduce it here, and develop the motivation and ablations in \cref{subsec:beta_gating}. The recurrence in \cref{eq:gka}
originates from the Kalman Filter formulation of an online
ridge-regression problem, hence the name \emph{Gated KalmaNet}.

From \cref{eq:gka-gain-closed-form}, the erase direction $\bm{g}_t$ is the current key $\bm{k}_t$ pre-multiplied by the inverse of a regularized,
input-weighted cumulative sum of past outer products
$\sum_{i \leq t} \beta_i \big(\prod_{j>i} \gamma_j\big)\,\bm{k}_i\bm{k}_i^\top$.This is precisely the sense in which $\bm{g}_t$ depends on all past keys: past tokens that wrote weakly (small $\beta_i$) or long ago (small $\prod_{j>i} \gamma_j$) contribute little to $\bm\Phi_t^{-1}$ and, through the inverse\footnote{To the extent that their key directions are not already well-represented by other past keys.}, leave $\bm\Phi_t$ large along their key directions; conversely, tokens that wrote strongly or are recent, shrink $\bm\Phi_t$ along
their directions. Since $\bm{g}_t = \beta_t\,\bm\Phi_t\,\bm{k}_t$, the erase direction
inherits this structure from $\bm\Phi_t$, with the contribution of each past
direction further modulated by its alignment with the current key $\bm{k}_t$. In contrast, Mamba-2 decays all state directions uniformly by $\gamma_t$,
and GDN combines the same uniform decay with a rank-1 erase along the current key $\bm{k}_t$; GKA's erase direction is instead shaped by the entire key history.

\paragraph{Expressiveness hierarchy.}
The three layers form a progression in how the rank-1 correction in $\mathbf{A}_t$ uses the state: Mamba-2 has no correction and simply applies a uniform scalar decay ($\mathbf{A}_t = \gamma_t\,\mathbf{I}$); GDN adds a rank-1 correction aligned with the current key alone
($\mathbf{A}_t = \gamma_t(\mathbf{I} - \beta_t\,\bm{k}_t\bm{k}_t^\top)$); GKA replaces that single-key correction with one whose direction $\bm{g}_t$ depends on all past keys
($\mathbf{A}_t = \mathbf{I} - \bm{k}_t\,\bm{g}_t^\top$). This yields the expressiveness hierarchy
GKA~$>$~GDN~$>$~Mamba-2, with all three sharing the fading-memory property of a bounded state; they differ only in how selectively that state is updated at each step.

\begin{table}[h]
\centering
\caption{Structural differences between the SSM layers considered in this work, expressed through the transition operator $\mathbf{A}_t$ and the write operator $\bm{B}_t$ of the generic recurrence $\mathbf{S}_t = \mathbf{S}_{t-1}\mathbf{A}_t + \bm{v}_t\,\bm{B}_t$. 
For GKA, $\bm{g}_t = \beta_t\,\bm{\Phi}_t\,\bm{k}_t$ with $\bm{\Phi}_t = \bigl(\sum_{i\leq t}\beta_i\prod_{j>i}\gamma_j\,\bm{k}_i\bm{k}_i^\top + \lambda_t\mathbf{I}\bigr)^{-1}$ the regularized, input-weighted inverse of past key outer products (see Eq.~4 of~\citet{gatedkalmanet}).}
\label{tab:ssm-recurrences-main}
\begin{tabular}{llll}
\toprule
\textbf{Layer} & \textbf{Transition operator} $\mathbf{A}_t$ &  \textbf{Gating operator} $\bm{B}_t$ & \textbf{Parameters} \\
\midrule
Mamba-2 & $\gamma_t\,\mathbf{I}$ & $\bm{k}_t^\top$ & $\gamma_t \in (0,1)$ \\
GDN & $\gamma_t(\mathbf{I} - \beta_t\,\bm{k}_t\bm{k}_t^\top)$ & $\beta_t\,\bm{k}_t^\top$ & $\gamma_t, \beta_t \in (0,1)$ \\
GKA & $\mathbf{I} - \bm{k}_t\,\mathbf{g}_t^\top$ & $\mathbf{g}_t^\top$ & $\gamma_t, \beta_t \in (0,1)$, $\lambda_t > 0$ \\
\bottomrule
\end{tabular}
\end{table}

\subsection{Gated KalmaNet at Scale}
\label{sec:gka_at_scale}

Training large language models with SSMs as sequence-mixing layers demands a chunkwise-parallel execution on modern accelerators, yet the GKA state recurrence in \cref{eq:gka} is
inherently sequential and cannot be parallelized directly. To bridge this gap, \citet{gatedkalmanet} implement an equivalent \emph{information form} of the Kalman filter, which rewrites the state in terms of quantities that admit a simple additive recurrence amenable to chunkwise parallelism. This reparameterization is mathematically identical to \cref{eq:gka}. Computing the per-step output of the GKA layer from the information form still requires solving a linear system of equations, which GKA handles with an iterative solver, Chebyshev Iteration. The number of iterations $r$ is therefore a natural runtime hyperparameter, a knob that existing SSMs like Mamba-2 and GDN do not offer. Building on this foundation, in this work we make three contributions to improve how GKA is deployed and implemented at scale.
\begin{enumerate}
    \item \textbf{Input selectivity via $\bm{\beta_t}$}
(\cref{subsec:beta_gating}). We introduce an input-selectivity
parameter $\beta_t \in (0,1)$ that modulates each token's contribution \emph{before} it enters the state, rather than only controlling post-write decay via $\gamma_t$. This gives the layer two complementary controls and consistently improves long-context performance in our ablations.

\item \textbf{Variable test-time compute} (\cref{subsec:chebyshev}). We characterize the compute-performance trade-off exposed by $r$ on downstream long-context tasks: a single trained model can be served with different values of $r$ without retraining, trading a small amount of approximation error for faster decoding when appropriate.

\item \textbf{Hardware-aware decoding kernel} (\cref{kernel}). We develop a Triton kernel for GKA decoding that tiles the layer's state matrices and exploits their symmetry to reduce HBM traffic, yielding a $1.2\text{--}1.25\times$ decode speedup over the original implementation of~\citet{gatedkalmanet}. We evaluate this kernel
alongside our Primed GKA-based Hybrid models at 8B and 32B parameters, demonstrating that GKA scales effectively to large model sizes.
\end{enumerate}

\subsection{Hybrid Layers: B'MOJO-F}
\label{subsec:bmojof}

B'MOJO-F~\citep{bmojo} couples an SSM module with Sliding Window Attention within a \emph{single} sequence-mixing sublayer, giving each layer simultaneous access to fading memory (via the SSM state) and bounded eidetic memory (via the SWA window). Any of the three SSM families above can serve as the SSM component. This makes B'MOJO-F strictly more expressive than any pure-SSM layer, at the cost of running both an SSM and a SWA sublayer per layer, which increases compute and memory relative to a pure-SSM Hybrid at the same Hybrid ratio. We include B'MOJO-F variants in our experiments alongside Primed Hybrid models that use standalone SSM layers.

\section{Priming: Main Results}
\label{sec:experiments}

Priming reduces the cost of building Hybrid models by over two orders of magnitude. To the best of our knowledge, this is the first work that systematically compares Hybrid models variants at scale under identical training conditions (same data, same training recipe, and same initialization) and with only the SSM layer varied. This uniform protocol lets us isolate the effect of the SSM choice itself, and evaluate how different forms of memory (eidetic Attention vs.~fading SSM state) interact across model scale and a diverse set of tasks.

We train two families of Primed Hybrid models that target complementary use cases, both with a 50\% Hybrid ratio. The \emph{Instruct} (IT) family is optimized for standard instruction-following workloads: question answering, summarization, retrieval-augmented generation, and other tasks where the model is expected to produce a direct response to a user prompt without producing long thinking traces. The \emph{Reasoner} family, by contrast, is trained to natively perform chain-of-thought reasoning and tool use, making it suitable for complex multi-step problems in math, science, and coding. We report Instruct model results in \cref{subsec:long-results} and Reasoner model results in \cref{subsec:reasoning-results}. Both our model families are released on Hugging Face.\footnote{\url{https://huggingface.co/collections/amazon/primed-hybrid-models-collection}} We ablate the key design choices of our Priming procedure in \cref{sec:priming_ablations}. 

Priming is a training recipe that gets us to a capable Hybrid model; once trained, the Hybrid architecture itself unlocks several downstream capabilities that we study in subsequent sections. The compact recurrent state of SSM layers reduces per-token inference cost and KV-cache memory, which in turn lets Hybrids generate more samples in the same latency budget as a Transformer. In \cref{subsec:tts} we show that this translates into a better Pareto frontier in the test-time scaling regime: at fixed inference budget, Hybrids reach higher accuracy than the Transformer by sampling more. Beyond sampling efficiency, GKA's iterative solver exposes a run-time compute-quality knob (detailed in \cref{subsec:chebyshev}) that lets a single trained Hybrid support variable test-time compute, trading accuracy for efficiency without retraining. Finally, in \cref{subsec:state-composition} we introduce a training-free context-extension method for Hybrid models that stitches together locally-processed chunks by concatenating their KV caches and composing their recurrent states, extending the context of our trained Hybrid models from 128K to 256K tokens.

\subsection{Primed Instruct Model Results}\label{subsec:long-results}

Our Primed IT models support a native context length of 128K tokens. We detail the training recipe in \cref{sec:training_recipes} and evaluate the models on a diverse set of long- and short-context tasks that require generating responses following user instructions. We study two model scales, 8B and 32B, and use the corresponding dense Qwen3 checkpoint as the source Transformer in each case. As the Transformer baseline, we fine-tune the same Qwen3 checkpoint with our Stage~2 IT recipe (see \cref{training_recipes:it}), so that any performance gap reflects the architectural difference rather than differences in post-training data or hyper-parameters. This model is referred to as Qwen3-\{8B,32B\}\,[Long] in our experiments. Primed Hybrid models are referred to as \{SSM\}-Primed-HQwen3-IT. In this work, we consider the following candidates for \{SSM\}: Mamba2, GKA, GDN, and B'MOJO-F. For the B'MOJO-F variant, the SSM sublayer uses GKA while the SWA branch uses a window size of 2048 tokens (see \cref{subsec:bmojof} for a discussion of the B'MOJO-F layer).

\subsubsection{Long-context Evaluation}
\label{subsec: IT_results_long_ctx}

In this section we evaluate our Primed Hybrid models on tasks that require retrieval and aggregation of information across long context lengths. This regime is particularly challenging for SSM layers since they maintain a fixed-size compressed summary of the past, which becomes harder to manage as the context length grows. Attention layers, thanks to their eidetic memory that stores every token verbatim, do not face this challenge.\\

\noindent\textbf{Tasks.} For long-context evaluation, we consider three recent benchmarks, HELMET, MRCR, and BABILong, that together span synthetic and real long-context tasks across context lengths from 8K to 128K tokens. From HELMET \citep{helmet}, a broad evaluation suite, we report results on five of its tasks: Synthetic Recall, Retrieval-Augmented Generation (RAG), Many-shot In-Context Learning (ICL), Passage Re-Ranking, and Summarization. Multi-Round Co-reference Resolution (MRCR) \citep{vodrahalli2024michelangelo} embeds multiple identical target passages within a synthetic multi-turn conversation and requires the model to retrieve a specific ordinal instance, testing precise co-reference resolution over long conversational contexts. BABILong \citep{babilong} tests reasoning over long context lengths by hiding supporting facts within large amounts of irrelevant filler text, requiring the model to locate and combine them to answer a question. Its ten subtasks range from single-fact retrieval to multi-hop reasoning.
\\

\begin{figure}[h]
    \centering
    \includegraphics[width=1.0\linewidth]{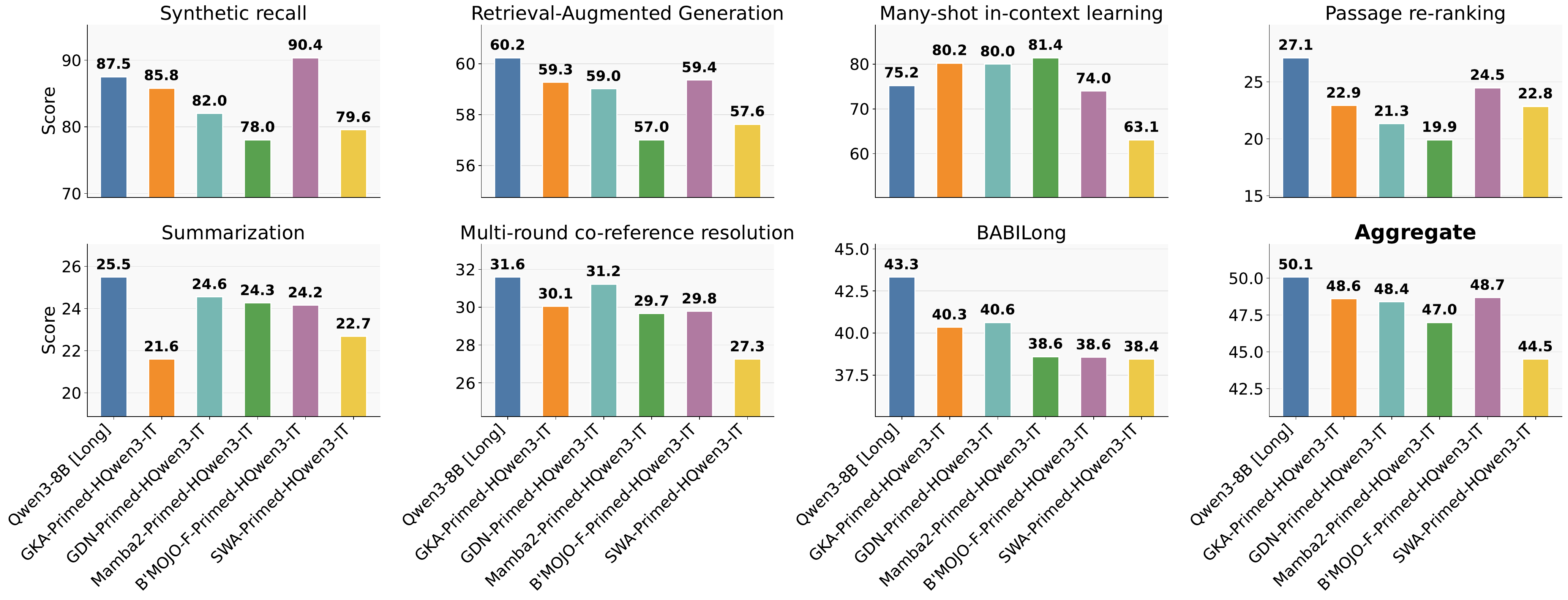}
    \caption{\textbf{Long-context evaluation at 8B scale.} Performance of our Primed Hybrid IT models (GKA, GDN, Mamba2, B'MOJO-F, SWA) against the Qwen3-8B [Long] Transformer baseline fine-tuned with the same Stage~2 IT recipe. Results are reported as a weighted average across context lengths from 8K to 128K, with geometrically increasing weights that double at each successive context length. The rightmost panel (row 2 col 4) aggregates performance across all seven tasks. All SSM-based Hybrids outperform the SWA baseline, with B'MOJO-F-, GKA-, and GDN-based Hybrids within $\sim$3.5\% (relative) of the Transformer in aggregate.}
    \label{fig:it_long_8b}
\end{figure}

\noindent\textbf{8B models.} In \cref{fig:it_long_8b} we report the long-context performance of our Primed Hybrid models across a diverse set of tasks. Each task's reported score is a weighted average across context lengths $s \in \{8\text{K}, 16\text{K}, 32\text{K}, 64\text{K}, 128\text{K}\}$. We adopt a weighting scheme that assigns geometrically increasing weights to longer sequence lengths: the unnormalized weight $w_s$ is taken as $2w_{s/2}$, with initial weight $w_{8\text{K}}{=}1$. This encodes the prior that the \textit{importance} of any single longer context length should exceed the combined importance of all shorter lengths, concentrating the aggregate on the long-context regime. From the results we make the following observations.

\begin{itemize} 
\item \textit{Primed Hybrids deliver 1.8-2.3$\times$ higher decode throughput on long-context tasks while being within $\sim$3.5\% (relative) of the Transformer's performance.} The best Hybrids (B'MOJO-F, GKA, and GDN) are within 1.4-1.7 points of the Qwen3-8B Transformer baseline on aggregate long-context performance despite half of their sequence-mixing layers being SSMs that operate on a fading-memory recurrent state. This comes with a roughly 50\% smaller KV cache, which translates to 1-8-2.3$\times$ higher decode throughput (compared to the source Transformer) for the non-B'MOJO-F Hybrids across context lengths (\cref{tab:decode_8b}). B'MOJO-F is slower at short context because each layer runs both a SWA sublayer and an SSM sublayer with unfused kernels; its throughput starts at $0.88\times$ the Transformer at 16K but surpasses it as context grows, reaching $1.75\times$ at 128K.
\item \textit{Hybrids with fading-memory SSM layers outperform windowed eidetic-memory SWA Hybrids on long-context tasks.} All SSM-based Hybrids outperform the SWA-based Hybrid baseline on aggregate long-context performance by at least 5.5\% (relative). We tested this using SWA-Primed-HQwen3-IT, a model built with the same 50\% Hybrid ratio of the Primed models: 50\% Attention layers and 50\% SWA layers. For a fair comparison, we set the SWA window size so that the SWA KV cache is \emph{at least} the size of the SSM state: with a window of 512 tokens at BF16, the SWA cache is $2\times$ the GKA state (stored at FP32 for stable recurrent dynamics) and $4\times$ the GDN/Mamba2 state (which is half the size of GKA's). Despite this memory advantage, SWA lags all SSM Hybrids, showing that a sliding window cannot substitute for a compact summary of the full context.

\item \textit{Hybrids built on more expressive SSM layers achieve higher aggregate long-context performance.} Among SSM-based Hybrids, the GKA- and GDN-based Hybrids lead in aggregate long-context performance ($48.6$ and $48.4$ respectively), with the Mamba2-based Hybrid trailing at $47.0$. The gap between the GKA- and GDN-based Hybrids is largely driven by recall-heavy tasks: on Synthetic Recall the GKA-based Hybrid outperforms the GDN-based Hybrid by 4.6\% (relative), retrieving specific pieces of information from large amounts of surrounding text, and by 7.5\% (relative) on Passage Re-Ranking, both tasks where a more expressive state update better preserves relevant key-value pairs in the fading memory. The Mamba2-based Hybrid's weaker performance is consistent with the simpler state update of its SSM layer, which applies a scalar data-dependent decay to fade past information uniformly, compared to the more selective per-key write mechanisms of GKA and GDN (see \cref{subsec:ssm-zoo}). These results suggest that more expressive SSM layers complement the eidetic Attention layers more effectively in the Hybrid: the SSM layers contribute a richer compressed summary of the past, leaving Attention to focus on the fine-grained, position-sensitive retrieval that its KV cache handles best. Consequently, we consider only GKA and GDN as the SSM layer for our 32B IT models. 
\end{itemize}

\begin{figure}[h]
    \centering
    \includegraphics[width=1.0\linewidth]{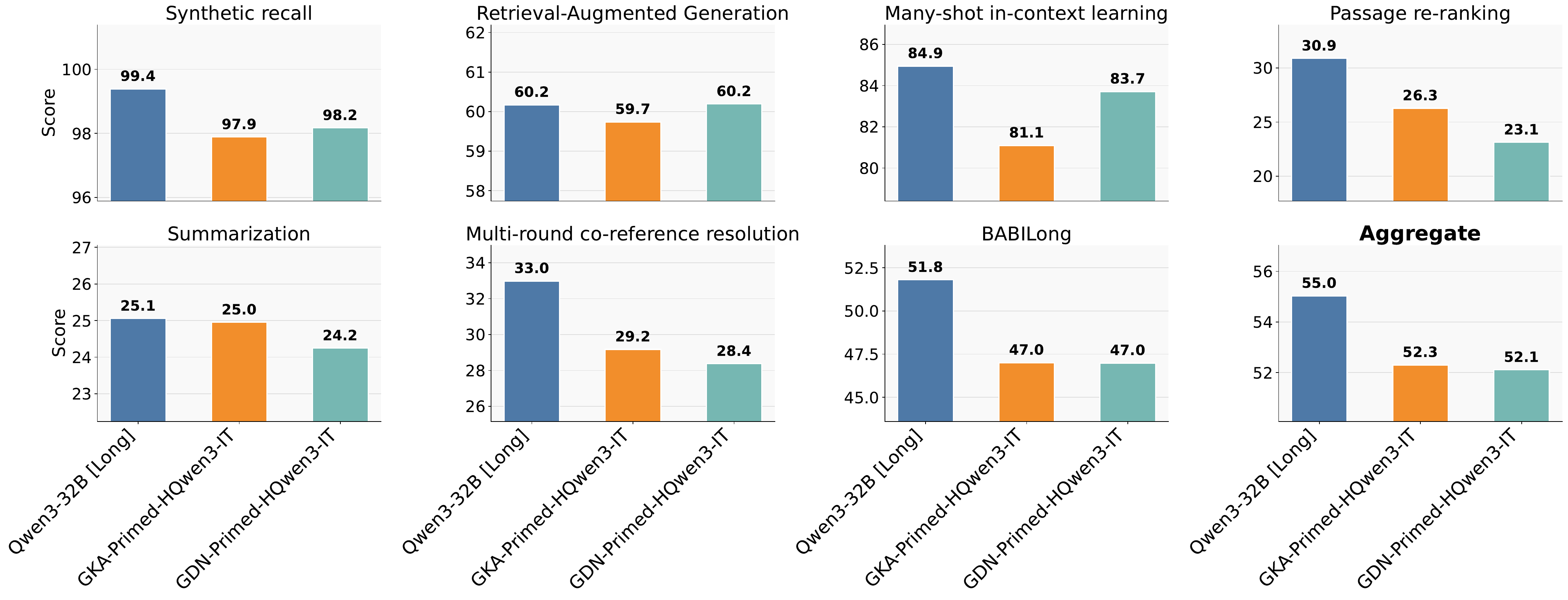}
    \caption{\textbf{Long-context evaluation at 32B scale.} Performance of our Primed Hybrid IT models (GKA and GDN) against the Qwen3-32B [Long] Transformer baseline fine-tuned with the same Stage~2 IT recipe. Results are reported as a weighted average across context lengths from 8K to 128K, with geometrically increasing weights that double at each successive context length. The bottom-right panel (Aggregate) aggregates performance across all seven tasks. The GKA- and GDN-based Hybrids are competitive with each other on long-context performance, within $\sim$5\% (relative) of the Transformer in aggregate.}
    \label{fig:it_long_32b}
\end{figure}
\noindent\textbf{32B models.} In \cref{fig:it_long_32b} we report the long-context performance of our Primed Hybrid models across the same set of tasks used at 8B scale, with scores aggregated across context lengths using the weighting scheme described above. At this scale we evaluate only the GKA- and GDN-based Hybrids, following the 8B finding that the expressivity of the SSM layer tracks aggregate long-context performance among Hybrids. We make the following observations.

\begin{itemize} 
\item \textit{Primed Hybrids at 32B deliver 1.3-2.1$\times$ higher decode throughput on long-context tasks while being within $\sim$5\% (relative) of the Transformer's performance.}  As at 8B, the GKA- and GDN-based 32B Hybrids remain competitive with the Transformer baseline on long-context tasks. The decode speedup over the Transformer, however, is smaller at 32B than at 8B. The reason for this is as follows: the parameter count grows $\sim$4$\times$ from 8B to 32B while the per-layer KV-cache shape\footnote{number of KV heads and head dimension} is unchanged, as a result the fraction of total decode time spent in non-Attention related operations (e.g., MLP weight reads) increases resulting in smaller relative speedup. The exact speedup varies by source Transformer architecture. Mistral, Llama, and other model families each yield different empirical gains, though the Primed Hybrid model is expected to consistently be faster than its Transformer counterpart. See \cref{sec:inference} for further discussion on this topic.
\item \textit{GKA- and GDN-based Primed Hybrids perform on par on aggregate long-context performance.} At 32B scale as at 8B, the two IT Hybrids are within 0.4\% (relative) of each other in aggregate long-context performance. The added expressivity of GKA's state update does not produce an appreciable gain over GDN on the standard long-context tasks considered here. As we show in \cref{subsec:reasoning-results}, however, this expressivity matters on harder tasks that require chain-of-thought reasoning.
\end{itemize}

\subsubsection{Short-context Evaluation}
\label{subsec: IT_results_short_ctx}
Many real-world instruction-following workloads fit comfortably within a few thousand tokens and instead target factual recall, multi-step reasoning, code synthesis, and adherence to user-specified constraints. In this section we evaluate our Primed Hybrid models on such tasks.\\

\noindent\textbf{Tasks.} For short-context evaluation we use the Tulu3-Dev~\citep{tuluv3} benchmark suite, which aggregates tasks across five categories:
\begin{enumerate}
    \item \emph{Knowledge.} (MMLU~\citep{hendrycks2021measuring}, PopQA~\citep{mallen2022not}, TruthfulQA~\citep{lin2022truthfulqa}) Multiple-choice and open-ended QA that probes factual recall across academic subjects, long-tail entities, and common misconceptions.
    \item \emph{Reasoning.} (BBH-CoT~\citep{suzgun2023challenging}) A broad suite of short reasoning problems (logical, symbolic, commonsense, and simple arithmetic) solved with a few steps of chain-of-thought. This probes general reasoning capabilities expected of an instruction-following model, as opposed to the competition-level mathematical and scientific reasoning benchmarks used to evaluate our Reasoning family in \cref{subsec:reasoning-results}.
    \item \emph{Math.} (Minerva-Math~\citep{hendrycks2021measuring}, GSM8K~\citep{cobbe2021training}) Grade-school and undergraduate-level math word problems solved with a few steps of chain-of-thought, as opposed to the olympiad-level problems used to evaluate our Reasoning family.
    \item \emph{Coding.} (HumanEval~\citep{chen2021evaluating}, HumanEval+~\citep{liu2023your}) Python function synthesis from docstrings, evaluated by execution against held-out unit tests.
    \item \emph{Instruction Following (IF).} (IFEval~\citep{zhou2023instruction}) Constrained generation tasks that measure adherence to verifiable format, length, and content constraints.
\end{enumerate}
We refer to these tasks as short-context because they can all be solved within 8K tokens.

\begin{table}[h]
\centering
\caption{
  \textbf{Short-context evaluation at 8B scale.} Performance of our Primed Hybrid IT models (GKA, GDN, Mamba2, B'MOJO-F) against the Qwen3-8B [Long] Transformer baseline fine-tuned with the same Stage~2 IT recipe on the Tulu3-Dev benchmark suite. The \textit{Avg.} column reports the aggregate performance across all five categories, and the \textit{$\Delta$\% (rel.)} column reports the relative gap in aggregate performance with respect to the Qwen3-8B [Long] baseline. The B'MOJO-F-based Hybrid matches the Transformer in aggregate ($+0.1$), and the GKA- and GDN-based Hybrids trail by only $\sim$3\% (relative).}
\label{tab:instruct_short_8b}
\small
\resizebox{0.9\textwidth}{!}{%
\begin{tabular}{lccccc|cc}
\toprule
\textbf{Model} & \textbf{Math} & \textbf{Knowledge} & \textbf{Coding} & \textbf{Reasoning} & \textbf{IF} & \textbf{Avg.} & \textbf{$\Delta$\% \textbf{(rel.)}} \\
\midrule
Qwen3-8B (Long) & 64.6 & 49.8 & 91.0 & 76.3 & 74.5 & 71.2 & --- \\
\midrule
B'MOJO-F-Primed-HQwen3-IT & 65.7 & 48.6 & 90.0 & 76.4 & 75.6 & 71.3 & $+0.1$ \\
GKA-Primed-HQwen3-IT & 64.2 & 47.9 & 90.5 & 72.6 & 71.0 & 69.2 & $-2.8$ \\
GDN-Primed-HQwen3-IT & 59.5 & 48.4 & 91.2 & 73.0 & 73.6 & 69.1 & $-2.9$ \\
Mamba2-Primed-HQwen3-IT & 57.8 & 46.9 & 89.6 & 71.0 & 74.9 & 68.0 & $-4.5$ \\
\bottomrule
\end{tabular}%
}
\end{table}

\noindent\textbf{8B Models.} In \cref{tab:instruct_short_8b} we report short-context performance of our Primed Hybrid models on Tulu3-Dev. We make the following observations.
\begin{itemize}
    \item \textit{Primed Hybrids are competitive with the Transformer on short-context tasks.} The best Hybrid (B'MOJO-F) is on par with the Qwen3-8B Transformer baseline in aggregate, and the GKA- and GDN-based Hybrids trail by only $\sim$3\% (relative). Unlike on long-context benchmarks where the B'MOJO-F-based Hybrid performed on par with the GKA- and GDN-based Hybrids, here it opens a noticeable lead. This is consistent with the eidetic SWA sublayer in B'MOJO-F being well-suited to short-context tasks, where the relevant context comfortably fits inside its 2048-token window and is retained verbatim rather than through a fading SSM state. 
    That said, as noted in \cref{subsec: IT_results_long_ctx}, our Primed B'MOJO-F-based Hybrid is the slowest Hybrid model at short-context lengths because its SWA and SSM sublayers run sequentially with unfused kernels.    
    \item \textit{Hybrids built on more expressive SSM layers achieve higher aggregate performance.} The ordering across SSM layers mirrors the long-context results in \cref{fig:it_long_8b}: GKA and GDN lead, with Mamba2 trailing at a $4.5\%$ relative gap to the Qwen3-8B [Long] baseline in aggregate short-context performance. 
\end{itemize}

\begin{table}[ht]
\centering
\caption{
  \textbf{Short-context evaluation at 32B scale.} Performance of our Primed Hybrid IT models (GKA and GDN) against the Qwen3-32B [Long] Transformer baseline fine-tuned with the same Stage~2 IT recipe on the Tulu3-Dev benchmark suite. The \textit{Avg.} column reports the aggregate performance across all five categories, and the \textit{$\Delta$\% (rel.)} column reports the relative gap in aggregate performance with respect to the Qwen3-32B [Long] baseline. Both Primed Hybrids are competitive with Qwen3-32B [Long], trailing by less than 2\% (relative) in aggregate.
}
\label{tab:instruct_short_32b}
\small
\resizebox{0.9\textwidth}{!}{%
\begin{tabular}{lccccc|cc}
\toprule
\textbf{Model} & \textbf{Math} & \textbf{Knowledge} & \textbf{Coding} & \textbf{Reasoning} & \textbf{IF} & \textbf{Avg.} & $\Delta$\% \textbf{(rel.)}\\
\midrule
Qwen3-32B (Long) & 74.4 & 54.5 & 94.5 & 82.9 & 81.5 & 77.6 & --- \\
\midrule
GKA-Primed-HQwen3-IT & 74.0 & 53.9 & 93.4 & 80.3 & 78.7 & 76.1 & $-1.9$ \\
GDN-Primed-HQwen3-IT & 73.6 & 54.3 & 94.4 & 81.0 & 79.3 & 76.5 & $-1.4$ \\
\bottomrule
\end{tabular}%
}
\end{table}

\noindent\textbf{32B Models.} In \cref{tab:instruct_short_32b} we report short-context performance of our Primed Hybrid models on Tulu3-Dev. Our Primed Hybrid 32B models follow a similar trend as the 8B models, with the GKA- and GDN-based Hybrids trailing the Qwen3-32B [Long] baseline by less than 2\% (relative) in aggregate and performing on par with each other. As at 8B, the added expressivity of GKA's state update does not produce an appreciable gain over GDN on these standard short-context tasks; however, a difference emerges when one considers more challenging reasoning-based tasks (see \cref{subsec:reasoning-results}).

\subsection{Primed Reasoning Model Results}\label{subsec:reasoning-results}
Our Primed reasoning models support a native context length of 128K tokens and are trained to produce chain-of-thought traces before responding. We detail the training recipe in \cref{sec:training_recipes} and evaluate the models on a diverse set of reasoning tasks in math, science, and coding. We study two model scales, 8B and 32B, and use the corresponding dense Qwen3 checkpoint as the source Transformer in each case. Based on our IT results, we restrict the Hybrid SSM layer to our two most promising candidates, GKA and GDN. We do not consider B'MOJO-F in this setting: while competitive on quality, its unfused SWA and SSM sublayers make it slower than pure SSM Hybrids (see \cref{subsec: IT_results_long_ctx}), which is especially detrimental for reasoning models that generate long thinking traces before answering user prompts. Our Primed Hybrid reasoning models are referred to as \{SSM\}-Primed-HQwen3-Reasoner. We have released our family of Primed reasoning models on HuggingFace\footnote{\url{https://huggingface.co/collections/amazon/primed-hybrid-models-collection}}.

At each model scale, we consider two Transformer baselines. Qwen3-\{8B,32B\} is the source Transformer off-the-shelf, which is itself a strong reasoning model. Qwen3-\{8B,32B\}\,[Reasoner-SFT] is the same checkpoint fine-tuned with our Stage 2 reasoning recipe (\cref{subsec:long-context-reasoning}), so that the comparison against our Primed Hybrid reasoning models isolates the architectural difference from differences in post-training data or hyper-parameters.\\

\noindent\textbf{Tasks.} We evaluate on a suite of reasoning-heavy benchmarks spanning math, science, coding, tool calling, and instruction following, grouped into five categories:
\begin{enumerate}
    \item \emph{Math.} (AIME24/25~\citep{aime2025}) Olympiad-qualifier problems from the American Invitational Mathematics Examination, requiring multi-step derivations over long chain-of-thought traces.
    \item \emph{Science.} (GPQA~\citep{rein2024gpqa}) Graduate-level multiple-choice questions in physics, chemistry, and biology, written to be Google-proof and requiring both deep domain knowledge and careful multi-step reasoning.
    \item \emph{Coding.} (LiveCodeBench v5~\citep{jain2025livecodebench}, SciCode~\citep{tian2024scicode}) Competitive-programming and scientific-computing problems evaluated by execution against held-out test cases.
    \item \emph{Tool Calling.} (BFCL v3/v4~\citep{patil2025bfcl}) Function-calling tasks that measure the ability to select the right tool and populate its arguments correctly across single- and multi-turn conversations.
    \item \emph{Instruction Following.} (IFBench~\citep{pyatkin2025generalizing}) Constrained generation tasks that test generalization to unseen, verifiable output constraints on format, length, and content.
\end{enumerate}

\noindent\textbf{Evaluation Setup.} All models are evaluated at 64K generation length. Our Primed GKA- and GDN-based Hybrid Reasoner models, as well as the fine-tuned Qwen3-\{8B,32B\}\,[Reasoner-SFT] baselines (trained with the same reasoning recipe, including context extension), support 128K context natively. The off-the-shelf Qwen3 baseline supports 32K context natively and up to 128K via YaRN~\citep{yarn}; we serve it at 128K using the YaRN settings prescribed in the model card. We evaluate on the tasks above using the NVIDIA NeMo Evaluator SDK\footnote{\url{https://docs.nvidia.com/nemo/evaluator/latest/}}, and provide the exact evaluation configuration\footnote{\url{https://github.com/awslabs/hybrid-model-factory/blob/main/examples/evaluation/nemo_reasoning_evals.yaml}} in our \href{https://github.com/awslabs/hybrid-model-factory}{awslabs/hybrid-model-factory} GitHub repository for full reproducibility.\footnote{BFCL v4 evaluation frequently triggers web-search API rate limits; to facilitate reproducibility we exclude the web-search subtask. For BFCL v4 we report average accuracy across queries, which is equivalent to weighting sub-task accuracy by the number of queries per sub-task.} To reduce variance, we run multiple trials with different random seeds for each task; the exact number of trials per task is specified in the provided configuration. The aggregate standard error across our evaluation tasks is around 0.5 points. %

\begin{table}[ht]
\centering
\caption{\textbf{Reasoning evaluation at 8B scale.} Performance of our Primed Hybrid Reasoner models (GKA and GDN) against the Qwen3-8B off-the-shelf source Transformer and the Qwen3-8B\,[Reasoner-SFT] baseline fine-tuned with the same reasoning recipe. All models are evaluated at 64K generation length with thinking mode on. The \textit{Avg.} column reports the aggregate performance across all eight benchmarks. $\Delta_{\text{GKA-GDN}}$\% reports the relative improvement (per task) of the GKA-based Reasoner model over the GDN-based counterpart; $\Delta_{\text{source}}$\% and $\Delta_{\text{SFT}}$\% report the relative gap (per task) of GKA-Primed-HQwen3-Reasoner against the Qwen3-8B and Qwen3-8B\,[Reasoner-SFT] baselines respectively. Our GKA-based Reasoner model matches or outperforms its GDN-based counterpart uniformly across all tasks.}
\label{tab:reasoning_8b}
\small
\resizebox{\textwidth}{!}{%
\begin{tabular}{lcccccccc|c}
\toprule
\textbf{Model} & \textbf{AIME24} & \textbf{AIME25} & \textbf{GPQA} & \textbf{LCB-v5} & \textbf{BFCL v4\textsuperscript{$\dagger$}} & \textbf{BFCL v3} & \textbf{IFBench} & \textbf{SciCode} & \textbf{Avg.} \\
\midrule
Qwen3-8B                          & 78.7 & 71.0 & 57.8 & 57.9 & 68.3 & 66.5 & 31.6 & 10.6 & 55.3 \\
Qwen3-8B\,[Reasoner-SFT]          & 80.7 & 73.7 & 62.6 & 65.1 & 67.3 & 65.8 & 39.7 &  4.1 & 57.4 \\
\midrule
GKA-Primed-HQwen3-Reasoner     & 82.0 & 73.7 & 61.8 & 63.1 & 66.5 & 62.2 & 39.0 &  6.4 & 56.8 \\
GDN-Primed-HQwen3-Reasoner     & 82.0 & 73.3 & 61.5 & 62.9 & 63.3 & 57.4 & 37.8 &  2.5 & 55.1 \\
\midrule
$\Delta_{\text{GKA-GDN}}\%$ (rel.) & $0.0$   & $+0.5$  & $+0.5$  & $+0.3$  & $+5.0$  & $+8.4$  & $+3.2$  & $+156.0$  & $+3.1$ \\
\midrule
$\Delta_{\text{source}}\%$ (rel.) & $+4.2$  & $+3.8$  & $+6.9$  & $+9.0$  & $-2.6$  & $-6.5$  & $+23.42$  & $-39.62$  & $+2.7$ \\
$\Delta_{\text{SFT}}\%$ (rel.) & $+1.6$  & $ 0.0$  & $-1.3$  & $-3.1$  & $-1.2$  & $-5.5$  & $-1.8$  & $+56.1$  & $-1.0$ \\
\bottomrule
\end{tabular}%
}

\smallskip
{\footnotesize 
\textsuperscript{$\dagger$}BFCL v4 excluding web-search category.\\
\textsuperscript{$\ddagger$}SciCode problem-level pass\,@1 (reported here) requires every subtask to be solved, so absolute scores are single-digit and relative $\Delta$s are sensitive to small gaps. Absolute point gaps: $\Delta_{\text{GKA-GDN}}=+3.9$, $\Delta_{\text{source}}= -4.2$, $\Delta_{\text{SFT}} = +2.3$. \cref{app:scicode-subtask} reports the subtask-level pass\,@1; rankings are preserved.}
\end{table}

\noindent\textbf{8B Models.} In \cref{tab:reasoning_8b} we report the reasoning performance of our Primed Hybrid Reasoner models at 8B scale. We make the following observations.
\begin{itemize}
    \item \textit{Primed Hybrid models deliver 1.8-2.3$\times$ higher decode throughput on long-reasoning tasks while being competitive with the Transformer baseline on reasoning benchmarks.} As in the IT setting (\cref{subsec: IT_results_long_ctx}), the roughly 50\% smaller KV cache of our Primed Hybrids translates to a 1.8-2.3$\times$ decode speedup over the Transformer. Against the Qwen3-8B\,[Reasoner-SFT] baseline, which is fine-tuned on the same reasoning SFT data with the same hyperparameters and thus isolates the architectural difference, the GKA-based Reasoner trails by only $\sim$1\% (relative) in aggregate and the GDN-based Reasoner by $\sim$4.0\% (relative).
    \item \textit{The GKA-based Reasoner matches or outperforms the GDN-based Reasoner uniformly across all tasks.} This is in contrast to the IT setting, where the two Hybrids performed on par at both long and short context (\cref{tab:instruct_short_8b,fig:it_long_8b}). Reasoning tasks are substantially more challenging: unlike IT workloads, which feature long context inputs but short responses, reasoning tasks require the model to generate long chains of thought before answering. GKA's more expressive state update better preserves intermediate computations across these extended thinking traces, which translates directly into higher accuracy on reasoning benchmarks.
    \item \textit{Our reasoning recipe improves both the Transformer and the Primed Hybrids over the off-the-shelf Qwen3-8B.} The Qwen3-8B\,[Reasoner-SFT] baseline and both Primed Hybrid Reasoners outperform the off-the-shelf Qwen3-8B in aggregate, confirming the efficacy of our reasoning training recipe (\cref{subsec:long-context-reasoning}).
\end{itemize}

\begin{table}[ht]
\centering
\caption{\textbf{Reasoning evaluation at 32B scale.} Performance of our Primed Hybrid Reasoner model (GKA) against the Qwen3-32B off-the-shelf source Transformer and the Qwen3-32B\,[Reasoner-SFT] baseline fine-tuned with the same reasoning recipe. All models are evaluated at 64K generation length with thinking mode on. The \textit{Avg.} column reports the aggregate performance across all eight benchmarks. $\Delta_{\text{source}}$\% and $\Delta_{\text{SFT}}$\% report the relative gap (per task) of GKA-Primed-HQwen3-Reasoner against the Qwen3-32B and Qwen3-32B\,[Reasoner-SFT] baselines respectively.}
\label{tab:reasoning_32b}
\small
\resizebox{\textwidth}{!}{%
\begin{tabular}{lcccccccc|c}
\toprule
\textbf{Model} & \textbf{AIME24} & \textbf{AIME25} & \textbf{GPQA} & \textbf{LCB-v5} & \textbf{BFCL v4\textsuperscript{$\dagger$}} & \textbf{BFCL v3} & \textbf{IFBench} & \textbf{SciCode} & \textbf{Avg.} \\
\midrule
Qwen3-32B                          & 86.3 & 70.0 & 65.4 & 64.4 & 69.3 & 69.6 & 32.6 & 15.9 & 59.2 \\
Qwen3-32B\,[Reasoner-SFT]          & 88.3 & 82.3 & 68.9 & 70.8 & 70.7 & 69.3 & 49.0 & 13.1 & 64.1 \\
\midrule
GKA-Primed-HQwen3-Reasoner     & 87.7 & 81.7 & 67.3 & 70.2 & 70.1 & 66.3 & 48.2 & 12.3 & 63.0 \\
\midrule
$\Delta_{\text{source}}$\% (rel.)   & $+1.6$ & $+16.7$ & $+2.9$ & $+9.0$ & $+1.2$ & $-4.6$ & $+47.9$ & $-22.6$ & $+6.4$ \\
$\Delta_{\text{SFT}}$\%  (rel.)   & $-0.8$ & $-0.8$  & $-2.3$ & $-0.8$ & $-0.8$ & $-4.2$ & $-1.6$  & $-6.0$  & $-1.7$ \\
\bottomrule
\end{tabular}%
}

\smallskip
{\footnotesize
\textsuperscript{$\dagger$}BFCL v4 excluding web-search category.\\}
\end{table}

\noindent\textbf{32B Models.} Following our 8B finding that the GKA-based Reasoner outperforms the GDN-based Reasoner uniformly across all tasks, we scale only the GKA variant to 32B. In \cref{tab:reasoning_32b} we report the reasoning performance of our GKA-Primed-HQwen3-Reasoner at 32B scale. The trends mirror the 8B results. Our GKA-Primed-HQwen3-Reasoner delivers a 1.3-2.1$\times$ decode speedup in long-reasoning tasks over the Qwen3-32B\,[Reasoner-SFT] Transformer baseline while being competitive in quality, trailing by only $\sim$1.7\% (relative) in aggregate despite half of its sequence-mixing layers being SSMs. As discussed in the 32B IT setting (\cref{subsec: IT_results_long_ctx}), due to the specific Qwen3-32B architecture the speedups at 32B are smaller than those at 8B because a larger fraction of decode time is spent on non-Attention operations (e.g., MLP layers) at this scale. Our reasoning recipe further improves both the fine-tuned Transformer and the Primed Hybrid over the off-the-shelf Qwen3-32B by a large margin ($+6.4\%$ relative in aggregate for GKA). Together with the 8B results, these numbers demonstrate that Primed Hybrid models are amenable to large-scale post-training, matching fine-tuned Transformers on quality while retaining the inference efficiency benefits of a compact recurrent state.

\subsection{Priming Pipeline Ablation}
\label{sec:priming_ablations}

The Priming pipeline (\cref{subsec: priming pipeline}) is built around three key design choices, each motivated by realization theory: importance-based layer selection to select which retain the Attention layers with highest \ref{eq:max_hankel_rank}, State Expansion via AGQA to restore the SSM's effective state capacity lost due to the grouped KV structure inherited from the source Transformer, and layerwise alignment objective used in Stage~1 for aligning the SSM's finite-horizon input-output matrix with the mixing matrix of the corresponding Attention layer from the source Transformer. Specifically, we study: (i) whether importance-based layer selection outperforms uniform layer pattern (\cref{subsec:pattern_ablations}); (ii) whether end-to-end MSE outperforms layerwise MSE for Stage~1 alignment (\cref{subsubsec:e2e-vs-lw_ablations}); and (iii) whether AGQA outperforms standard GQA replication for State Expansion (\cref{subsubsec:agqa_vs_gqa_ablations}).

\subsubsection{Layer Selection}\label{subsec:pattern_ablations}
\paragraph{Importance Score Analysis.}

\begin{figure}[h]
  \centering
  \includegraphics[width=\textwidth]{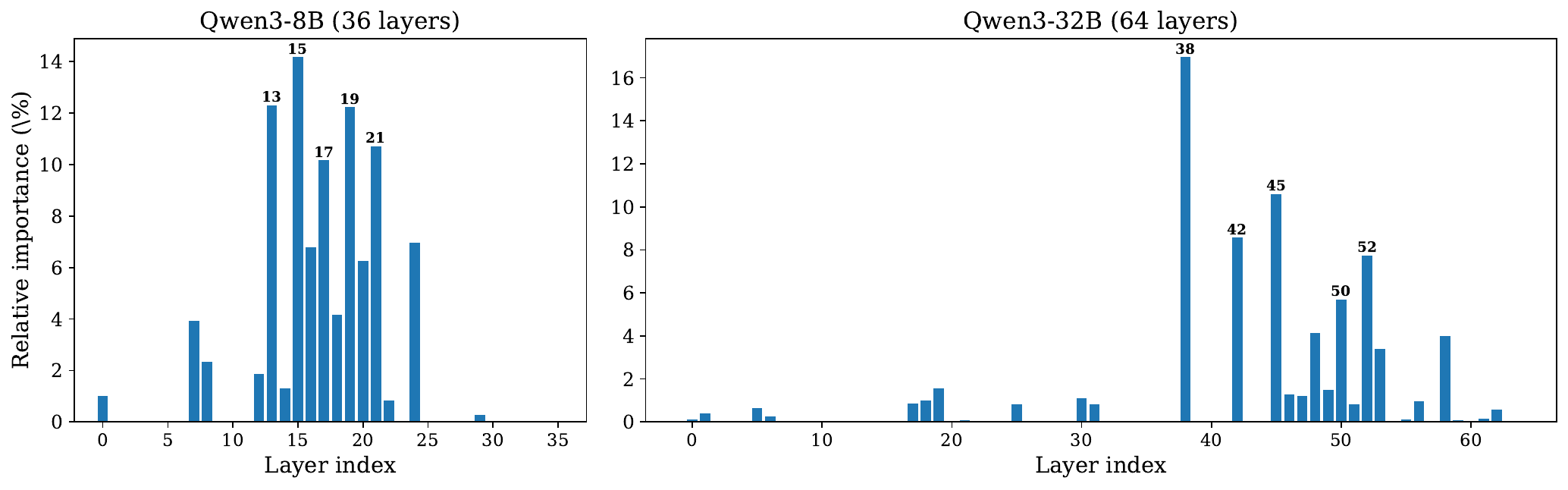}
   \caption{\textbf{Per-layer importance scores for Qwen3-8B and Qwen3-32B.} Each bar is the relative drop in mean HELMET performance (across five sub-tasks: Synthetic Recall, RAG, Many-shot ICL, Generation with Citations, and Passage Re-ranking) when the corresponding Attention layer is individually replaced with SWA of window size $w = 2048$. Importance is concentrated in a small set of middle-to-late layers in both models; the five most important layer indices are labeled.}
 
  \label{fig:importance_scores}
\end{figure}

We apply the importance-based layer selection procedure introduced in \cref{subsec:layer_selection} to Qwen3-8B (36 layers) and Qwen3-32B (64 layers), the source Transformers for our Primed Hybrid models at 8B and 32B scale respectively. We consider 5 sub-tasks from HELMET \citep{helmet} as our target benchmark (Synthetic Recall, Retrieval-Augmented Generation, Many-shot In-Context Learning, Generation with citations, and Passage re-ranking) all evaluated at 32K context length. 

Figure~\ref{fig:importance_scores} shows the resulting per-layer \emph{importance} for both the Transformer models. We find that \emph{importance} varies substantially across layers and does not follow a regular pattern across models. For example, for Qwen3-8B, the highly important layers cluster in the middle of the network (layer indices 13 to 21, which is roughly 36-58\% of model depth); on the other hand, for Qwen3-32B, the most important layers are towards the later layers (layer indices 38-52, which is roughly 59-81\% of the model depth). Recall that the \emph{importance} of a layer is a practical proxy for identifying layers with low \ref{eq:max_hankel_rank} and thus prime candidates for realization via SSM layers. Consistently, across both models we find the early layers to have low (almost zero) importance scores. This aligns with the intuition that early-layer representations emphasize local structure (token identity, syntax, positional relationships) that does not require the long-range retrieval capabilities of an eidetic memory. We release the importance scores identified by our layer-selection algorithm for Qwen3-8B\footnote{\url{https://github.com/awslabs/hybrid-model-factory/blob/main/vllm-inference/selection/importance_scores_qwen3-8b.csv}} and Qwen3-32B\footnote{\url{https://github.com/awslabs/hybrid-model-factory/blob/main/vllm-inference/selection/importance_scores_qwen3-32b.csv}}.

\paragraph{Selective vs.\ Uniform (Layer) Pattern.}
\begin{figure}[h]
  \centering
  \includegraphics[width=0.9\textwidth]{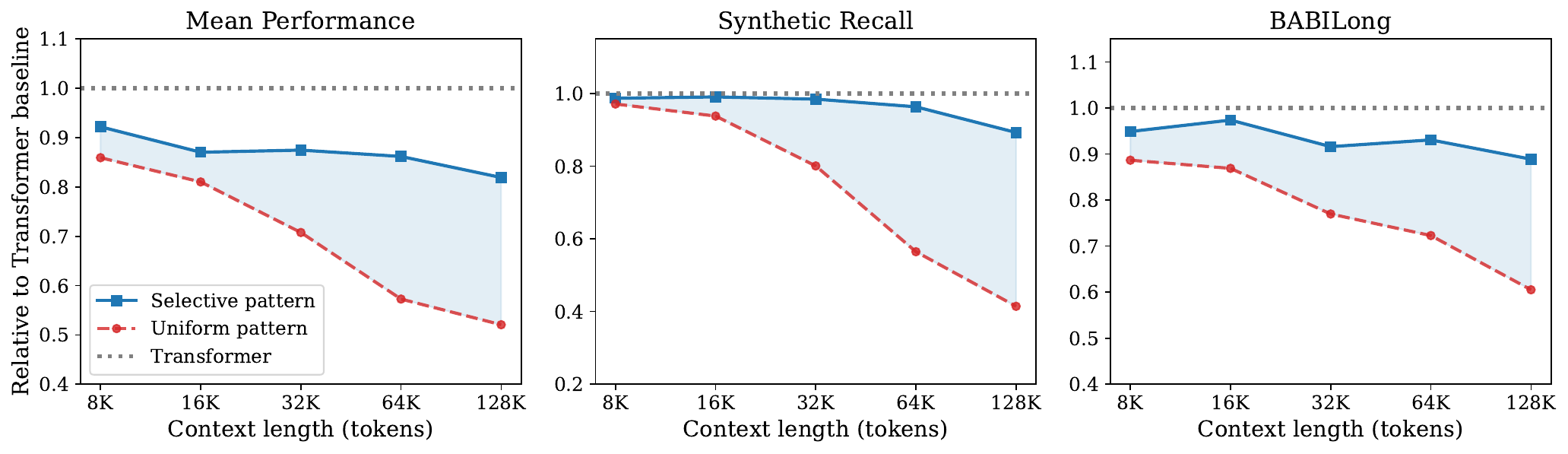}
  \caption{\textbf{Effect of layer pattern strategy on long-context performance for Hybrid models with a 50\% Hybrid ratio sourced from Qwen3-8B.} We compare our Selective Pattern against the Uniform Pattern baseline. Each line averages across three Hybrid models, one per SSM layer type (Mamba2, GDN, GKA); scores are reported as a fraction of the Transformer baseline's score at the same context length. All models, including the Transformer baseline, are trained with an identical recipe.  \textbf{Left:} Mean across long-context tasks.
    \textbf{Middle:} Synthetic Recall, which requires verbatim retrieval
    of strings from the input context.
    \textbf{Right:} BABILong,
    which requires reasoning over facts hidden within large amounts of irrelevant filler context. The gap between Selective and Uniform Pattern widens with context
    length, confirming that layer selection is critical for Priming Hybrid models with strong long-context capabilities.}
  \label{fig:layer_selection}
\end{figure}

Here, we evaluate the effectiveness of our layer-selection algorithm for building strong Hybrid models via Priming. We refer to our method as \textit{Selective Pattern} and compare it against a baseline, \textit{Uniform Pattern}, which designates 1 out of every $\frac{1}{1-p}$ layers as Attention for a Hybrid ratio $p$. 

For our experiment, we Prime three Hybrid models, one per SSM layer type (Mamba2, GDN, GKA), each with a 50\% Hybrid ratio and sourced from Qwen3-8B. Figure~\ref{fig:layer_selection} compares Selective Pattern vs.\ Uniform Pattern for these Primed Hybrid models, with scores averaged over the three SSM layer types on a suite of long-context tasks (described in \cref{subsec: IT_results_long_ctx}). All models, including the Transformer baseline (Qwen3-8B\,[Long], described in \cref{subsec: IT_results_long_ctx}), are trained with the same recipe. All scores reported are relative to the Transformer baseline.

At short context (8K tokens), the Selective Pattern leads the Uniform Pattern by roughly 6 points in mean long-context performance. The gap widens with context length: at 128K, the Selective Pattern obtains 82\% of the baseline Transformer's aggregate score, whereas the Uniform Pattern obtains only 52\%. The effect is most pronounced on Synthetic Recall, a task that requires the model to locate and reproduce verbatim strings from the input, exercising precisely the long-horizon retrieval capability that eidetic Attention layers provide: at 128K, the Selective Pattern obtains 89\% of the baseline Transformer's recall score while the Uniform Pattern obtains only 41\%.

\begin{table}[h]
\centering
\caption{\textbf{Effect of layer placement strategy on short-context performance for Hybrid models with a 50\% Hybrid ratio sourced from Qwen3-8B.} We compare our Selective Pattern against the Uniform Pattern baseline on the Tulu3-Dev benchmark suite. Each Primed Hybrid row averages across three Hybrid models, one per SSM layer type (Mamba2, GDN, GKA). The \textit{Mean} column reports the aggregate performance across all five categories. All models, including the Transformer baseline, are trained with an identical recipe. The Selective Pattern outperforms the Uniform Pattern in aggregate, with the largest gaps on Math and Reasoning, confirming that layer selection benefits Primed Hybrid models beyond long-context tasks.}

\label{tab:tulu3dev_pattern}
\small
\resizebox{0.9\textwidth}{!}{
\begin{tabular}{llcccccc}
\toprule
\textbf{Model} & \textbf{Layer Pattern} & \textbf{Math} & \textbf{Knowledge} & \textbf{Coding} & \textbf{Reasoning} & \textbf{IF} & \textbf{Mean} \\
\midrule
Source Transformer & --- & 69.6 & 49.3 & 91.2 & 68.1 & 77.8 & 68.4 \\
Primed-HQwen3-IT & Selective & 65.1 & 47.2 & 90.3 & 62.6 & 74.9 & 65.2 \\
Primed-HQwen3-IT & Uniform   & 53.5 & 46.6 & 89.8 & 52.0 & 75.2 & 60.6 \\
\bottomrule
\end{tabular}
}
\end{table}

The importance scores are computed using only a subset of long-context tasks and at 32K context length, yet the benefit of Selective Pattern over Uniform Pattern generalizes well beyond this scoring suite. First, as shown in \cref{fig:layer_selection}, the Selective Pattern's advantage persists at context lengths up to 128K. Second, it also extends to long-context tasks not used in the importance-scoring procedure (for example, BABILong in \cref{fig:layer_selection}). Finally, we compare the two layer patterns on short-context benchmarks (described in \cref{subsec: IT_results_short_ctx}) and report results in \cref{tab:tulu3dev_pattern}: in aggregate, the Selective Pattern obtains 7.6\% higher score relative to the Uniform Pattern.

\begin{wrapfigure}{r}{0.5\textwidth}
\vspace{-2mm}
\centering
\includegraphics[width=0.5\textwidth]{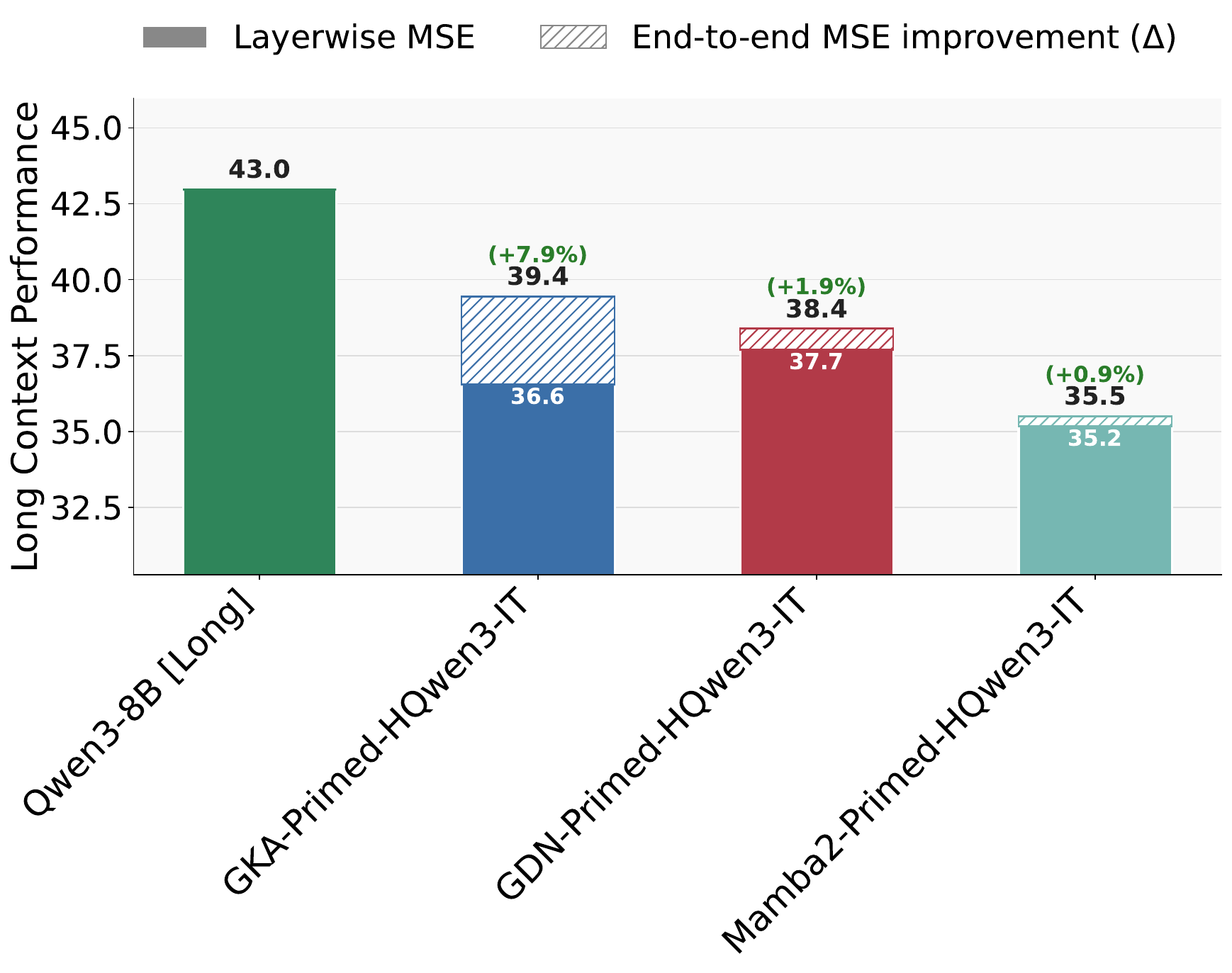}
\caption{\textbf{Effect of Stage~1 supervision granularity on long-context performance for Hybrid models with a 50\% Hybrid ratio sourced from Qwen3-8B.} We compare end-to-end MSE (\cref{eq:e2e-loss}) against layerwise MSE (\cref{eq:lw-loss}) for three SSM layer types (GKA, GDN, Mamba2). For each SSM type, the solid bar shows performance under layerwise supervision and the hatched overlay shows the improvement from end-to-end supervision. Qwen3-8B\,[Long] is the Transformer baseline. Scores are reported as a weighted average over long-context tasks across context lengths from 8K to 128K, using the same geometrically increasing context-length weighting as in \cref{subsec: IT_results_long_ctx}. End-to-end MSE outperforms layerwise MSE for all three SSM types, with the relative improvement increasing with the expressivity of the SSM layer.}
\label{fig:layerwise_vs_e2e}
\vspace{-15mm}
\end{wrapfigure}

These results align with the realization-theory motivation of our layer-selection algorithm (\cref{subsec:layer_selection}): high-Hankel-rank Attention layers in the source Transformer are best retained as Attention in the Hybrid, not realized with a bounded-state SSM.

\paragraph{Remark:} The results in this ablation are from a controlled recipe designed to isolate the effect of layer-pattern choice. The final Primed models released with this work incorporate importance-based layer selection alongside our full training recipe (\cref{sec:training_recipes}); see \cref{{sec:experiments}} for their performance.

\subsubsection{Stage 1: End-to-end vs.\ Layerwise MSE}\label{subsubsec:e2e-vs-lw_ablations}
In this subsection, we ablate the Stage~1 supervision granularity for Primed Hybrid models. We compare \textit{End-to-end MSE} \cref{eq:e2e-loss}, which supervises the Hybrid only at the final hidden states before the LM head, against \textit{Layerwise MSE} \cref{eq:lw-loss}, which averages a per-layer MSE across all decoder layers, forcing each SSM layer to individually approximate its Attention counterpart at the same depth. For our experiment, we Prime three Hybrid models, one per SSM layer type (Mamba2, GDN, GKA), each with a 50\% Hybrid ratio and sourced from Qwen3-8B. Each Hybrid is first trained via Stage~1 at 8K context using one of the two objectives, then undergoes long-context continued training at 128K followed by supervised fine-tuning with intruction-following data. \cref{fig:layerwise_vs_e2e} reports the weighted average score across our long-context benchmarks (described in \cref{subsec: IT_results_long_ctx}) over context lengths from 8K to 128K, using the same weighting scheme as in the main results.

End-to-end MSE outperforms Layerwise MSE for all three SSM types, with the relative improvement increasing with the expressivity of the SSM layer (\cref{subsec:ssm-zoo}): 0.9\% for Mamba2, 1.9\% for GDN, and 7.9\% for GKA. We hypothesize that layerwise supervision over-constrains the SSM layers: each must match an Attention layer that has access to the full context using only a fixed-size compressed state. End-to-end supervision avoids this by constraining only the final output during Stage~1, giving the SSM layers more freedom to develop internal representations suited to their recurrent dynamics.

Based on these results, we adopt end-to-end MSE as the default Stage~1 objective for all Hybrid models in this work. This also simplifies the Priming pipeline relative to progressive distillation methods such as MOHAWK~\citep{mohawk}, which rely on layerwise alignment as an intermediate stage before end-to-end training. In Priming, Stage~0 transfers the Q/K/V/O projections directly from the source Attention layer (\cref{subsec:stage0-theory}), giving each SSM layer a well-aligned starting point; the layerwise alignment stage can then be skipped without degrading end-to-end performance.

\paragraph{Remark:} The results in this ablation are from a controlled recipe designed to isolate the effect of Stage~1 supervision granularity. The final Primed models released with this work use our full training recipe (\cref{sec:training_recipes}).

\subsubsection{State Expansion: Adaptive GQA vs.~GQA}\label{subsubsec:agqa_vs_gqa_ablations}
\begin{wrapfigure}{r}{0.45\textwidth}
\vspace{-5mm}
\centering
\includegraphics[width=0.45\textwidth]{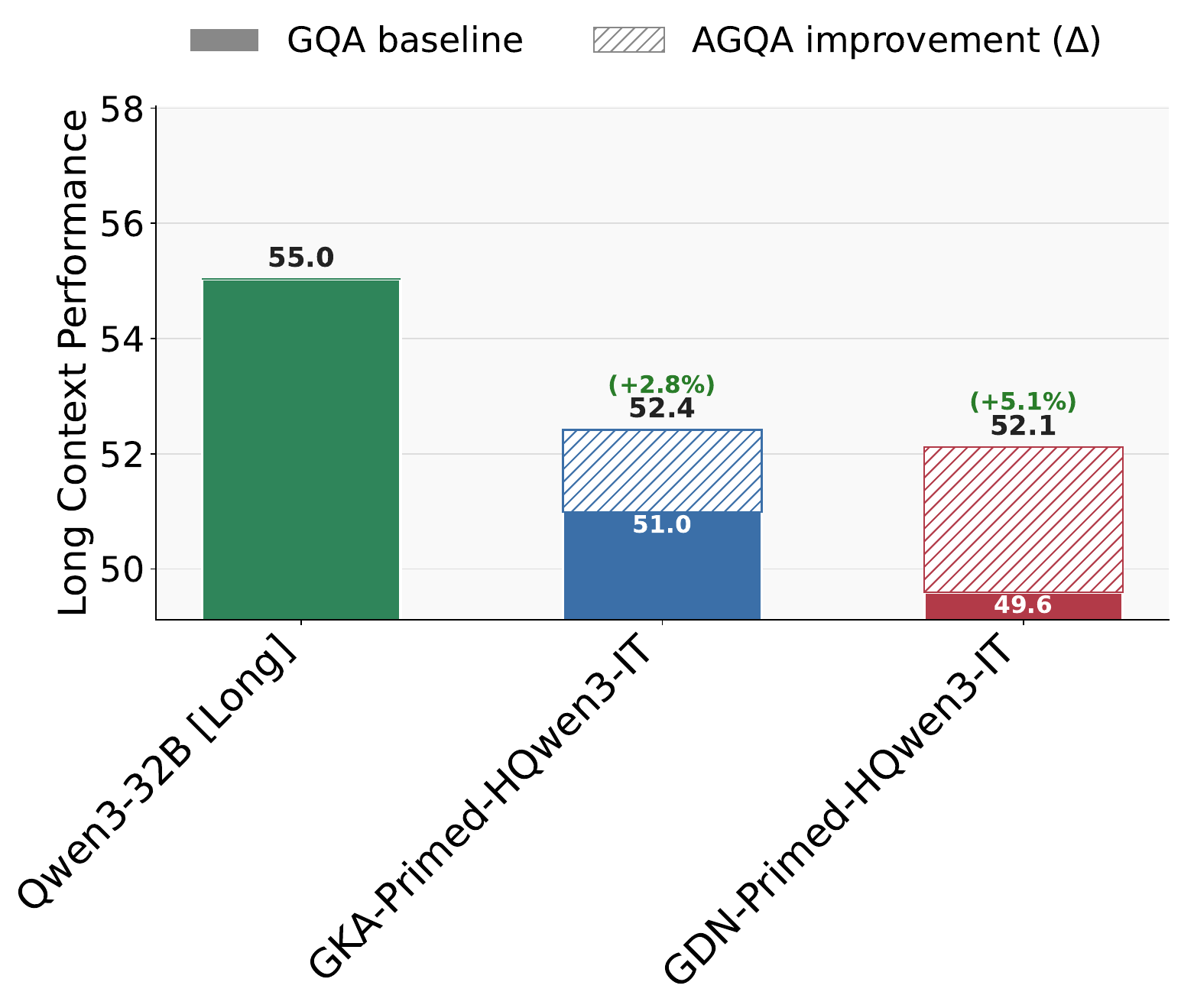}
\caption{\textbf{Effect of State Expansion via AGQA on long-context performance for Hybrid models with a 50\% Hybrid ratio sourced from Qwen3-32B.} We compare Adaptive GQA (AGQA) against the standard GQA baseline for two SSM layer types (GKA, GDN). For each SSM type, the solid bar shows performance under standard GQA and the hatched overlay shows the improvement from AGQA. Qwen3-32B\,[Long] is the Transformer baseline. Scores are reported as a weighted average over long-context tasks across context lengths from 8K to 128K, using the same geometrically increasing context-length weighting as in \cref{subsec: IT_results_long_ctx}. AGQA improves over standard GQA for both SSM types.}
\label{fig:agqa}
\vspace{-1mm}
\end{wrapfigure}

In this subsection, we ablate the State Expansion design choice for Primed Hybrid models. We compare our \textit{Adaptive GQA (AGQA)} (\cref{eq:agqa}) against a standard \textit{GQA} baseline, which replicates each of the $H_{\mathrm{KV}}$ shared KV heads $m = H_Q / H_{\mathrm{KV}}$ times to produce $H_Q$ heads. As discussed in \cref{subsec:stateexp-theory}, this repetition reduces the SSM's effective state capacity relative to having $m$ independent key-value heads. AGQA replaces the fixed repetition with a learned low-rank projection that maps the $H_{\mathrm{KV}}$ shared heads to $H_Q$ potentially distinct ones, recovering per-head diversity with minimal overhead.

For our experiment, we Prime two Hybrid models, one per SSM layer type (GKA, GDN), each with a 50\% Hybrid ratio and sourced from Qwen3-32B (GQA group size $m = 8$). We set the rank of the AGQA projection to 512, adding less than $1\%$ of total model parameters at this scale. Figure~\ref{fig:agqa} reports the weighted average score on our long-context benchmarks, using the same geometrically increasing context-length weighting as in \cref{subsec: IT_results_long_ctx}. AGQA improves over standard GQA for both Hybrid models in aggregate long-context performance (Figure~\ref{fig:agqa}), as predicted by the capacity analysis in \cref{subsec:stateexp-theory}.

At 8B scale, the GQA group size for our source Transformer (Qwen3-8B) is $m = 4$, half the 32B value. The corresponding capacity reduction from head sharing is smaller, and we find standard GQA to be sufficient at this scale; consequently, we use AGQA only for our 32B Hybrid models. AGQA does introduce a modest inference cost: when served with tensor parallelism, the additional all-gather for the low-rank projection tensors makes our 32B Hybrid models up to 5\% slower during decoding compared to GQA.

\section{Test-Time Scaling: Intelligence per FLOP} 
\label{subsec:tts}

Modern reasoning models spend substantial inference compute per query: they generate long chains-of-thought, and accuracy on hard benchmarks can be further improved by sampling multiple trajectories and aggregating them (majority voting, best-of-$n$). The relevant efficiency question is therefore not how fast a single token is produced, but how quickly a model reaches a target accuracy on a given problem.

Our Primed Hybrids decode faster than their source Transformers at long context (\cref{sec:inference}), thanks to a smaller KV cache and a compact recurrent state. This per-token speedup does not necessarily translate to end-to-end efficiency: a model that is cheaper per token can still be slower to reach a target accuracy if its reasoning is weaker and it needs many more tokens, or many more parallel samples, to compensate. In this section we ask: \emph{in wall-clock terms, how much faster does a Primed Hybrid reach a target accuracy compared to the source model it was Primed from?} This matters most for reinforcement learning pipelines, where training throughput is gated by how quickly the policy can generate \emph{correct} trajectories to learn from~\citep{areal,verl}. Because Primed Hybrids require less KV-cache memory per sequence, more rollouts fit concurrently on the same hardware (\cref{sec:inference}). On AIME~2025, our 32B GKA-Primed-HQwen3-Reasoner matches the accuracy of its Transformer baseline (Qwen3-32B trained on the same reasoning recipe) with up to a 1.6$\times$ wall-clock speedup.

\subsection{Evaluation Setup}
We compare our 32B GKA-Primed-HQwen3-Reasoner (referred to as ``Hybrid'' in this section) against the Qwen3-32B Transformer baseline trained on the same reasoning recipe (referred to as ``Transformer'' in this section). Both models are served on 8$\times$H200 GPUs (TP\,=\,8) via vLLM. For each concurrency level we sweep \texttt{max-num-seqs} (the maximum number of concurrent requests processed by vLLM) independently per model and report the fastest configuration.

\paragraph{Problem sets.} We evaluate on two subsets of AIME~2025: (i)~the full 30-problem benchmark and (ii)~a \emph{hard subset} comprising the 15 problems on which Qwen3-32B attains the lowest accuracy (pass@1). The hard subset elicits longer thinking traces, which amplify the Hybrid's decode throughput advantage.

\paragraph{Concurrency sweep.} We control total concurrency by varying the number of rollouts per problem across five levels: 480, 720, 960, 1200, and 1440 total rollouts. On the full set this corresponds to 16--48 rollouts per problem; on the hard subset we double the rollouts per problem (32--96) so both sets hit the same total concurrency.

\paragraph{Test-time compute for GKA.} GKA exposes a runtime compute knob via its Chebyshev iteration count $r$: a single trained model supports variable inference compute by adjusting $r$ post-hoc (see \cref{subsec:chebyshev}). Our GKA-Primed-HQwen3-Reasoner is trained with $r{=}30$. At inference time we evaluate two operating points: $r{=}30$ and $r{=}10$ (reduced compute, with a small accuracy cost). The Transformer baseline has no comparable knob.

\paragraph{KV-cache saturation.} We say a model is \emph{KV-cache saturated} when the GPU memory budget for its KV cache is full: at that point vLLM can no longer run all requests concurrently and some must wait or be paused and resumed later. Below saturation, additional rollouts fit in memory and run in parallel; above saturation, they extend wall-clock time. On the full set, neither model is saturated at 480 rollouts; at 720, only the Transformer saturates; at 960 and above, both models saturate. The Hybrid's smaller KV cache per sequence pushes its saturation point to higher concurrency than the Transformer's, which is the mechanism behind the Hybrid's wall-clock advantage.

\paragraph{Time-to-accuracy metric.} We measure how quickly each model produces a target number of correct rollouts. At wall-clock time $t$, running accuracy is defined as the fraction of correct rollouts completed so far, divided by the \emph{total} number of rollouts submitted at that concurrency level (e.g., 480). For each accuracy threshold $p$, let $t_{\text{Hybrid}}(p)$ and $t_{\text{Transformer}}(p)$ denote the wall-clock times at which each model's running accuracy first crosses $p$. We report the speedup
\[
\text{S-}p = \frac{t_{\text{Transformer}}(p)}{t_{\text{Hybrid}}(p)},
\]
where values $>1$ indicate the Hybrid reaches the threshold first. ``S-Completion'' is the speedup to finish all rollouts. Fixing the denominator to total submitted rollouts (rather than completed-so-far) ensures both models are compared at the same absolute count of correct answers.

\subsection{AIME 2025 (Full)}
\cref{tab:tts-speedup} reports the Hybrid's speedup over the Transformer on the full 30-problem AIME~2025 set across five concurrency levels. At low concurrency (480 and 720 rollouts), the Transformer is faster at the lowest accuracy threshold (S-60\%), while the Hybrid is faster at higher thresholds. The reason traces back to the per-token cost profile of the two models: at short context the Transformer's decode throughput exceeds the Hybrid's, since MLP weight reads dominate over KV-cache traffic and the SSM state is larger than the KV cache of short sequences (\cref{sec:inference}). Early in the run, the rollouts that finish first and contribute to low-threshold accuracy are the ones with short correct traces, produced in the regime that favors the Transformer. As the target threshold rises, later-completing long-trace rollouts start to matter, and the Hybrid's advantage at long context takes over.

At saturation (960+ rollouts), the Hybrid is consistently faster across all thresholds. The $r{=}10$ variant achieves $1.22$--$1.47\times$ speedup at every threshold and concurrency level, with the advantage growing at higher load ($1.34$--$1.47\times$ at 1440 rollouts). The $r{=}30$ variant sees slightly smaller but still consistent gains ($1.13$--$1.41\times$). As more rollouts pile on, the Transformer's KV-cache budget throttles its throughput more severely than the Hybrid's compact state, which extends the Hybrid's wall-clock lead across thresholds. 

Notably, the $r{=}10$ variant is uniformly faster than $r{=}30$ at the same accuracy threshold, with only a ~1\% drop in final accuracy (82.2\% vs 83.2\%). Both rows come from the same trained model: $r$ is reduced post-hoc at inference time (\cref{subsec:chebyshev}), so the additional speedup requires no retraining.

\begin{table}[h]
\centering
\caption{\textbf{Speedup of GKA-Primed-HQwen3-Reasoner vs.~Transformer on AIME~2025 (full set).} Speedup of the Hybrid over Qwen3-32B \,[Reasoner-SFT] (the Qwen3-32B model finetuned on the same reasoning data as the Hybrid), defined as the ratio of Transformer wall-clock time to Hybrid wall-clock time to reach each accuracy threshold $p$ (S-$p$) on  AIME~2025. Values $>1$ indicate the Hybrid is faster. Below KV-cache saturation (480 and 720 rollouts), the Transformer is faster at S-60\% because short correct traces complete first in the regime that favors its per-token throughput; the Hybrid overtakes at higher thresholds as long-trace rollouts start to matter. At saturation (960+ rollouts), the Hybrid is consistently faster across all thresholds. Average final accuracy across concurrency levels: Qwen3-32B\,[Reasoner-SFT] $83.3 \pm 0.7\%$, GKA-Primed-HQwen3-Reasoner ($r{=}30$) $83.2 \pm 0.6\%$, GKA-Primed-HQwen3-Reasoner ($r{=}10$) $82.2 \pm 0.2\%$.}
\label{tab:tts-speedup}
\small
\begin{tabular}{@{}llcccc@{}}
\toprule
Model & Rollouts & S-60\% $\uparrow$ & S-70\% $\uparrow$ & S-80\% $\uparrow$ & S-Completion $\uparrow$ \\
\midrule
\multirow{5}{*}{GKA-Primed-HQwen3-Reasoner ($r{=}10$)}
& 480  & 0.96$\times$ & 1.00$\times$ & 1.19$\times$ & 1.19$\times$ \\
& 720  & 0.87$\times$ & 1.11$\times$ & 1.36$\times$ & 1.34$\times$ \\
& 960  & 1.38$\times$ & 1.42$\times$ & 1.22$\times$ & 1.34$\times$ \\
& 1200 & 1.30$\times$ & 1.25$\times$ & 1.43$\times$ & 1.39$\times$ \\
& 1440 & 1.34$\times$ & 1.47$\times$ & 1.39$\times$ & 1.41$\times$ \\
\midrule
\multirow{5}{*}{GKA-Primed-HQwen3-Reasoner ($r{=}30$)}
& 480  & 0.92$\times$ & 0.96$\times$ & 1.17$\times$ & 1.13$\times$ \\
& 720  & 0.85$\times$ & 1.13$\times$ & 1.39$\times$ & 1.29$\times$ \\
& 960  & 1.30$\times$ & 1.24$\times$ & 1.13$\times$ & 1.22$\times$ \\
& 1200 & 1.16$\times$ & 1.16$\times$ & 1.38$\times$ & 1.31$\times$ \\
& 1440 & 1.26$\times$ & 1.41$\times$ & 1.37$\times$ & 1.33$\times$ \\
\bottomrule
\end{tabular}
\end{table}

\subsection{AIME 2025 (Hard Subset)}
On the 15 hardest AIME~2025 problems (\cref{tab:tts-speedup-hard}), the Hybrid is faster than the Transformer across \emph{every} threshold and concurrency level tested, with speedups ranging from $1.12\times$ to $1.59\times$. The regime at low concurrency where the Transformer was faster on the full set disappears here: hard problems require uniformly long thinking traces, so there are no short correct trajectories to complete early at short context, and the Hybrid's long-context throughput advantage applies from the start. The longer traces also keep each rollout in the KV cache for longer, sustaining KV-cache pressure across more of the run and extending the window over which the Hybrid's smaller per-sequence state pays off. The $r{=}10$ variant reaches up to $1.59\times$ at 1440 rollouts and sustains $1.36$--$1.52\times$ even at the lowest concurrency (480), with a $\sim$1\% accuracy cost relative to $r{=}30$ (66.3\% vs 67.2\%) and again with no retraining (\cref{subsec:chebyshev}).\\

\begin{table}[h]
\centering
\caption{\textbf{Speedup of GKA-Primed-HQwen3-Reasoner vs.~Transformer on the 15 hardest AIME~2025 problems.} Speedup of the Hybrid over Qwen3-32B\,[Reasoner-SFT] (the Qwen3-32B model finetuned on the same reasoning data as the Hybrid) on the hard subset (15 problems with lowest Qwen3-32B pass@1), defined as the ratio of Transformer wall-clock time to Hybrid wall-clock time to reach each accuracy threshold $p$ (S-$p$). Values $>1$ indicate the Hybrid is faster. Hard problems require uniformly long thinking traces, amplifying the Hybrid's decode throughput advantage, so the Transformer-favorable low-threshold regime seen on the full set does not appear. Average final accuracy across concurrency levels: Qwen3-32B\,[Reasoner-SFT] $68.1 \pm 0.9\%$, GKA-Primed-HQwen3-Reasoner ($r{=}30$) $67.2 \pm 2.0\%$, GKA-Primed-HQwen3-Reasoner ($r{=}10$) $66.3 \pm 0.9\%$.}
\label{tab:tts-speedup-hard}
\small
\begin{tabular}{@{}llcccc@{}}
\toprule
Model & Rollouts & S-55\% $\uparrow$ & S-60\% $\uparrow$ & S-65\% $\uparrow$ & S-Completion $\uparrow$ \\
\midrule
\multirow{5}{*}{GKA-Primed-HQwen3-Reasoner ($r{=}10$)}
& 480  & 1.52$\times$ & 1.45$\times$ & 1.36$\times$ & 1.39$\times$ \\
& 720  & 1.20$\times$ & 1.19$\times$ & 1.12$\times$ & 1.24$\times$ \\
& 960  & 1.25$\times$ & 1.33$\times$ & 1.24$\times$ & 1.31$\times$ \\
& 1200 & 1.42$\times$ & 1.40$\times$ & 1.44$\times$ & 1.42$\times$ \\
& 1440 & 1.43$\times$ & 1.41$\times$ & 1.43$\times$ & 1.59$\times$ \\
\midrule
\multirow{5}{*}{GKA-Primed-HQwen3-Reasoner ($r{=}30$)}
& 480  & 1.38$\times$ & 1.36$\times$ & 1.39$\times$ & 1.28$\times$ \\
& 720  & 1.31$\times$ & 1.31$\times$ & 1.24$\times$ & 1.28$\times$ \\
& 960  & 1.21$\times$ & 1.20$\times$ & 1.14$\times$ & 1.20$\times$ \\
& 1200 & 1.32$\times$ & 1.32$\times$ & 1.35$\times$ & 1.34$\times$ \\
& 1440 & 1.30$\times$ & 1.27$\times$ & 1.29$\times$ & 1.47$\times$ \\
\bottomrule
\end{tabular}
\end{table}

\textbf{Takeaway.} The reduced per-sequence memory footprint of Primed Hybrid models translates into faster time-to-accuracy under realistic concurrent workloads. One particularly impactful application is RL post-training, where training throughput is bottlenecked by the rate at which the policy generates correct trajectories~\citep{areal,verl}. This is an architecture-specific advantage: any Primed Hybrid (GKA, GDN, Mamba2) benefits from this KV-cache reduction and thus the same
ability to serve more concurrent rollouts.  Concretely, our GKA-Primed-HQwen3-Reasoner ($r=10$) completes all rollouts 1.19--1.41$\times$ faster on the full set and 1.24--1.59$\times$ faster on the hard subset than the Transformer baseline, effectively increasing the RL training data throughput at fixed GPU budget.

\subsection{Token Length Distributions}
A natural question is whether the wall-clock speedups reported above are an artifact of the Hybrid producing shorter responses. \cref{fig:tts-token-hist} rules this out: the output token length distributions of GKA-Primed-HQwen3-Reasoner and Qwen3-32B\,[Reasoner-SFT] are closely matched on both the full AIME~2025 set and the hard subset, with the Hybrid producing slightly \emph{longer} traces on average (median 14.3K vs.\ 13.1K tokens on the full set; 27.9K vs.\ 26.4K on the hard subset). The speedups therefore stem from the Hybrid's higher decode throughput, not from shorter generations.

\begin{figure}[h]
\centering
\includegraphics[width=\textwidth]{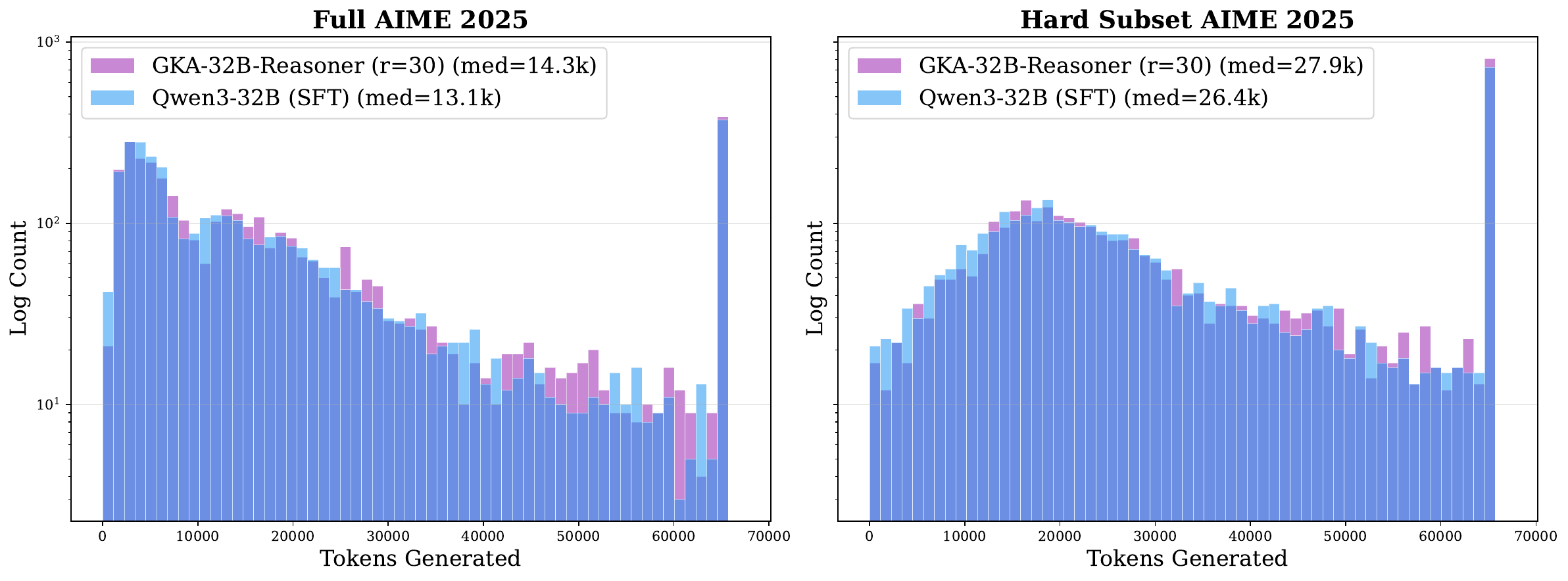}
\caption{\textbf{Reasoning output token length distributions.} Output token length distributions for GKA-Primed-HQwen3-Reasoner ($r{=}30$) and Qwen3-32B\,[Reasoner-SFT] on AIME~2025 ($n{=}3600$ rollouts per model). \textbf{Left:} full benchmark (30 problems). \textbf{Right:} hard subset (15 hardest problems). Both models were trained with the same reasoning SFT recipe and produce similar-length thinking traces, confirming that the Hybrid's speedups are not an artifact of shorter responses.}
\label{fig:tts-token-hist}
\end{figure}

\section{Training-free Context Extension}
\label{subsec:state-composition}

As context grows beyond a model's native training window, quality degrades. While our Primed Hybrids inherit long-context capability from the source Transformer, extending beyond that still requires a dedicated continued-pretraining stage with curated long-document data (see \cref{training_recipes:it}). A natural question is whether the Hybrid architecture itself can support length generalization \emph{without any additional training}.

Existing training-free context extension methods fall broadly into two categories: those designed for pure Transformers and those designed for pure SSMs.  On the Transformer side, positional-embedding interpolation schemes such as YaRN~\citep{yarn} rescale positional encodings so that the model can attend over longer sequences than it was trained on; DRoPE~\citep{drope} drops positional encodings entirely after a short recalibration phase; DCA~\citep{dca} chunks attention and recalibrates, while APE~\citep{ape} separately encodes context chunks in parallel and realigns their attention distributions at inference, enabling many-chunk contexts without re-prefilling. On the SSM side, PICASO~\citep{picaso} shows that the permutation-invariant structure of certain recurrent states allows independently processed context chunks to be composed at inference time, and recent work has begun to characterize when length generalization succeeds or fails in recurrent models~\citep{lengthgenrecurrent}. However, none of these methods address the \emph{hybrid} setting directly.   

Extending context for Hybrid models requires a composition strategy that handles both memory types jointly: the KV caches from independently processed chunks must be reconciled, and the SSM states must be merged in a way that respects the recurrence structure of each layer type. We introduce \emph{state composition} as, to the best of our knowledge, the first training-free context extension method for Hybrid models. The core idea is to partition a long input into chunks at the native context length, process each chunk independently to obtain per-chunk KV caches and SSM states, and merge these before generating. SSM states are fixed-size and admit principled merging operations: CASO decomposition~\citep{picaso} for Mamba2 and GDN, and additive composition in information filter form for GKA (\cref{app:state-comp-theory}). The result is a fully training-free method that extends effective context by $2\times$ (128K $\to$ 256K), with exploratory results at $4\times$ (512K) and no parameter updates required.

\subsection{Hybrid States Composition}

\begin{figure}[h]
  \centering
  \includegraphics[width=0.85\textwidth]{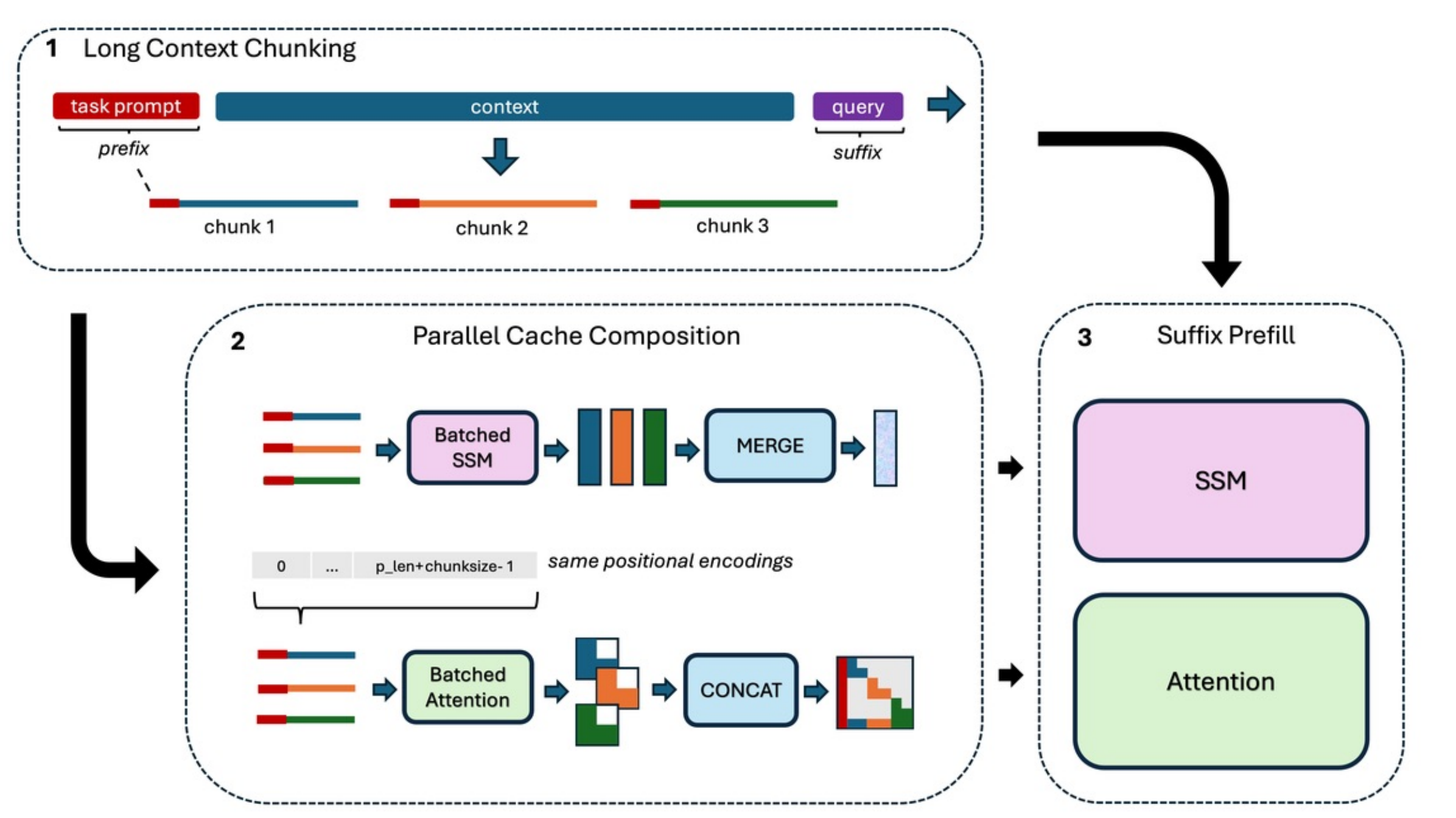}
  \caption{\textbf{State composition for hybrid models.} Long inputs are partitioned into chunks at the native context length, each processed independently. KV caches are concatenated while SSM states are merged (via averaging or an alternative method as in ~\cref{subsubsec:theory}), yielding training-free context extension.}
  \label{fig:state-composition}
\end{figure}
Let $L$ denote the model's native context length and suppose we wish to
process an input of length $N = K \cdot L$ for some integer $K > 1$.
State composition proceeds in three steps (\cref{fig:state-composition}).

\paragraph{Step 1: Chunking.}
The input sequence $\mathbf{x}_1, \ldots, \mathbf{x}_N$ is partitioned into $K$ chunks of length $L$.  A shared prefix $\mathbf{p}$ (e.g.\ a system prompt) is prepended to each chunk, similar to the parallel encoding strategy of APE~\citep{ape}.  This ensures that attention sink tokens~\citep{attentionsinksurvey}, the small set of initial tokens that absorb disproportionate Attention mass in trained Transformers, are present in every chunk, preserving the Attention distribution the model expects.

\paragraph{Step 2: Independent prefill and state merging.}
Each chunk is processed independently via a standard forward pass. Let $P = |\mathbf{p}|$ denote the prefix length; the position ids assigned to each chunk run from $0$ to $P + L - 1$, so that every chunk sees identical positional encodings regardless of its location in the original sequence.  This yields per-chunk KV caches $\{\mathbf{K}^{(c)}, \mathbf{V}^{(c)}\}_{c=1}^{K}$ for the Attention layers and per-chunk SSM states
$\{\mathbf{S}^{(c)}\}_{c=1}^{K}$ for the SSM layers %
The chunks can be processed as a batch, making this step highly parallel.

The KV caches are concatenated across chunks, retaining only a single copy of the prefix entries:
\begin{equation}
  \mathbf{K} = [\mathbf{K}_{\mathrm{prefix}};\;
    \mathbf{K}^{(1)}_{\mathrm{body}};\; \ldots;\;
    \mathbf{K}^{(K)}_{\mathrm{body}}],
  \qquad
  \mathbf{V} = [\mathbf{V}_{\mathrm{prefix}};\;
    \mathbf{V}^{(1)}_{\mathrm{body}};\; \ldots;\;
    \mathbf{V}^{(K)}_{\mathrm{body}}].
\end{equation}
For the SSM states, we consider two merging strategies.
\begin{itemize}[nosep]
  \item \textbf{Souping} (simple averaging):
    $\mathbf{S}_{\mathrm{merged}} = \frac{1}{K}\sum_{c=1}^{K} \mathbf{S}^{(c)}$.
    Each chunk contributes equally; no learned parameters are involved. This is most similar to KV cache concatentation as there is no inter-chunk attention
  \item \textbf{Fusion} (e.g. PICASO~\citep{picaso}):
    a weighted combination using PICASO coefficients that exploit the
    linear recurrence structure of each SSM
    (\cref{app:state-comp-theory}).
\end{itemize}
In practice, we default to souping for our experiments, since the two strategies perform comparably at long context lengths (256K+) while souping is cheaper to compute.

\paragraph{Step 3: Query prefill.}
The final query segment (e.g.\ the user question) is prefilled normally, starting from the merged KV cache and SSM state. Generation then proceeds autoregressively as usual.

\subsection{From native 128k to 256k context length}

\begin{figure}[h]
  \centering
  \includegraphics[width=\textwidth]{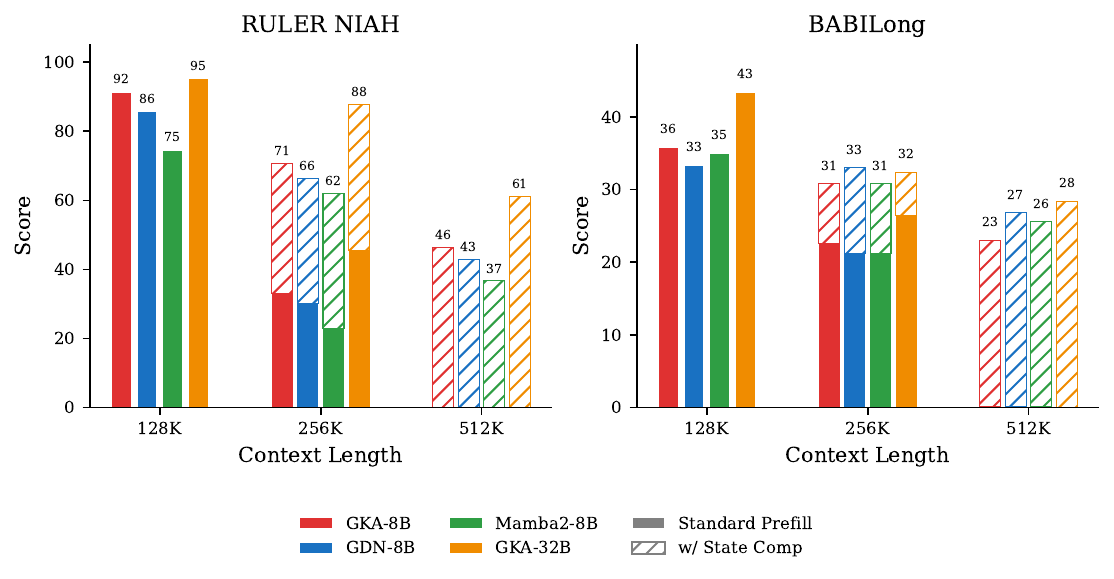}
  \caption{\textbf{State composition extends effective context well beyond the 128K native training window.} Performance of Primed Hybrid IT models (GKA-8B, GDN-8B, Mamba2-8B, GKA-32B) on RULER NIAH and BABILong at 128K (native), 256K ($2\times$), and 512K ($4\times$) contexts. Solid bars show standard prefill (running the model on the full input in a single pass); hatched bars show the additional gain from state composition with 128K chunks. Standard prefill degrades sharply beyond 128K and collapses to near zero at 512K for all 8B models (GKA-32B OOMs), while state composition recovers substantial performance on both retrieval (RULER NIAH) and reasoning (BABILong) tasks.} %
  \label{fig:state-comp-results}
\end{figure}

\paragraph{Setup.} We evaluate state composition on four Primed Hybrid IT models, all trained with a native context of 128K tokens: GKA-Primed-HQwen3-8B-IT (``GKA-8B''), GDN-Primed-HQwen3-8B-IT (``GDN-8B''), Mamba2-Primed-HQwen3-8B-IT (``Mamba2-8B''), and GKA-Primed-HQwen3-32B-IT (``GKA-32B''). We test at the native 128K context and at $2\times$ (256K) and $4\times$ (512K) extensions on two synthetic long-context benchmarks: RULER NIAH~\citep{ruler}, which measures multi-key and multi-value retrieval, and BABILong~\citep{babilong}, which embeds reasoning questions within long distractor text. All composition experiments use a 128K chunk size and the full state composition configuration: SSM state souping combined with zero-shot YaRN scaling on the Attention layers. No chat template is applied.

\paragraph{Results.} \cref{fig:state-comp-results} summarizes the main results. Standard prefill (running the model on the full input in a single pass, without chunking) holds up at 128K but degrades sharply at $2\times$ native context and effectively collapses at $4\times$, dropping to near zero on NIAH for all 8B models and OOMing entirely for GKA-32B: the Attention layers' KV cache grows linearly with sequence length, so materializing it for 512K tokens exceeds GPU memory at 32B scale. State composition sidesteps this by processing independent 128K chunks whose per-chunk KV cache fits in memory, and merging fixed-size SSM states that do not grow with the context length. This lets the method recover most of the performance loss at 256K across all four models and extend practical utility to 512K, a regime that standard prefill cannot handle at all for GKA-32B. The SSM expressivity hierarchy (GKA $>$ GDN $>$ Mamba2) observed in earlier sections is preserved under state composition.

\paragraph{Ablation: Attention scaling.} \cref{tab:sc-ablation} isolates the contribution of SSM state composition from positional-embedding scaling. We compare three configurations: \emph{YaRN only}~\citep{yarn}, which rescales RoPE frequencies but does not compose SSM states; \emph{State Comp w/o Attention scaling}, which applies our chunked composition pipeline without any RoPE scaling; and \emph{State Comp (full)}, which combines both. On RULER NIAH at 256K, composition without Attention scaling already matches or exceeds YaRN for all three 8B Hybrid models (e.g., GKA-8B: 66.6 vs.\ 64.5), and the gap widens at 512K as YaRN degrades more sharply. On BABILong the two methods are comparable at 256K, but composition w/o Attention scaling pulls ahead at 512K. The full method (State Comp) yields the best results across the board, indicating that positional-embedding scaling and SSM state composition address complementary failure modes: YaRN addresses the Attention layers' out-of-distribution RoPE positions, while state composition addresses the SSM layers' out-of-distribution state dynamics.

\begin{table}[h]
\centering
\caption{\textbf{Ablation: contribution of Attention scaling vs.\ SSM state composition.} RULER NIAH and BABILong scores at 128K (native), 256K, and 512K for the three 8B Primed Hybrid IT models (GKA-8B, GDN-8B, Mamba2-8B). We compare four configurations: \emph{Standard Prefill} runs the model on the full input in a single pass, without chunking; \emph{YaRN only}~\citep{yarn} rescales RoPE frequencies on the Attention layers but does not compose SSM states; \emph{w/o Attn.\ scaling} applies our chunked SSM state composition pipeline without any RoPE scaling; \emph{State Comp} is the full method, combining both. All composition variants use a 128K chunk size. The final column averages across both benchmarks at 256K and 512K. The full method yields the best aggregate score for every model, indicating that positional-embedding scaling and SSM state composition address distinct long-context failure modes.}
\label{tab:sc-ablation}
\small
\begin{tabular}{llccccccc}
\toprule
& & \multicolumn{3}{c}{\textbf{RULER NIAH}} & \multicolumn{3}{c}{\textbf{BABILong}} & \\
\cmidrule(lr){3-5} \cmidrule(lr){6-8}
\textbf{Model} & \textbf{Method} & 128K & 256K & 512K & 128K & 256K & 512K & \textbf{Avg.\ $\geq$256K} \\
\midrule
GKA-8B & Standard Prefill   & 91.5 & 32.9 & 0.0  & 35.8 & 22.6 & 0.2  & 13.9 \\
       & YaRN only          & --   & 64.5 & 32.8 & --   & 29.0 & 17.2 & 35.9 \\
       & w/o Attn.\ scaling & --   & 66.6 & 34.2 & --   & 28.0 & 20.0 & 37.2 \\
       & State Comp         & --   & 70.7 & 46.3 & --   & 30.8 & 23.0 & \textbf{42.7} \\
\midrule
GDN-8B & Standard Prefill   & 85.8 & 30.1 & 0.0  & 33.4 & 21.2 & 0.2  & 12.9 \\
       & YaRN only          & --   & 61.7 & 32.2 & --   & 26.8 & 19.8 & 35.1 \\
       & w/o Attn.\ scaling & --   & 63.3 & 29.3 & --   & 29.4 & 19.0 & 35.3 \\
       & State Comp         & --   & 66.3 & 43.0 & --   & 33.0 & 26.8 & \textbf{42.3} \\
\midrule
Mamba2-8B & Standard Prefill   & 74.7 & 22.9 & 0.0  & 35.0 & 21.2 & 0.2  & 11.1 \\
          & YaRN only          & --   & 50.5 & 31.5 & --   & 29.2 & 17.2 & 32.1 \\
          & w/o Attn.\ scaling & --   & 46.9 & 23.9 & --   & 28.8 & 19.8 & 29.9 \\
          & State Comp         & --   & 62.0 & 36.7 & --   & 30.8 & 25.6 & \textbf{38.8} \\
\bottomrule
\end{tabular}
\end{table}

\paragraph{Remark.} State composition is best suited for recall-heavy tasks (e.g., NIAH, BABILong) where the query requires retrieving facts distributed across the context. For tasks with strong ordering dependence, where the relative position of information across chunks matters, assigning sequential position ids across chunks rather than independent per-chunk ids may be preferable, at the cost of serializing chunk prefill. Separately, our Attention scaling uses the default YaRN temperature schedule; a more thorough exploration of temperature scaling (per-layer, per-head, or as a function of chunk count) may yield additional gains at higher extension factors.

\section{Gated KalmaNet: A Fading Memory SSM Layer with Variable Test-time Compute}
\label{sec:gka_main_ideas}
In this section we develop the three GKA contributions listed in \cref{sec:gka_at_scale}: the input-selectivity parameter $\beta_t$(\cref{subsec:beta_gating}), the variable test-time compute trade-off exposed by the Chebyshev iteration count $r$ (\cref{subsec:chebyshev}), and the symmetric tiled Triton kernel for GKA decoding (\cref{kernel}).

\subsection{Input Selectivity for GKA via $\beta_t$}
\label{subsec:beta_gating}
GKA implements the information form of the Kalman filter, maintaining two recurrent states: an information matrix $\mathbf{H}_t$ and an information state $\mathbf{U}_t$. In the original formulation of~\citet{gatedkalmanet}, both are decayed by a learned forgetting factor $\gamma_t \in [0,1]$ at each step and then updated with the current key-value pair:
\begin{align}
\mathbf{H}_t &= \gamma_t \cdot \mathbf{H}_{t-1} + \mathbf{k}_t \mathbf{k}_t^\top \quad &\text{(Information Matrix)} \nonumber \\
\mathbf{U}_t &= \gamma_t \cdot \mathbf{U}_{t-1} + \mathbf{v}_t \mathbf{k}_t^\top, \quad &\text{(Information State)} \label{eq:info_state_original}
\end{align}

\noindent $\gamma_t$ controls how quickly information about old tokens fade from the state, distant observations accumulate a product of $\gamma$ values that shrinks their effective contribution (to the current state at time $t$), so the filter naturally down-weights the remote past. However, $\gamma_t$ only governs how long information is retained. Every incoming token still writes its full (unmodulated) key-value outer product into the information Matrix and State, regardless of whether that token is relevant. This consumes capacity in both $\mathbf{H}_t$ and $\mathbf{U}_t$ until $\gamma_t$ eventually decays them away. 
To address this, we introduce an input-selectivity parameter $\beta_t \in [0,1]$ that modulates each token's contribution \textit{before} it enters the information Matrix and State:
\begin{align}
\mathbf{H}_t &= \gamma_t \cdot \mathbf{H}_{t-1} + \beta_t \mathbf{k}_t \mathbf{k}_t^\top \quad &\text{(Information Matrix)} \nonumber \\
\mathbf{U}_t &= \gamma_t \cdot \mathbf{U}_{t-1} + \beta_t \mathbf{v}_t \mathbf{k}_t^\top, \quad &\text{(Information State)} \label{eq:info_state_beta}
\end{align}

Here $\beta_t$ is computed as a linear projection of the input followed by a sigmoid. When $\beta_t \approx 0$, the token is effectively filtered out and the state remains unchanged; when $\beta_t \approx 1$, the token is fully incorporated. From the KF perspective, a small $\beta_t$ is equivalent to treating the observation as highly noisy, causing the filter to largely ignore it. This gives the model two complementary controls: $\gamma_t$ determines how long past information persists, while $\beta_t$ determines what enters the state in the first place. The original GKA formulation in \cref{eq:info_state_original} is recovered as the special case
$\beta_t \equiv 1$. Unless stated otherwise, all GKA layers in this work use the $\beta_t$-scaled update in \cref{eq:info_state_beta}.

\begin{figure}[h]
  \centering
  \includegraphics[width=0.85\linewidth]{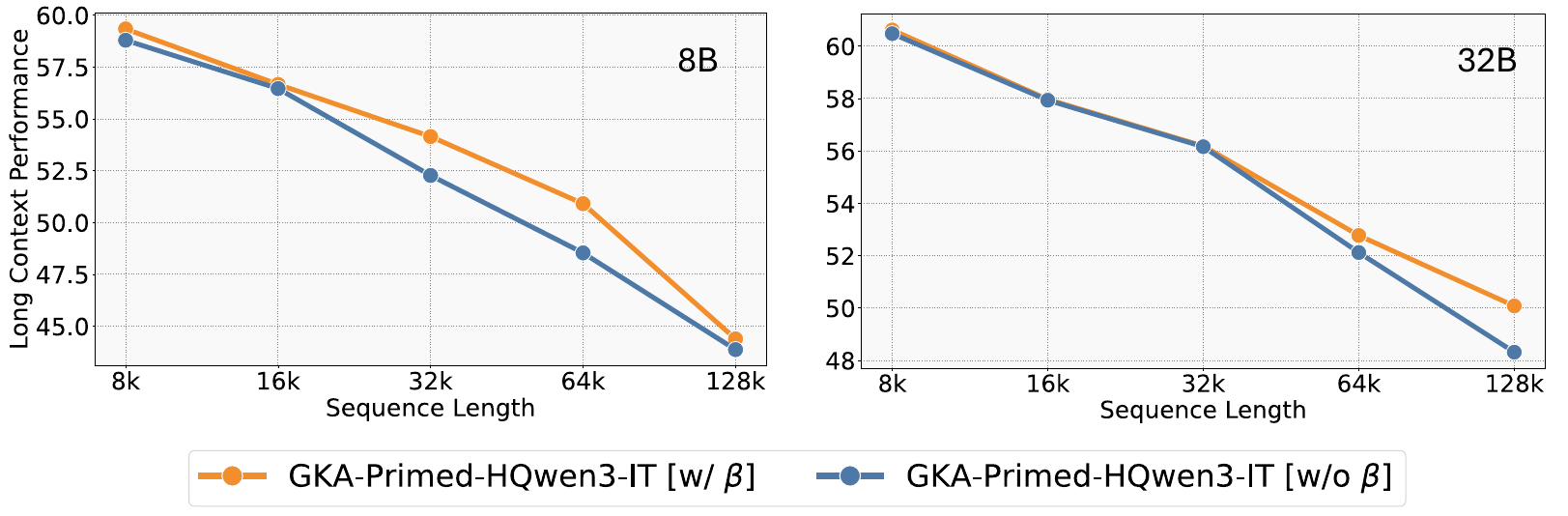}
  \caption{\textbf{Input selectivity through $\beta_t$ improves long context capabilities of GKA-Primed models.} Long-context performance, averaged over the tasks described in \cref{subsec: IT_results_long_ctx} for 8B and 32B GKA-Primed-HQwen3-IT models with and without input selectivity ($\beta_t$), across sequence lengths from 8k to 128k. Adding $\beta_t$ generally improves performance, with the gains becoming more pronounced at longer context lengths.}
  \label{fig:beta_ablation}
\end{figure}

\noindent\textbf{Experiments.} \cref{fig:beta_ablation} shows the average performance on long-context tasks described in \cref{subsec: IT_results_long_ctx} benchmark across sequence lengths ranging from 8k to 128k. For both the 8B and 32B GKA-Primed-HQwen3-IT models, adding $\beta_t$ generally improves performance, with the gains becoming more pronounced at longer contexts. This is consistent with our motivation: as sequence length grows, the SSM layers in the hybrid must summarize an increasingly large number of tokens into a fixed-dimensional state. Without input selectivity, irrelevant tokens accumulate in the state and degrade its quality. The $\beta_t$ parameter mitigates this by filtering out uninformative tokens before they enter the state, preserving capacity for future tokens. %

\subsection{Variable Test-time Compute with GKA}
\label{subsec:chebyshev}

\begin{figure}[h]
    \centering
    \includegraphics[width=0.85\textwidth]{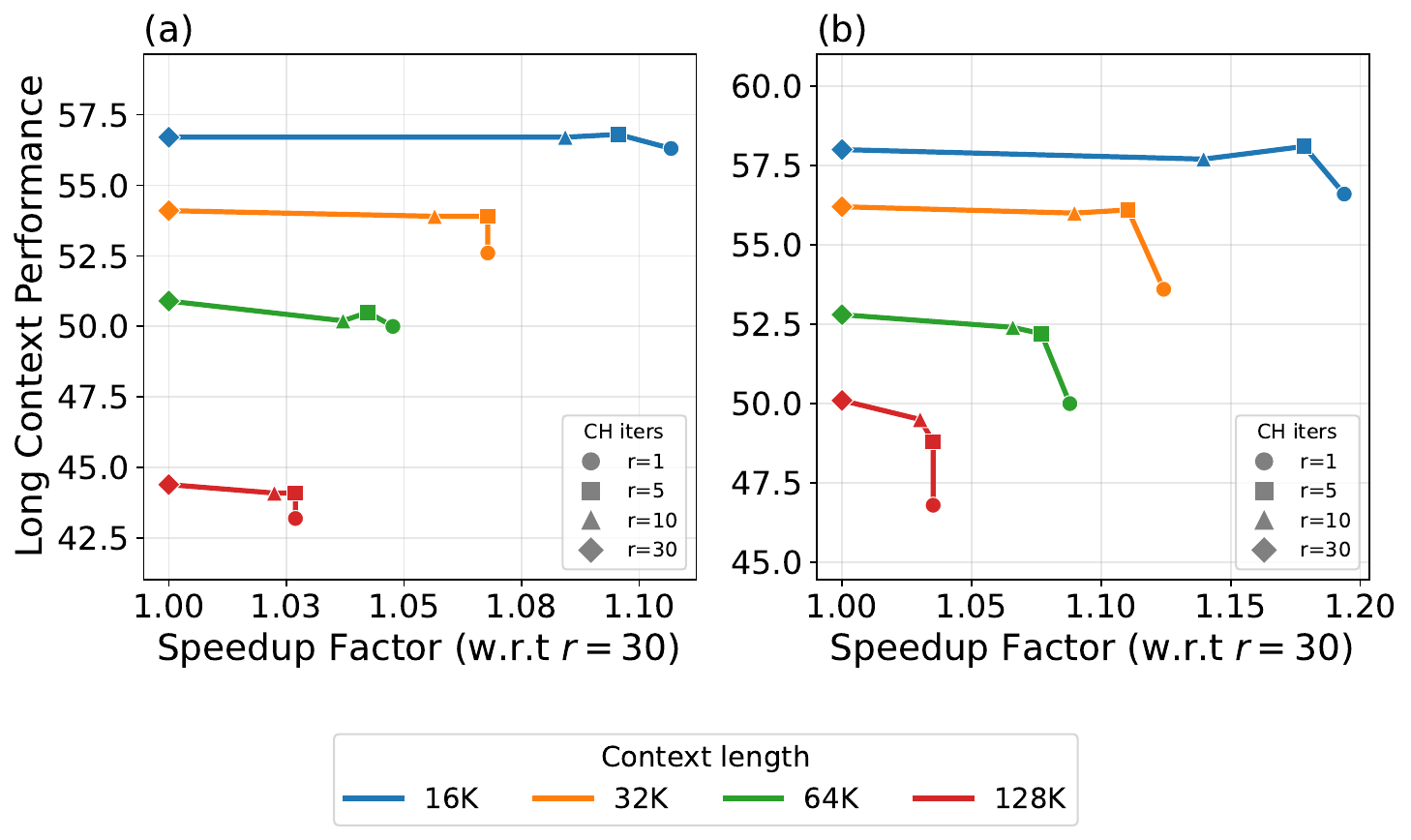}
    \caption{\textbf{GKA: Trading compute for speed at inference time}. (a): GKA-Primed-HQwen3-IT-8B model. (b): GKA-Primed-HQwen3-IT-32B model. Each curve represents a different context length, and marker shapes denote the number of Chebyshev iterations $r$. Reducing $r$ from its training value (30) increases throughput with only a modest drop in long-context performance. In both cases, a single trained model supports variable test-time compute, allowing practitioners to trade accuracy for efficiency by simply adjusting $r$.}
    \label{fig:accuracy_speedup}
\end{figure}

At each step $t$, GKA aims to compute its output $\mathbf{y_t}$ from its Information Matrix and State, see \cref{eq:info_state_beta} for definitions, by solving a linear system of equations:
\begin{equation}\label{eq:gka_solve}
(\mathbf{H}_t + \lambda_t \mathbf{I})\,\mathbf{x}_t = \mathbf{q}_t, \qquad \mathbf{y}_t = \mathbf{U}_t\,\mathbf{x}_t
\end{equation}

Solving \cref{eq:gka_solve} exactly (for instance via \texttt{torch.linalg.solve}) incurs $O(d_k^3)$ cost per time step, where $d_k$ is the dimension of the key vectors $\mathbf{k}_t$, and requires materializing every $\mathbf{H}_t$, both of which become prohibitive at scale. GKA circumvents this issue by employing \emph{Chebyshev Iteration} (CH), an iterative solver\footnote{See \cite{gatedkalmanet} for a comparison with other choices for iterative solvers like Conjugate Gradient.} that at each iteration performs only a matrix-vector product with $\mathbf{H}_t$ at $O(d_k^2)$ cost. Moreover, since this product can be expressed in terms of the underlying $\mathbf{k}_t$ factors, the full matrix never needs to be stored. 

Let $\mathbf{x}^{\textrm{CH}}_{t, r}$ denote the solution computed after $r$ iterations of CH at
time step $t$. The residual $\|\mathbf{x}_t - \mathbf{x}^{\textrm{CH}}_{t, r}\|_2$ between the
exact solution $\mathbf{x}_t$ of \cref{eq:gka_solve} and the CH approximation converges
exponentially in $r$~\citep{barret1994solvers}. Thus, increasing $r$ yields a more accurate
solution at the cost of additional compute.

In principle, one could train a separate model for each value of $r$ to explore this compute-accuracy tradeoff, but this is not practical to serve and expensive to train. Instead, we adopt the following strategy: Train with a single, heuristically chosen large $r$ so that the solver is ``well-converged" during training.\footnote{In practice, we select $r$ via a synthetic experiment: we draw query and key tensors from a standard Gaussian, form the regularized Information Matrix $(\mathbf{H}_t + \lambda_t \mathbf{I})$, and measure the CH residual as a function of $r$. For our chosen regularization strength $\lambda_t$, we then select $r$ such that the residual drops below $10^{-2}$, which is sufficient given the precision limits of BF16 arithmetic. See Appendix~G of \cite{gatedkalmanet} for details on the experiment setup.} At inference time, $r$ can then be freely reduced according to the deployment's compute budget, trading a small amount of approximation error for faster generation. Since no retraining is involved, a single trained GKA model serves the entire accuracy-efficiency trade-off dictated by $r$. Existing SSMs like Mamba-2 and GDN do not offer a comparable \textit{knob} at inference time to trade off accuracy for speed.\footnote{MesaNet~\citep{von2025mesanet}, a recently proposed SSM, also employs an iterative solver (Conjugate Gradient) in its state computation and in principle supports varying the iteration count at inference time. However, \cite{gatedkalmanet} report that the open-source MesaNet implementation is unstable under low-precision training, producing NaNs in BF16, which makes it difficult to scale in current LLM training pipelines.}
\\

\noindent\textbf{Experiments.} We evaluate this trade-off on our GKA-Primed-HQwen3-IT models at 8B and 32B scale. For each model, we vary $r \in \{1, 5, 10, 30\}$, where $r{=}30$ is the value used during training, and measure both downstream long-context performance and sustained decode throughput (tokens/s) on $8{\times}$H200 GPUs with tensor parallelism (TP=8). Throughput is measured during pure decode with the GPU saturated (i.e., sufficient batch size to fully utilize the GPU memory), and we report speedup relative to the $r{=}30$ baseline. Results are shown in \cref{fig:accuracy_speedup}. We make the following observations:
\begin{enumerate}
\item The decode speedup from reducing $r$ is most pronounced at shorter context lengths, where the Chebyshev solver constitutes a larger fraction of the per-step cost. At 16K context, the 32B model gains roughly 20\% throughput at $r{=}1$ relative to $r{=}30$, while the 8B model gains about 10\%. As context length grows, the Attention layers in the Hybrid models increasingly dominate the decode cost, and the relative benefit of fewer CH iterations diminishes.\footnote{We note that our Primed models use a 50\% Hybrid ratio; increasing the proportion of GKA layers would further amplify the speedup gains from reducing $r$.
}

\item The performance regression from lowering $r$ is mild at shorter contexts but grows with sequence length, consistent with the intuition that approximation error (the residual) from a low $r$ compounds over longer generation trajectories. At 16K, $r{=}5$ performs on par with $r{=}30$ for both model sizes. At 128K, however, the gap widens: the 8B model loses roughly 2.7\% when moving from $r{=}30$ to $r{=}1$, while the 32B model loses about 6.6\%, indicating that larger models are additionally more sensitive to the approximation quality of the CH solver.

\end{enumerate}

Together, these results show that the compute vs.~efficiency tradeoff depends on both model scale and context length. The key practical takeaway is that both models were trained once at $r{=}30$, yet at deployment one can cheaply sweep over $r$ on the downstream task of interest to find the right balance between model performance and speed without any retraining. For example, on the long-context tasks considered in \cref{fig:accuracy_speedup}, reducing $r$ from 30 to 5 at shorter contexts is nearly lossless in performance while providing a speedup. At longer contexts, a higher $r$ is preferable, and the additional cost is small since Attention layers dominate decode time at those lengths. One could also adapt $r$ dynamically within a single generation pass, starting with fewer iterations early in the sequence and increasing $r$ as the context grows; we leave this for future work.

\subsection{A Symmetric Tiled Triton Kernel for GKA Decoding} \label{kernel}

During decoding, GKA generates one token at a time, and the cost of each decoding step is dominated by four operations: (i) updating the Information Matrix $\mathbf{H}_t$ and Information State $\mathbf{U}_t$ from \cref{eq:info_state_beta}, (ii) computing the regularization strength $\lambda_t = \alpha \|\mathbf{H}_t\|_F$, which requires a full pass over $\mathbf{H}_t$ to evaluate its Frobenius norm, (iii) running $r$ CH iterations to solve $(\mathbf{H}_t + \lambda_t \mathbf{I})\mathbf{x}_t = \mathbf{q}_t$ (\cref{eq:gka_solve}), and (iv) computing the output $\mathbf{y}_t = \mathbf{U}_t \mathbf{x}_t$. Operations (i) and (iv) each traverse the full $d_v \times d_k$ state $\mathbf{U}_t$, while operations (i), (ii), and (iii) repeatedly traverse the full $d_k \times d_k$ state $\mathbf{H}_t$: once for the update, once for the norm, and $r$ times inside the CH loop. Decode throughput is thus bounded by how efficiently we can move these states between the GPU's High-Bandwidth Memory (HBM) and registers, and how much parallelism the GPU can extract while doing so. The original implementation from \cite{gatedkalmanet} loads $\mathbf{H}_t$ and $\mathbf{U}_t$ from HBM in one shot, which prevents the GPU from effectively overlapping compute with HBM transfers and leaves the Streaming Multiprocessors (SMs)\footnote{Streaming Multiprocessors are the parallel compute units on an NVIDIA GPU (an H200 has 132 of them, while an A100 has 108), each with its own cores, on-chip memory, and register file. Compute throughput on GPUs is bounded by how busy the SMs can be kept.} under-utilized. We address this with a \emph{symmetric tiled} Triton kernel that exploits two key properties of the GKA states: (i) all operations involving $\mathbf{H}_t$ and $\mathbf{U}_t$ are matrix-matrix or matrix-vector computations that are amenable to tiling, that is, we can process the state in smaller tiles instead of loading it in one shot; and (ii) symmetry of $\mathbf{H}_t$, since it is a cumulative weighted sum of rank-1 key outer products $\mathbf{k}_t \mathbf{k}_t^\top$, which allows us to read and write fewer tiles to and from HBM by only materializing the tiles belonging to the lower-triangular half.\\

\noindent \textbf{Tiling to reduce memory traffic and register pressure.} Triton kernels are launched as a collection of \textit{program instances} that called a \textit{grid}---for all of our kernels, we launch a grid of size $\texttt{batch\_size} \times \texttt{num\_heads}$, so that each program instance processes a single $\mathbf{H}_t \in \mathbf{R}^{d_k \times d_k}$ and $\mathbf{U}_t \in \mathbf{R}^{d_v \times d_k}$ for a single head of a single batched sequence. Each program instance is scheduled many-to-one to run on a single SM in the GPU. Each SM has a small, fixed-size register file (256 KiB per SM on an H200) that is shared across all program instances running on it concurrently. The more registers each program instance uses, the fewer the GPU can schedule side-by-side on the same SM, and the less opportunity the hardware has to hide memory-access latency by switching between them while is stalled waiting for data. Loading the full $\mathbf{H}_t$ and $\mathbf{U}_t$ into registers at once consumes a significant fraction of this budget: for our Primed Hybrid models with GKA, $d_k{=}128$ and $d_v{=}128$, so $\mathbf{H}_t$ and $\mathbf{U}_t$ are each $128 \times 128$ matrices in FP32 (64 KB each), and the intermediate results of the $\mathbf{H}_t$ update, Frobenius norm computation, CH matrix-vector operations, and output computation $\mathbf{y}_t = \mathbf{U}_t \mathbf{x}_t$ push register usage even higher. Partitioning $\mathbf{H}_t$ into smaller $b_k \times b_k$ tiles, and $\mathbf{U}_t$ into $b_v \times b_k$ tiles, solves this: each time $\mathbf{H}_t$ is used for computation, a limited number of tiles at a time are loaded from HBM or cache, consumed, and (when needed) written back, before more tiles are read into the same reused registers. With a reduced register footprint, the GPU can schedule more work concurrently on each SM, increasing parallelism and better overlapping compute with memory traffic. In practice, we have found that $b_k{=}64$ and $b_v{=}64$ gives the best trade-off with our Triton kernels, for our models on H200 GPUs: larger tiles amortize HBM loads over more compute, while smaller tiles free up enough registers for the GPU to run more work concurrently on each SM.\\

\noindent \textbf{Exploiting symmetry of $\mathbf{H}_t$.} Because $\mathbf{H}_t = \gamma_t \mathbf{H}_{t-1} + \beta_t \mathbf{k}_t \mathbf{k}_t^\top$ is symmetric, each strict upper-triangular tile is the transpose of its lower-triangular counterpart. Leveraging this, we never store the strict upper-triangular tiles to HBM, nor read them back: in the state-update loop, we skip their computation entirely and fold their contribution into the Frobenius norm by doubling the corresponding lower-triangular tile's accumulation; in the CH loop, we reconstruct each upper-triangular tile on the fly by transposing its lower-triangular mirror already in registers. The HBM savings depend on the tile size: for our Primed Hybrid models with $d_k{=}128$ and $b_k{=}64$, we skip 1 of the 4 tiles of $\mathbf{H}_t$, yielding a 25\% reduction in HBM traffic for $\mathbf{H}_t$; the savings grow toward 50\% as the ratio $d_k / b_k$ increases.\\

\begin{figure}[h]
\centering
\includegraphics[width=0.9\textwidth]{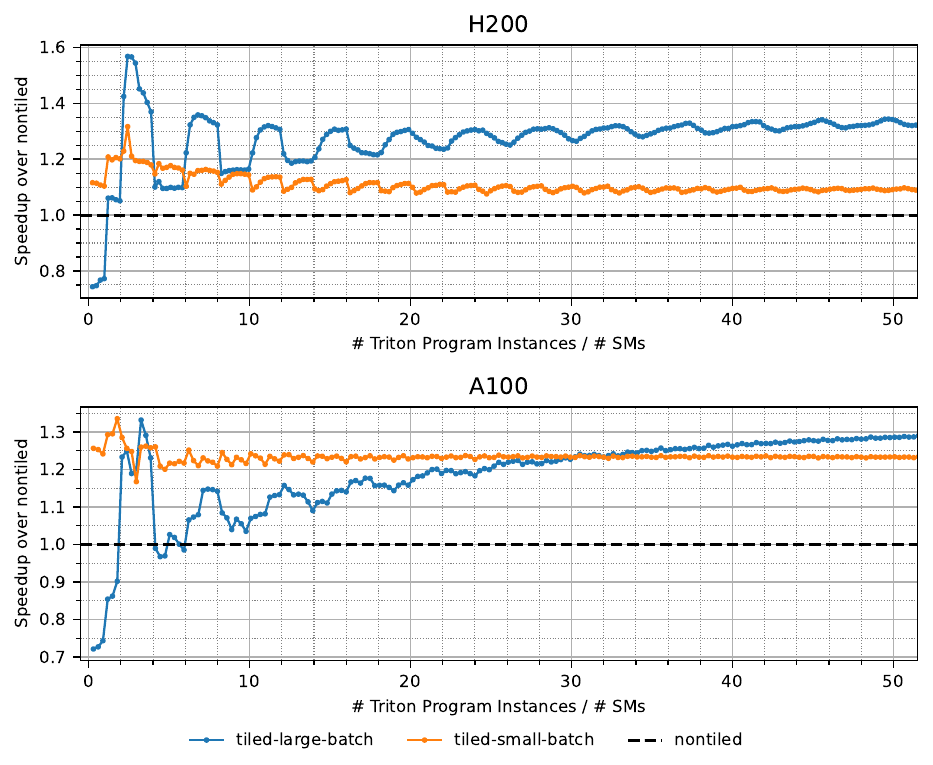}
\caption{\textbf{Speedup of symmetric tiled kernels over the non-tiled baseline} on H200 and A100 GPUs. The x-axis shows batch size normalized by the number of SMs on each GPU, so that the periodic pattern, caused by successive waves of Triton program instances filling and then exceeding SM capacity, aligns across GPU models. The \texttt{tiled\_small\_batch} variant provides a consistent speedup at all batch sizes by exploiting $\mathbf{H}_t$ symmetry while keeping all lower-triangular tiles in registers. The \texttt{tiled\_large\_batch} variant is slower for small batch sizes due to its per-iteration-reload overhead, but becomes the fastest kernel as the batch size grows large enough to benefit from the increased parallelism. This crossover occurs at a much smaller batch size on H200 than on A100.}
\label{fig:tiled-kernel-speedup}
\end{figure}

\noindent \textbf{Kernel variants for different batch sizes.} The tiled kernel with all of the optimizations described above (\texttt{tiled\_large\_batch}) reloads $\mathbf{H}_t$ tiles from HBM in each CH iteration, which reduces register usage for each program instance enough for the GPU to schedule twice as many concurrently per SM compared to a version that holds all lower-triangular tiles in registers throughout the CH loop. At large batch sizes this is a significant win, since there are enough program instances to run to fill all the SMs in the GPU with parallel work. However, at small batch sizes there are not enough program instances to benefit from the extra parallelism, the repeated HBM reloads become pure overhead, and the kernel runs slower than the \texttt{nontiled} baseline. Therefore, we implement a second tiled kernel variant (\texttt{tiled\_small\_batch}) that retains all lower-triangular $\mathbf{H}_t$ tiles in registers across CH iterations, avoiding the per-iteration-reload overhead, while still exploiting $\mathbf{H}_t$ symmetry and using the same storage layout that only loads and stores lower-triangular tiles of $\mathbf{H}_t$. This allows us to include both variants and dispatch to the appropriate kernel based on the number of program instances per GPU. \cref{fig:tiled-kernel-speedup} shows the speedup of each tiled kernel variant over the non-tiled baseline kernel implementation at different batch sizes, on both H200 and A100 GPU models. The \texttt{tiled\_large\_batch} variant becomes the fastest as the batch size increases on both GPU models, but on A100 it takes a much larger batch size for the benefits to outweigh the costs enough for it to run faster than the \texttt{tiled\_small\_batch} variant. \\

\noindent Having described the principles behind the tiled kernel, we describe the two loops that implement it (state update and Chebyshev iteration) in \cref{app:tiled-kernel-details}.\\

\section{Scalable Implementations}
\label{sec:scalable-impl}
\subsection{Priming Fused Architecture}
\label{subsec:fused-arch}

In Stage~1 of Priming (see \cref{subsec:stage1-theory}), the SSM layers in the Hybrid model are trained to replicate the source Transformer's end-to-end behavior on diverse textual data from different domains (see \cref{training_recipes:stage1}). This is achieved by treating the source Transformer as the teacher and distilling it into the Hybrid model via the objectives discussed in \cref{subsec:stage1-theory}. These require computing a simultaneous forward pass through both the source Transformer and the Hybrid model on the same input at every training step. A na\"ive implementation that loads both models into GPU memory as separate networks roughly doubles the parameter footprint, making Stage~1 infeasible for large-scale model priming: at 32B scale, for example, fitting two bf16 copies alone (without gradients) consumes about 128 GB of GPU memory. To reduce this footprint, we propose a \emph{fused} architecture that merges the two models into a single network, sharing every parameter common to both while keeping separate parallel layers only where they differ.

\paragraph{Architecture.}
The student Hybrid model shares all non-SSM layers (Multilayer Perceptrons, LayerNorms, LM head, token embeddings, and the retained Attention layers) with the source Transformer, and Stage~1 only trains the SSM-specific parameters. Leveraging this, we construct a fused architecture (illustrated in \cref{fig:fused-arch}) so that corresponding to each SSM layer in the Hybrid, the fused model maintains two parallel layers: a \emph{frozen} Attention layer (from the teacher) and a \emph{trainable} SSM layer (from the student), both operating on the same input. All other layers exist only once in memory, and only the SSM-specific parameters receive gradients.

\begin{figure}[h]
\centering
\includegraphics[width=0.8\textwidth]{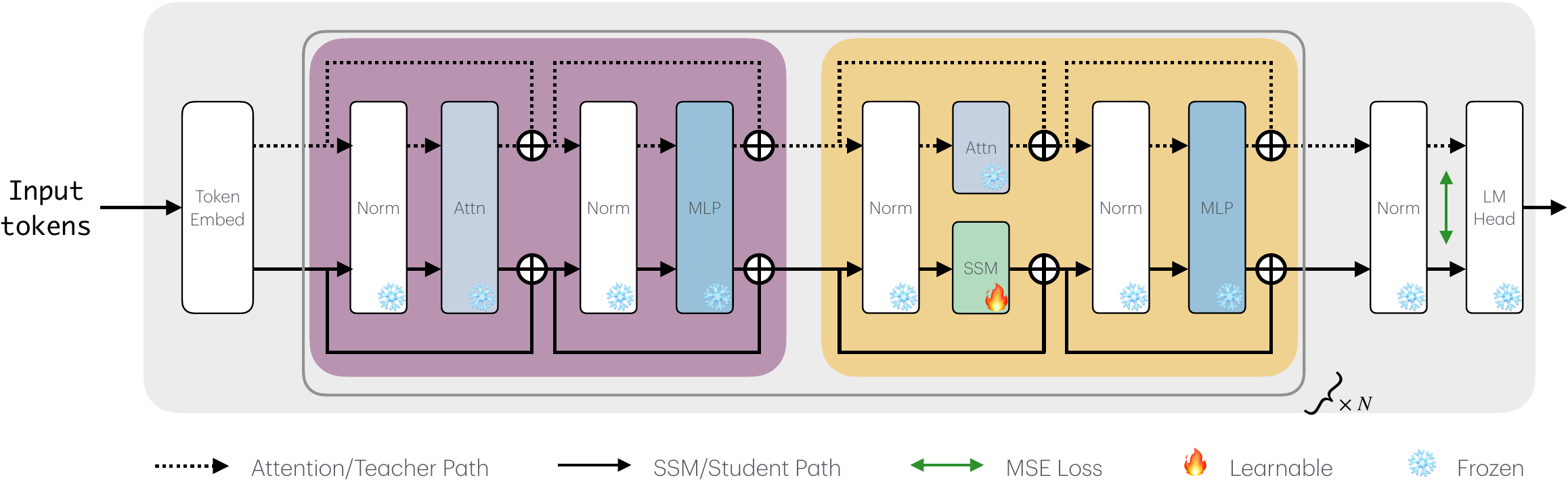}
\caption{\textbf{Fused teacher-student architecture used during Stage~1 of Priming.} For every SSM layer in the student Hybrid, the fused architecture maintains two parallel layers, corresponding to two parallel pathways for processing the input: the teacher's Attention layer (dashed, frozen) and the student's SSM layer (solid, learnable). These layers are highlighted in yellow in the figure. All other layers (highlighted in purple) are shared between the teacher and student and thus exist only once in memory. During Stage~1, the SSM layers are trained such that the student Hybrid model learns to match the output of the teacher source Transformer model. Afterwards, the Attention layers in the yellow region are discarded (\emph{unfused}), yielding a standalone Hybrid model.}
\label{fig:fused-arch}
\end{figure}

\paragraph{Memory savings.}
By sharing every non-SSM parameter between teacher and student, the fused architecture avoids a full duplicate of the shared weights that a separate teacher-student setup would carry. Concretely, it saves $2 \times (\mathcal{M} - \mathcal{A})$ bytes in bf16, where $\mathcal{M}$ is the total source Transformer parameter count and $\mathcal{A}$ is the parameter count of the Attention layers that correspond to SSM layers in the Hybrid model. For our Primed Hybrid models at a 50\% Hybrid ratio, this translates to a ${\sim}33\%$ reduction in total training memory. 

\paragraph{Unfusing.}
After Stage~1, only the Hybrid model is of interest for Stage 2 of Priming for task adaptation (\cref{subsec:stage2-theory}). We therefore discard the frozen teacher Attention layers that ran in parallel with the student's SSM layers, and Stage~2 proceeds on the student Hybrid alone.

\subsection{Sequence Parallelism for Long-context Training of Hybrid Models}
\label{app:sp_for_hybrids}
Training large-scale LLMs at long context lengths (128K+) poses a fundamental memory challenge: activation memory grows linearly with sequence length, quickly exhausting per-GPU capacity. While model-sharding strategies like Tensor Parallelism (TP), Pipeline Parallelism (PP) and Fully-sharded Data Parallelism (FSDP) reduce per-GPU model footprint, none of them shard along the sequence dimension. This in turn limits their ability to scale with increasing sequence length. Sequence Parallelism (SP) addresses this directly by partitioning the input across GPUs along the sequence dimension. This reduces each GPU's activation footprint proportionally to the SP degree, enabling scaling to arbitrarily long sequences given sufficient GPUs.

While SP for Transformers is well-studied \citep{grattafiori2024llama,jacobs2023deepspeed}, extending it to hybrid models introduces a design challenge. Attention layers require communication of all past tokens (the KV cache) across GPUs, naturally suiting collective operations like AllToAll (A2A)\footnote{as in Ulysses SP}. SSM layers, however, are fundamentally different as they compute a fixed-dimensional latent state that acts as a \textit{compressed summary} of the entire past. This makes collective communication unnecessary for SSMs and instead it suffices to pass the state across GPUs via Point-to-Point (P2P) communication protocols. An effective SP implementation for Hybrids must therefore accommodate both communication patterns within the same model.

In this work, we develop two SP algorithms for SSM layers that can readily be integrated with existing implementations for Attention to allow for long-context training of Hybrid models. Our contributions are as follows.
\begin{enumerate}
    \item A P2P algorithm for SP that is applicable to any SSM layer with linear recurrent dynamics. Each GPU processes its local sequence chunk independently, then corrects for missing context by passing the finite-sized recurrent state to the next GPU, resulting in communication volume that is independent of sequence length. We contrast this with existing A2A-based SP implementations for SSM layers in large-scale LLM training frameworks \citep{shoeybi2019megatron}, showcasing the efficacy of our approach.
    \item While effective, developing P2P algorithms requires expertise into the internals of these SSM layers, which are often implemented in Triton and not straightforward to modify. As a solution, we propose Universal SP (USP), inspired by Ulysses SP for Attention \citep{jacobs2023deepspeed}, that trades efficiency for versatility. The core idea is to use AllGather operations to reconstruct the full sequence on each GPU, allowing the SSM kernel to run unmodified on the complete sequence as a purely local computation, thereby requiring zero changes to the underlying Triton kernels. Moreover, unlike P2P, USP is applicable to any sequence-mixing layer (SSMs, LSTMs, Sparse Attention variants, etc.) without modification.
\end{enumerate}

We begin by describing how sequences are partitioned across GPUs under SP in \cref{subsec: how seq shard?}. We then present our P2P SP algorithm in \cref{subsec: p2p sp} and contrast it with existing A2A-based approaches in \cref{subsec: p2p vs a2a}. Finally, we introduce USP in \cref{sec: usp}, a drop-in alternative that enables rapid prototyping of long-context Hybrid models without requiring custom kernel implementations. We include both P2P-SP and USP in our Hybrid Model Factory release.\footnote{\url{https://github.com/awslabs/hybrid-model-factory/blob/main/docs/Inference.md}} Our P2P-SP supports GKA, GDN, Mamba2 and BMOJO-F layers.

\subsubsection{How are sequences sharded in SP?}
\label{subsec: how seq shard?}
Let $N_{\text{SP}}$ denote the SP size, that is, the number of GPUs across which the sequence is sharded. There exist two popular sharding patterns in the literature.

\begin{figure}[h]
  \centering
  \includegraphics[width=0.8\linewidth]{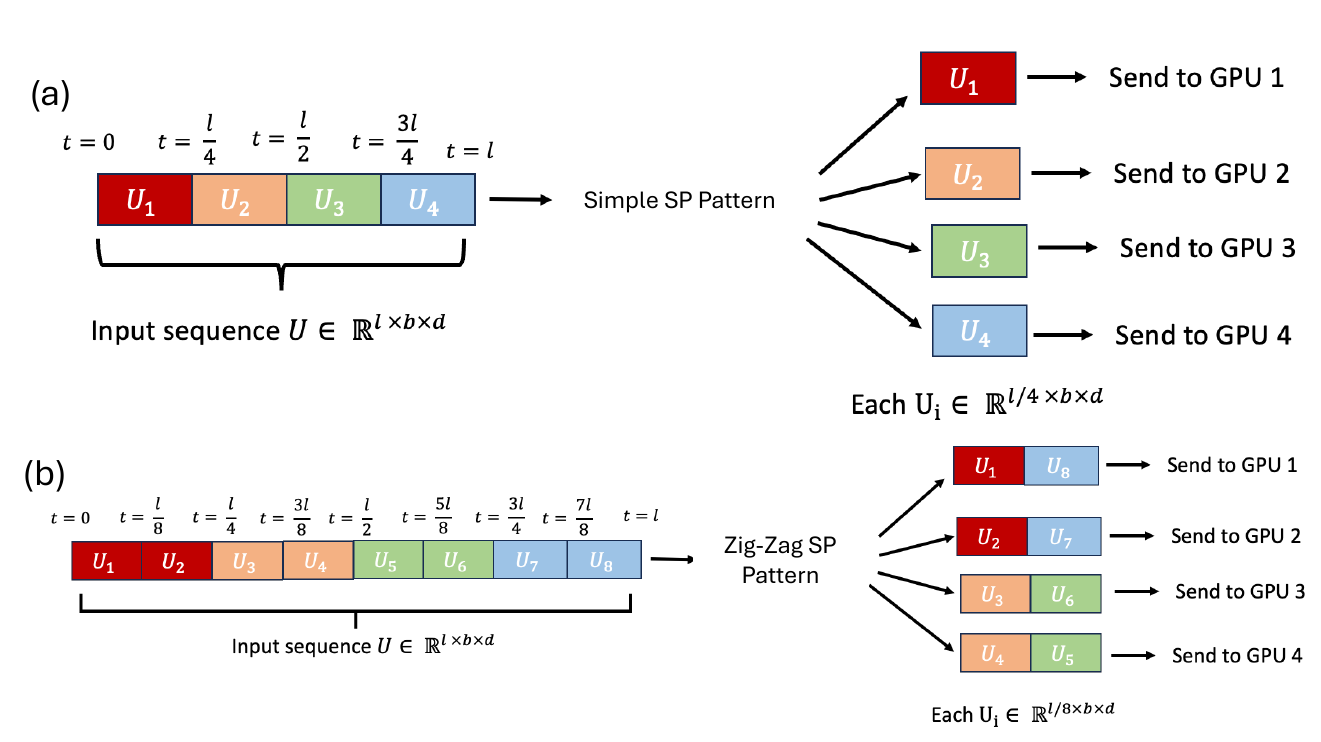}
  \caption{\textbf{SP sharding patterns for $N_{\text{SP}}=4$.} \textbf{(a)} Simple SP: the sequence is split into $N_{\text{SP}}$ contiguous chunks, one per GPU. \textbf{(b)} Zig-zag SP: the sequence is split into $2N_{\text{SP}}$ chunks and each GPU receives two discontiguous chunks (e.g., GPU 1 gets chunks 1 and 8), balancing the causal sequence mixing layers workload across ranks.}
  \label{fig:SP_sharding_patterns}
\end{figure}

\paragraph{Simple SP pattern.} Let the input sequence be of shape $[l, b, d]$, where $l$ is the sequence length, $b$ is the batch size and $d$ is the model dimension. In the simple SP pattern, the input sequence is sharded into $N_{\text{SP}}$ equal chunks of shape $[l/N_{\text{SP}}, b, d]$, with each chunk assigned to a unique GPU. \cref{fig:SP_sharding_patterns}a illustrates this for $N_{\text{SP}}=4$.

\paragraph{Zig-zag SP pattern.} The simple pattern leads to load imbalance for Attention layers during autoregressive training: queries at time $t$ must attend to all preceding keys, so GPUs holding later chunks require progressively more communication and computation. To address this, \citet{grattafiori2024llama} introduced the zig-zag pattern, now standard in frameworks such as Megatron-LM. The input is split into $2N_{\text{SP}}$ chunks numbered sequentially along the sequence dimension $l$, and GPU $i$ receives chunks $i$ and $2N_{\text{SP}} - 1 - i$, balancing the causal workload across ranks (\cref{fig:SP_sharding_patterns}).

Since SSM layers maintain a recurrent state that needs to be communicated from one chunk to the next, the sharding pattern is of little consequence in terms of load balancing. For simplicity, we will explain our approach following the Simple SP pattern (panel \textbf{(a)} in \cref{fig:SP_sharding_patterns}). Adapting to the zig-zag pattern is straightforward: the same P2P state passing applies, but each GPU must process its two discontiguous chunks in their original sequential order and exchange states with the appropriate neighboring rank accordingly.

\subsubsection{The P2P-SP algorithm for SSMs}
\label{subsec: p2p sp}
In modern SSM layers\footnote{specifically the ones of interest in this work, GKA, GDN and Mamba2.}, there exist two computations that operate across tokens and thus need special care to handle SP, 1D convolutions and the SSM recurrence. All other computations in an SSM layer, projections, activations and gating, do not have cross-token dependencies and thus do not need any inter-GPU communication. We describe how the P2P algorithm handles 1D convolutions and the SSM recurrence next.

\noindent \textbf{1D convolutions.} In SSMs, 1D convolutions are applied along the sequence dimension independently across each channel. Without loss of generality, we illustrate with a scalar sequence. 

Consider a scalar sequence $\bm{U} = \{u_i\}_{i=1}^{l}$ of length $l$ and a 1D convolution filter of size $d_{\text{conv}}$. The 1D convolution operation can be written as follows: for all $t \in [1, l]$,
\begin{equation}
y_t = \sum_{i=1}^{d_{\text{conv}}} w_i \cdot u_{t - d_{\text{conv}} + i},
\label{eq:sp_conv1d}
\end{equation}
where $w \in \mathbb{R}^{d_{\text{conv}}}$ is the convolution kernel and $y_t$ is the output at time $t$, with the convention that $u_j = 0$ for all $j \leq 0$.

Notice from \cref{eq:sp_conv1d} that the 1D convolution for any token in the input sequence depends on itself and the previous $d_{\text{conv}} - 1$ tokens. Under the simple SP pattern, computing the output for the first $d_{\text{conv}} - 1$ tokens on each GPU requires context from the preceding chunk. This is resolved via P2P communication: each GPU sends its last $d_{\text{conv}} - 1$ tokens to the next GPU in parallel, providing the necessary boundary context for \cref{eq:sp_conv1d}.

\noindent \textbf{SSM Recurrence.} SSMs that employ a input dependent linear recurrence for the state update and a linear readout function for the output are very amenable to sequence parallelization across GPUs. Specifically, these are SSMs whose recurrence can be written in the form
\begin{align}
\mathbf{S}_t &= \mathbf{S}_{t-1} \mathbf{A}_t + \bm v_t\mathbf{B}_t, \nonumber \\
\mathbf{y}_t &= \mathbf{S}_t \, \mathbf{q}_t, \label{eq:ssm_output_sp}
\end{align}
where $\mathbf{S}_t$ is a matrix-valued state; $\mathbf{A}_t$ and $\mathbf{B}_t$ are matrices computed as (potentially non-linear) functions of the input sequence up to time $t$; $\bm{v}_t$ and $\bm{q}_t$ are the value and query vectors at time $t$. Refer to \cref{tab:ssm-recurrences-main} for explicit instantiations of \cref{eq:ssm_output_sp} for specific SSM layers like GKA. At any timestep $n$, we can separate the contribution of the initial state $\mathbf{S}_0$ as
\begin{align}
\mathbf{S}_n &= \underbrace{\mathbf{S}_n\big|_{\mathbf{S}_0 = 0}}_{\text{zero-init state}} + \underbrace{\mathbf{S}_0  \mathbf{A}_{1:n}}_{\text{cross-chunk state correction term}}, \label{eq:ssm_state_decomp} \\
\mathbf{y}_n &= \underbrace{\mathbf{y}_n\big|_{\mathbf{S}_0 = 0}}_{\text{zero-init output}} + \underbrace{\left( \mathbf{S}_0  \mathbf{A}_{1:n} \right) \mathbf{q}_n}_{\text{cross-chunk output correction term}}, \label{eq:ssm_output_decomp}
\end{align}
where $\mathbf{A}_{1:n} = \mathbf{A}_1  \mathbf{A}_2  \cdots  \mathbf{A}_n$ is the cumulative state transition operator from time $1$ to $n$. The first terms are obtained by running the recurrence \cref{eq:ssm_output_sp} with $\mathbf{S}_0 = 0$.

This decomposition makes the adaptation to SP clear. Each GPU processes its local chunk assuming zero initial state. The GPUs then sequentially communicate their final states to the next rank, which uses the received state to compute the correction terms in \cref{eq:ssm_state_decomp}--\cref{eq:ssm_output_decomp}.

\subsubsection{Comparing P2P SP with A2A-based approaches}
\label{subsec: p2p vs a2a}
Scalable training frameworks like Megatron-LM employ A2A-based SP for SSM layers such as GDN and Mamba2\footnote{Megatron-Core, commit \texttt{20ba03f} on \texttt{main}. Note: Megatron-Core refers to the SP described in this section as \emph{context parallelism} to avoid confusion with its own \emph{sequence parallelism}, a distinct mode of parallelism that is out of scope here.}. Rather than exploiting the compact recurrent state of SSMs, this approach uses AllToAll collectives to redistribute tokens: the sequence dimension is gathered across GPUs while the head dimension is scattered, so that each GPU holds the full sequence for a subset of heads. The SSM kernel then runs on the complete sequence per head, and a reverse AllToAll restores the original sharding. The following are the advantages of our P2P approach over A2A SP for SSM layers.
\begin{itemize}
\item \textbf{Communication volume independent of sequence length.} P2P SP communicates only the fixed-size recurrent state and $d_{\text{conv}} - 1$ boundary tokens between adjacent ranks. A2A SP, in contrast, communicates all tokens across all ranks via AllToAll, with volume scaling linearly in the sequence length.

\item \textbf{No head-divisibility constraint.} A2A SP requires the number of heads to be divisible by $N_{\text{SP}}$, since heads are scattered across ranks. P2P SP operates on all heads locally and has no such constraint, making it compatible with any SP size.

\item \textbf{No restriction to intra-node SP.} Inter-node interconnects (e.g., InfiniBand) are typically an order of magnitude slower than intra-node links (NVLink). The high communication volume of A2A necessitates that all GPUs sharding a sequence in SP reside on the same node to avoid severe throughput degradation. For example, the total A2A volume at 128K with $N_{\text{SP}}=8$ for a single 
Qwen3.5-35B-A3B GDN layer's forward pass is ${\sim}4$\,GB. Over intra-node NVLink for DGX H200 nodes (${\sim}900$\,GB/s per GPU), this completes in under 5\,ms; over inter-node interconnects (${\sim}50$\,GB/s per GPU), the same transfer takes ${\sim}80$\,ms, making multi-node A2A SP impractical. P2P SP communicates only the ${\sim}1$MB recurrent state (for the same Qwen3.5 MoE model), making it viable across nodes.

\item \textbf{Better utilization of GPU compute.} SSMs process each head independently, naturally mapping to parallel execution across GPU Streaming Multiprocessors (SMs). A2A SP reduces the number of heads per GPU by a factor of $N_{\text{SP}}$, which can underutilize tensor cores — especially when $N_{\text{SP}}$ is large relative to the number of heads. P2P SP retains all heads on each GPU, preserving full parallelism across SMs.
\end{itemize}

\cref{tab:p2p_vs_a2a} reports the speedup (fwd + bwd) of P2P-SP over A2A-SP for a single GDN layer across a range of sequence lengths. In the intra-node setting (\cref{tab:p2p_vs_a2a}a), P2P-SP achieves a consistent $1.2$--$1.4\times$ speedup, with gains increasing as the sequence length grows and A2A-SP communication volume rises accordingly. The speedup peaks around 256K and slightly decreases at longer sequences. This can be attributed to the fact that P2P-SP incurs additional compute for the zero-init run and cross-chunk correction (\cref{eq:ssm_state_decomp}--\cref{eq:ssm_output_decomp}), whereas A2A-SP runs the SSM kernel in a single pass after communication. At longer sequences this compute overhead of P2P-SP grows and slightly narrows the gap. In the multi-node setting (\cref{tab:p2p_vs_a2a}b), the advantage of P2P-SP becomes more pronounced: at 128K, P2P-SP is nearly $2\times$ faster, reaching $2.3\times$ at 512K. This is a direct consequence of the inter-node bandwidth bottleneck discussed earlier: A2A-SP must transfer the full token tensor over the slower inter-node interconnect, while P2P-SP communicates only the compact recurrent state.

\begin{table}[t]
\centering
\caption{\textbf{Speedup (Fwd + Bwd) of P2P SP over A2A SP.} We measure the speedup for a single GDN layer (Qwen3.5-35B-A3B, $N_{\text{SP}}=8$) on H200 GPUs. P2P consistently outperforms A2A, with gains amplified in the multi-node setting where inter-node communication becomes the bottleneck. \textbf{(a)} Intra-node: 8 GPUs on a single P5en EC2 node. \textbf{(b)} Multi-node: 8 GPUs across 2 P5en EC2 nodes (4 GPUs each).}
\label{tab:p2p_vs_a2a}
\begin{subtable}[t]{0.45\linewidth}
\centering
\caption{Intra-node (1$\times$8 GPUs)}
\begin{tabular}{rc}
\toprule
Seq.\ Length & Speedup \\
\midrule
16K  & 1.04 \\
32K  & 1.23 \\
64K  & 1.29 \\
128K & 1.35 \\
256K & 1.37 \\
512K & 1.35 \\
1M   & 1.33 \\
\bottomrule
\end{tabular}
\end{subtable}
\hfill
\begin{subtable}[t]{0.45\linewidth}
\centering
\caption{Multi-node (2$\times$4 GPUs)}
\begin{tabular}{rc}
\toprule
Seq.\ Length & Speedup \\
\midrule
16K  & 1.14 \\
32K  & 1.73 \\
64K  & 2.15 \\
128K & 2.27 \\
256K & 2.33 \\
512K & 2.28 \\
1M   & 2.24 \\
\bottomrule
\end{tabular}
\end{subtable}
\end{table}

\paragraph{Why A2A is preferred over P2P for Attention but not for SSMs.} For Attention layers, Ulysses SP (A2A-based) is often more efficient than Ring Attention \citep{liu2023ring} (P2P-based). Ulysses performs two AllToAll collectives around the Attention kernel: the first redistributes the local $[l/N_{\text{SP}}, d]$ query, key, and value shards so that each GPU holds the full sequence for a $1/N_{\text{SP}}$ subset of heads, and the second reverses this redistribution on the Attention output to restore the original sequence sharding. The per-GPU communication volume for Ulysses is thus $O(l \cdot d / N_{\text{SP}})$, since every rank sends and receives the full token tensor restricted to its local head subset. Ring Attention, by contrast, keeps queries local and rotates key and value shards of size $O(l \cdot d / N_{\text{SP}})$ through $N_{\text{SP}} - 1$ sequential P2P hops on every rank, for a total communication volume of $O(l \cdot d)$ per GPU \citep{huggingface2025ulysses}. For SSMs, the situation reverses: the recurrent state is a fixed-size object independent of sequence length, so P2P-SP communicates only the state per GPU, whose size is constant in $l$. A2A-SP for SSMs still moves the full token tensor at $O(l \cdot d / N_{\text{SP}})$ per GPU and gains no benefit from the compact state. Thus, while A2A-SP is often preferred for Attention, P2P-SP is the natural fit for SSMs.

\subsubsection{Universal Sequence Parallel (USP) for Sequence-Mixing Layers}
\label{sec: usp}

While P2P SP is communication-efficient, it requires implementing custom autograd functions for state passing across GPUs, along with the mathematical decomposition in \cref{eq:ssm_state_decomp}--\cref{eq:ssm_output_decomp} tailored to each SSM variant. This demands familiarity with the internals of the SSM kernel, which is often implemented in Triton or CUDA and not straightforward to modify.

USP takes a simpler approach that works with any sequence-mixing layer out of the box. Instead of passing states between GPUs, each GPU gathers the full sequence, runs the layer's unmodified forward pass on it, then scatters the output back to the local chunk:
\begin{align}
\mathbf{x}_{\text{full}} &= \textsc{AllGather}(\mathbf{x}_{\text{local}}), \label{eq:usp_gather} \\
\mathbf{y}_{\text{full}} &= f(\mathbf{x}_{\text{full}}), \label{eq:usp_compute} \\
\mathbf{y}_{\text{local}} &= \textsc{Slice}(\mathbf{y}_{\text{full}}), \label{eq:usp_scatter}
\end{align}
where $f$ is the layer's original forward function. $\textsc{AllGather}$ is an NCCL collective that reconstructs the full sequence on each GPU from the local chunks, while $\textsc{Slice}$ extracts the local chunk corresponding to each GPU. This ensures that the full sequence is only materialized within the sequence-mixing layers; feedforward and normalization layers, which typically dominate the activation footprint, continue to operate on local chunks only.\footnote{This is because LLMs typically employ a 3-4x expansion factor in the intermediate activations of their FeedForward layers.} USP trades communication bandwidth and redundant compute for implementation simplicity. Every GPU runs the full SSM kernel on the entire sequence, so no state-passing logic, 1D convolution boundary handling, or output correction is needed. Moreover, USP is not restricted to SSMs, it is applicable to any sequence-mixing layer (GRUs, LSTMs, Sparse Attention variants, etc.) since it treats the layer as a black box. This makes USP ideal for rapid prototyping of long-context Hybrid models with novel sequence-mixing layers. 

\paragraph{Relation to A2A-SP.} USP can be viewed as a simplified variant of A2A-SP that forgoes head distribution: rather than scattering heads across GPUs via A2A collectives, USP replicates the full computation on every GPU via AllGather. This makes USP strictly less efficient in both communication and compute. However, A2A-SP assumes the layer computation decomposes cleanly along a head dimension and that the number of heads is divisible by $N_{\text{SP}}$. Moreover, because each GPU only holds a subset of heads after the A2A, any per-head parameters (e.g., the 1D convolution weights, gating vectors, and normalization parameters) must be sharded to match the local head subset. USP sidesteps all of this: it treats the layer as a black box and requires no modification to parameters or kernels, making it ideal for exploring novel sequence-mixing layers and architecture modifications.

\section{Training Recipes}
\label{sec:training_recipes}
This section describes the training recipes for our instruction-tuned and reasoning Hybrid models. All recipes share a common Stage~1 alignment phase (\cref{training_recipes:stage1}), after which the pipelines diverge into task-specific Stage~2 recipes: instruction tuning (\cref{training_recipes:it}) targets general-purpose long-context instruction following, while the reasoning recipe (\cref{subsec:long-context-reasoning}) targets chain-of-thought reasoning.

\subsection{Stage~1: Alignment}
\label{training_recipes:stage1}

Stage~0 (\cref{subsec:stage0-theory}) initializes our Hybrid model's SSM layers from the source Transformer's Attention weights, but this correspondence is approximate. 
Stage~1 closes the performance gap with the source Transformer via end-to-end MSE alignment (\cref{eq:e2e-loss}), using a memory-efficient fused architecture (\cref{subsec:fused-arch}) that avoids loading separate teacher and student models. In this stage, only the SSM-specific parameters are trained while all other parameters are kept frozen.

We run Stage~1 at 8K context length using a curated mix of approximately 40B tokens. Our training data is predominantly web text, supplemented with mathematical reasoning data and a smaller proportion of general instruction-following samples (\cref{fig:stage1_data}). This mix exposes the SSM layers to the source Transformer's behavior across diverse domains during alignment. Training sequences are constructed by packing multiple samples into 8K length sequences using \texttt{<|endoftext|>} as the delimiter between samples. \Cref{tab:stage1_hparams} summarizes the training hyperparameters used for this stage.

\begin{figure}[h]
\centering
\begin{subfigure}[b]{0.52\textwidth}
  \centering
  \includegraphics[width=\textwidth]{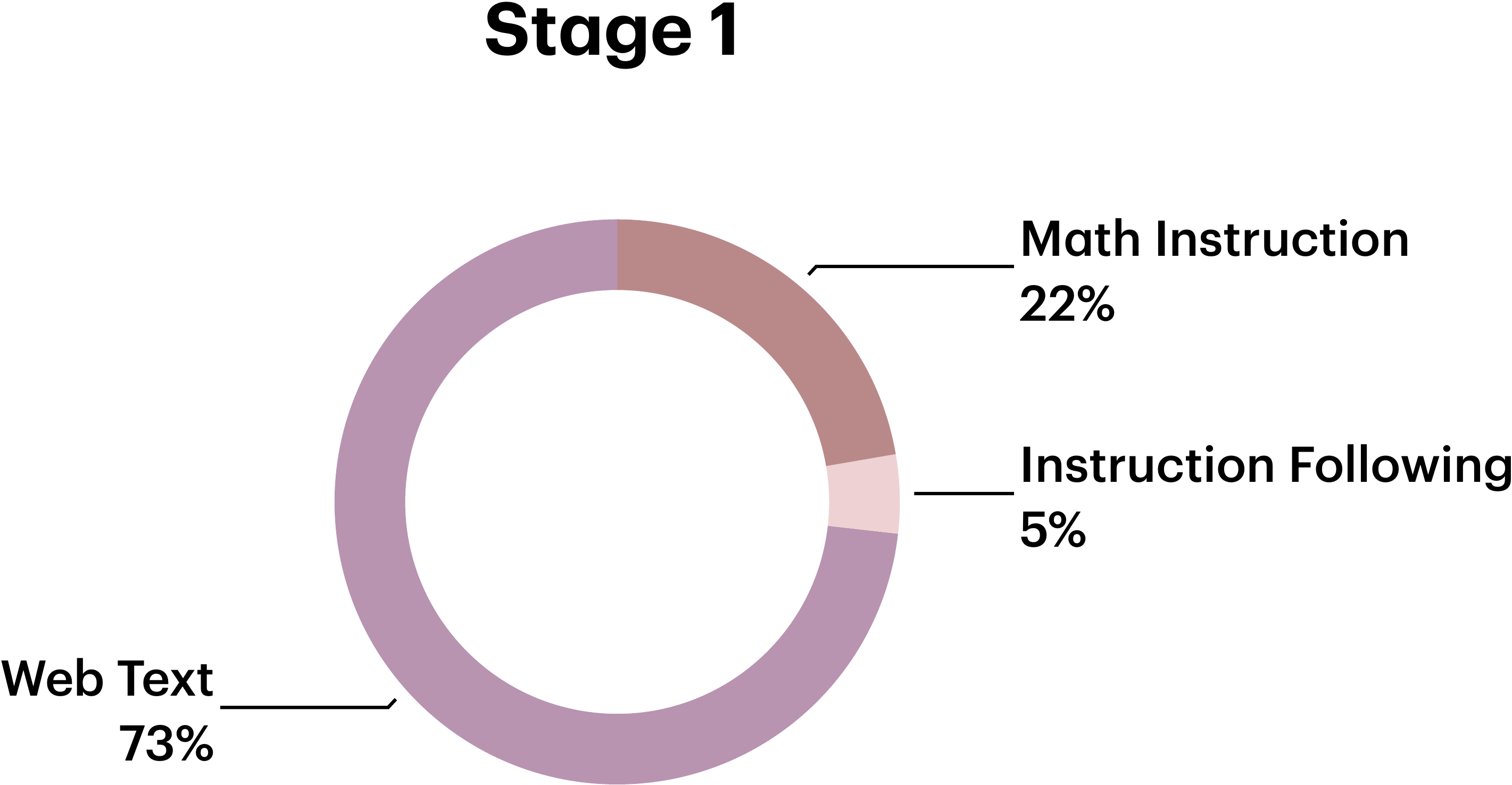}
  \caption{Stage~1 data distribution by domain.}
  \label{fig:stage1_data}
\end{subfigure}
\hfill
\begin{subfigure}[b]{0.44\textwidth}
  \centering
  \small
  \begin{tabular}{lc}
  \toprule
  Context length & 8K \\
  \# training tokens & ${\sim}$40B \\
  Global batch size (tokens) & 2M \\
  Learning rate & $1 \times 10^{-4}$ \\
  LR schedule & constant \\
  Warmup ratio & 4\% \\
  \bottomrule
  \end{tabular}
  \caption{Stage~1 training hyperparameters.}
  \label{tab:stage1_hparams}
\end{subfigure}
\caption{\textbf{Stage~1 alignment recipe.} Data composition by domain (left) and training configuration (right). Stage~1 models are trained on 40B tokens consisting of a mix of instruction-following and web text.}
\label{fig:stage1_recipe}
\end{figure}

\subsection{Instruction Tuning}
\label{training_recipes:it}

Starting from a Hybrid model that has undergone Stage~0 initialization (\cref{subsec:stage0-theory}) and Stage~1 alignment (\cref{training_recipes:stage1}), we describe our Stage 2 recipe for producing long-context instruction-tuned models. The recipe consists of two sequential phases, each initializing: (i) \textit{long-context} continued pre-traing followed by (ii) \textit{SFT} on instruction-tuning data.

\paragraph{Data construction.}
For the long-context phase, training sequences are constructed by packing multiple samples into fixed-length sequences using end-of-sequence tokens as delimiters.\footnote{For our Primed Hybrid models, this is the \texttt{<|endoftext|>} token from the Qwen3 tokenizer.}
Since individual documents can span >$100K$ tokens, we use best-fit packing~\citep{bestfitpacking} to minimize truncations and maximize utilization of the 128K context window. For the SFT phase, as is common practice, training sequences are constructed by first tokenizing individual samples using a chat template\footnote{We use the source Transformer's chat template for our experiments.} and then packing them into fixed-length training sequences using LlamaFactory's greedy packing algorithm \citep{zheng2024llamafactory}.

\begin{figure}[h]
\centering
\includegraphics[width=0.8\textwidth]{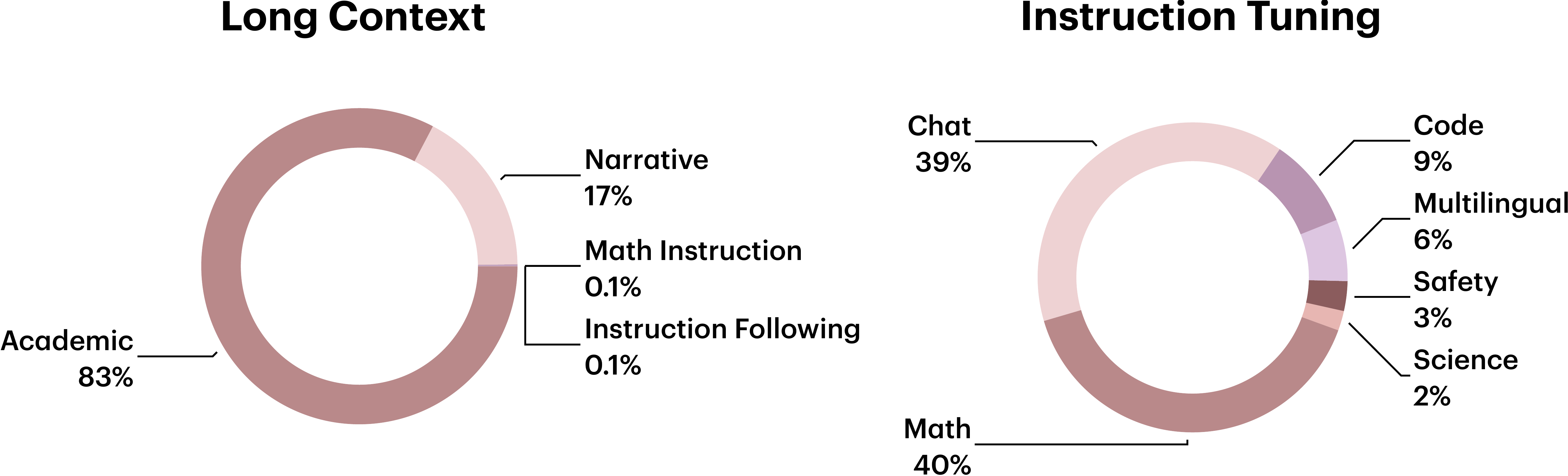}
\caption{\textbf{Instruction model training data distribution.}
Domain composition by token count for long-context continued pre-traing (left) and instruction-tuning SFT (right). ``Safety'' refers to samples that train appropriate refusal behavior and robustness to adversarial prompts.}
\label{fig:data_it_dist}
\end{figure}

\paragraph{Long-context continued pre-training.}
To enable a native long context (128K context tokens) for our Instruction-Tuned (IT) models, we perform continued pre-training on a mix of long documents and shorter instruction-following samples (\cref{fig:data_it_dist}, left). Long documents, distributed across GPUs via Sequence Parallelism (\cref{app:sp_for_hybrids}), are necessary to teach the model to attend over the full 128K window. 
In our early experiments, we found that packed short-context data alone was insufficient to induce long-range recall capabilities. Additionally, including a small amount of instruction-tuning data ($<$0.2\%) prevents degradation of short-context capabilities during context extension. To support the extended context length, we increase the RoPE base frequency from 1M (the source Transformer's default) to 5M~\citep{xiong2024effective}.

\paragraph{SFT for instruction tuning.}
This phase fine-tunes our long-context Hybrid models on instruction-following data spanning math, chat, code, safety, and multilingual data (\cref{fig:data_it_dist}, right). The training objective is next-token prediction, with loss computed only on response tokens (prompt tokens are masked). Our instruction-following data is predominantly short-context: over 95\% of samples have fewer than 32K tokens. This has implications for our Primed models. Carrying out SFT at 32K context length causes a 3--6\% regression (on 8B models) in long-context performance relative to the checkpoint obtained after the preceding long-context phase. On the other hand, SFT at 128K recovers long-context performance, but introduces a 2--2.5\% relative drop on short-context math, coding, and knowledge benchmarks. To retain the strengths of both, we average the parameters of the two checkpoints (model souping ~\citep{wortsman2022model}) to obtain our final 8B models. Interestingly, for our 32B Primed models, performing SFT at 128K on the same data is sufficient and introduces no regression on short-context tasks. We therefore train our 32B models at 128K only, without a separate 32K SFT run. \Cref{tab:it_hparams} summarizes the training hyperparameters for our long context and SFT phases.
\\~\\
The total token count across all stages of our long context IT pipeline is approximately 80B tokens.

\begin{table}[h]
\centering
\caption{\textbf{Instruction-tuning training hyperparameters.} Long-context continued pre-training of our Primed models share the same configuration across 8B and 32B scales. The SFT phase is split by scale: at 8B, we train separate 32K and 128K checkpoints that are later souped, while at 32B we train a single 128K run with a lower learning rate.}
\label{tab:it_hparams}
\small
\begin{tabular}{lccc}
\toprule
& \textbf{Long-Context} & \textbf{SFT (8B)} & \textbf{SFT (32B)} \\
\midrule
Context length       & 128K   & 32K / 128K & 128K \\
\# training tokens      & 28B & 1.6B & 1.6B \\
Global batch size (tokens) & 6M & 2M  & 4M \\
Learning rate        & $5 \times 10^{-5}$ & $5 \times 10^{-5}$ & $5 \times 10^{-6}$ \\
LR schedule          & cosine & linear & linear \\
Warmup ratio         & 5\%    & 3\%     & 3\% \\
\bottomrule
\end{tabular}
\end{table}

\subsection{Long-Context Reasoning}
\label{subsec:long-context-reasoning}

Starting from a Hybrid model that has undergone Stage~0 initialization (\cref{subsec:stage0-theory}) and Stage~1 alignment (\cref{training_recipes:stage1}), we describe our Stage~2 recipe for producing long-context reasoning models. The recipe consists of three sequential phases, each initialized from the final checkpoint of the previous phase: (i) \textit{short-context SFT} on reasoning data at 32K context length, (ii) \textit{context extension} to 128K on long-context data, and (iii) \textit{long instruction-alignment} to align the model to the instruct-format seen during deployment.  \cref{tab:reasoning_hparams} summarizes the training hyperparameters for each phase of the long-context reasoning pipeline.

\begin{table}[ht]
\centering
\caption{\textbf{Long-context reasoning training hyperparameters.} We report the hyperparameters for each phase of our multi-stage reasoning pipeline described in \cref{subsubsec:multi-stage-reasoning-training}.}
\label{tab:reasoning_hparams}
\small
\begin{tabular}{lccc}
\toprule
& \textbf{Short-ctx SFT} & \textbf{Context Extension} & \textbf{Long Instruct-Alignment} \\
\midrule
Context length       & 32K & 128K & 128K \\
\# training tokens   & 100B & 10B/25B\textsuperscript{$\dagger$} & 3B \\
Global batch size (tokens) & 4M & 6M  & 6M \\
Learning rate        & $5 \times 10^{-5}$ & $5 \times 10^{-5}$ & $5 \times 10^{-5}$ \\
LR schedule          & cosine & cosine & cosine \\
Warmup ratio/steps         & 10\%    & 200 steps  & 100 steps \\
\bottomrule
\end{tabular}
\smallskip
{\footnotesize \\
  \textsuperscript{$\dagger$}10B tokens for 32B model scale; 25B tokens for 8B model scale.
}
\end{table}

\subsubsection{Multi-stage Training For Long-context Reasoning}\label{subsubsec:multi-stage-reasoning-training}

\begin{figure}[ht]
\centering
\includegraphics[width=0.8\textwidth]{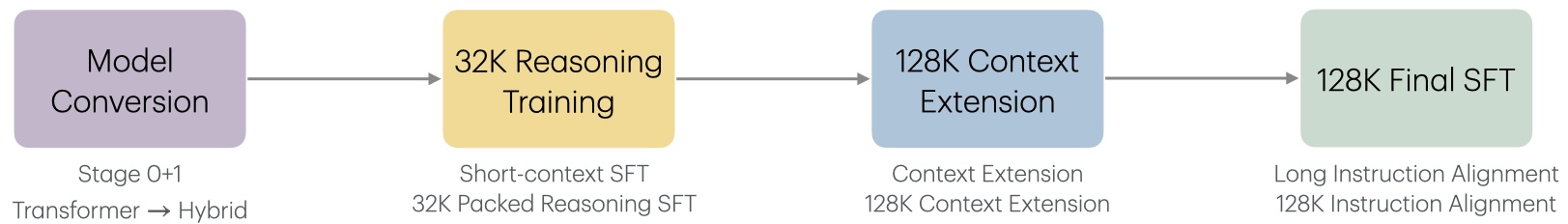}
\caption{\textbf{Our multi-stage long-context reasoning pipeline.}  Stages 0+1 of the Priming procedure allow our Hybrid architectures to sidestep pre-training costs by leveraging a pre-trained Transformer model. Stage 2 for reasoning consists of three phases. (i) \textit{Short-context SFT} consists of SFT on packed reasoning examples at 32K context. Next, (ii) a \textit{context-extension} phase where we train on a mix of reasoning samples and long-context documents to extend the context from 32K to 128K.  Finally, (iii) we perform a \textit{long instruction-alignment} phase where we train on packed reasoning samples at 128K context to align the model to the instruct-format seen during deployment.}
\label{fig:reasoning-recipe}
\end{figure}

Figure~\ref{fig:reasoning-recipe} depicts our multi-stage reasoning pipeline. Next we describe in detail the three phases for our Stage 2 recipe for reasoning models.

\paragraph{Short-context SFT.} We train on packed reasoning traces at 32K context, filtering conversations to $\leq$32K tokens to avoid truncation. Samples are concatenated using simple example packing with no Attention masking or state reset (for SSM layers) between sequences.

\paragraph{Context-extension.}  This phase extends the context length of our Primed Hybrid models from 32K to 128K. We train on a mixture of reasoning traces in chat SFT format and long documents in pre-training format\footnote{Pre-training format is tokenization of samples without any chat template, that is, no prompt-response chat structure or any special role tokens.} at 128K context length. The model is trained using a heterogeneous loss where tokens corresponding to reasoning traces receive the SFT loss by masking out prompt tokens, whereas the tokens belonging to the long documents obtain the next-token-prediction loss (without any masking, as is standard in pre-training). Mixing pre-training and chat SFT data requires separate end-of-sequence delimiters for the two data-types. Without this
separation, the model may output text that mimics pre-training documents or produce excessively long reasoning chains for simple problems. In our experiments, we use the  \texttt{<|endoftext|>} token from the Qwen3 tokenizer as the delimiter for pre-training documents and \texttt{<|im\_start|>}
\& \texttt{<|im\_end|>} delimiters (between turns) for SFT reasoning traces.

To ensure sufficient coverage of long samples in the training mix, longer reasoning traces are up-sampled via a length-weighted formula. Specifically, given $N$ different document sources each with a sampling probability $p_i$, let $\bar{L}_i$ denote the average conversation length for document source $i$ and let $\bar{L}$ denote the probability-weighted average conversation length across all sources.  We modify the original sampling probability $p_i$ of source $i$ to get modified sampling probability $\rho_i$:
\[ \rho_i = \frac{p_i\bar{L}_i }{ \sum_{j=1}^N p_j \bar{L}_j} = p_i \Big(\frac{\bar{L}_i}{\bar{L}}\Big). \]
This has the effect of up-weighting document sources that have higher average length $\bar{L}_i$ than the global average $\bar{L}$.

\paragraph{Long instruction alignment.}  Since the context-extension phase consists of a heterogeneous mix of chat SFT data and pre-training data, we perform a final training stage to align the model to the chat format it will be required to have during deployment.  In this final long-instruction-alignment phase, we perform SFT on packed reasoning traces at 128K context.
\\~\\
The total token count across all stages of our reasoning pipeline is approximately 150B tokens.

\subsubsection{Reasoning Data Mix Construction}
\label{subsubsec:data_mix}

\begin{figure}[ht]
\centering
\includegraphics[width=0.8\textwidth]{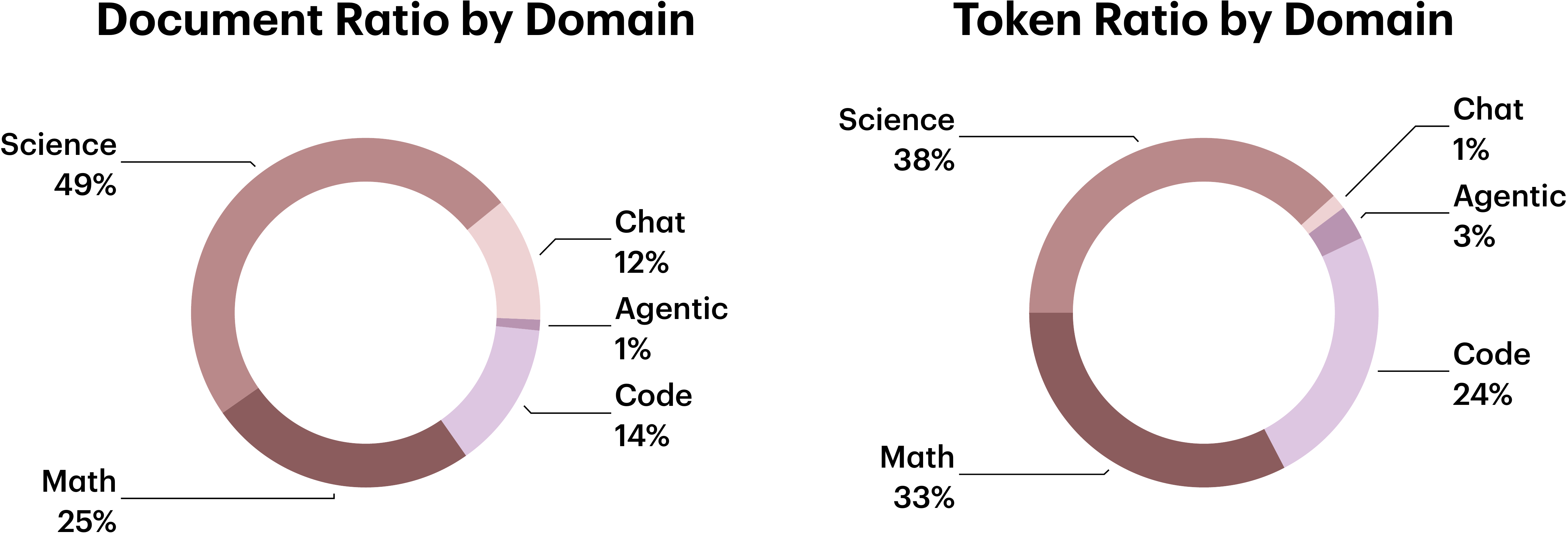}
\caption{We depict the document and token ratios of our reasoning SFT data mixture by domain.}
\label{fig:reasoning-sampling-ratios}
\end{figure}

Our reasoning traces consist of curated chain-of-thought traces across various domains, including math, coding, agentic/tool-calling, science, and chat/instruct.  The document and token ratios by domain are depicted in \cref{fig:reasoning-sampling-ratios}.  Constructing a data mixture of high-quality reasoning traces requires several filtering and balancing steps, each addressing a specific failure mode. We describe these below.

\paragraph{Filtering out degenerate reasoning traces}
Degenerate reasoning traces like loops, stutters and repetitive chains of thought that would otherwise decrease the signal-to-noise ratio in training are filtered our using repeated n-gram filtering within a sliding window.

\paragraph{Controlling the tool use frequency}
In the final data mix, $70$--$80\%$ of samples with tool definitions actually
invoke tools.  Our raw reasoning math data contained Python tool usage without
corresponding definitions in the system prompt. These were corrected by inserting appropriate tool definitions, not doing so causes either tool hallucination or tool use in situations where it is not appropriate. 

\paragraph{Invalid Python use filtering.} A small fraction of math-domain samples ($\sim$0.89\%) contained code blocks despite no code-execution tool being defined in the conversation. This caused the model to answer math problems with \texttt{print()} statements rather than mathematical reasoning. These erroneous samples were filtered out.

\paragraph{Random packing over greedy packing.}
Common implementations of greedy (best-fit) packing bias longer reasoning traces toward the beginning
of the context window (on average, the first reasoning trace in a sample was $\sim$1K tokens longer).  Random packing
removes this positional bias, which is important for heterogeneous data mixes
where trace lengths vary widely.

\paragraph{Packed reasoning is insufficient for long-context.}

Training on packed reasoning samples at 128K context, where individual
reasoning traces max out at ${\sim}32$K tokens, causes long-context performance to severely degrade beyond 32K.  Despite being exposed to 128K tokens, the model has no incentive to retain memory in the state across document boundaries when trained in this fashion.  Consequentially, long documents and reasoning traces must be included in the training mix to induce long-context recall capabilities.  This finding motivates the dedicated context extension phase described in \cref{subsubsec:multi-stage-reasoning-training}.

\section{Inference and Deployment}
\label{sec:inference}

Hybrid models have a fundamentally different memory structure from Transformers.  A Transformer's sequence-mixing layers are exclusively eidetic: every layer maintains a KV cache that stores each token's representation exactly, at a cost that grows linearly with context length.  A Hybrid model mixes eidetic layers (Attention) with fading memory layers (SSMs), whose recurrent state is fixed-size regardless of context length (\cref{sec:hybrid-arch}).  This structural difference induces different per-layer costs with respect to context length, and its impact on inference efficiency becomes concrete when serving a Primed Hybrid model alongside its source Transformer.  We use vLLM~\citep{vllm} as our production inference engine: its plugin system allows us to register custom model architectures without forking the codebase, which is essential for supporting the novel layer types introduced by Priming while continuing to benefit from upstream improvements.\footnote{Our vLLM plugin registers Primed Hybrid architectures via vLLM's entry-point system, preserving continuous batching, chunked prefill, prefix caching, and tensor parallelism over both Attention and SSM layers.}

The dynamics differ between the two phases of autoregressive serving. Prefill, processing the input prompt, is primarily compute-bound; our Primed Hybrid models see moderate gains here by replacing quadratic Attention computations with linear-cost SSM recurrences. Decode, generating tokens one at a time, is primarily memory-bandwidth-bound, dominated by reading KV cache entries from GPU memory; this is where the Primed Hybrid's reduced memory footprint
translates most directly into throughput. The decode phase dominates wall-clock time in the workloads we
target. In mathematical reasoning, long chains of thought produce far more output tokens than the input prompt.  In agentic coding, conversations are multi-turn with large tool outputs, but prefix caching ensures that only newly appended tokens require prefilling each turn; cumulative decode, the total time spent generating tokens across a session, far exceeds cumulative prefill.  As a reference point, a single full 128K-context prefill on Qwen3-32B costs roughly the same wall-clock time as generating just ${\sim}1{,}500$ tokens at that context length, a small fraction of the cumulative decode in a typical reasoning trace or multi-turn agentic session.

\paragraph{A model of decode throughput.}
One of the principal benefits of Hybrid models is their inference efficiency, inherited directly from the SSM memory layers described in \cref{subsec:ssm-zoo}.  Before deciding which SSM type to instantiate and which Hybrid ratio to target via Priming, it is useful to have a simple predictive model of the expected throughput improvement.  We develop such a model under KV-cache saturation, the production serving regime in which the GPU's available memory is fully occupied by KV cache entries, so that serving additional requests requires evicting existing state or queuing, and show that it explains the pattern of speedups reported in \cref{tab:decode_32b}--\cref{tab:decode_8b}.

Consider a Transformer serving requests at KV-cache saturation on a fixed GPU: $b$~concurrent sequences, each with context length~$l$, and~$n$ Attention layers.  Each decode step produces one token per sequence, so the model produces~$b$ tokens per forward pass.  At saturation the KV cache is proportional to~$b \cdot n \cdot l$, and the product~$bl$ is near constant, doubling the context length requires approximately halving the batch size.

Let~$r$ denote the Hybrid ratio, i.e.\ the fraction of the~$n$ sequence-mixing layers being SSM layers.  Our Primed Hybrid models have~$rn$ SSM layers whose recurrent state is fixed-size (i.e.~independent of~$l$) and~$(1{-}r)n$ Attention layers. At the Transformer's saturated batch size~$b$, the Hybrid's KV cache occupancy is only~$b \cdot (1{-}r)n \cdot l$, a fraction~$(1{-}r)$ of the total budget, leaving a fraction~$r$ unused.  Filling this headroom allows the Hybrid to serve a larger batch~${\approx}\,b/(1{-}r)$ on identical hardware.  The \emph{aggregate} KV cache traffic per decode step remains approximately unchanged:
\begin{equation}
  \underbrace{\frac{b}{1-r} \cdot (1{-}r)n \cdot l}_{\text{Hybrid}}
  \;\approx\; \underbrace{b \cdot n \cdot l}_{\text{Transformer}}
  \label{eq:kv-traffic}
\end{equation}
The approximation neglects the cost of reading each SSM layer's fixed-size recurrent state, equivalent to at most~$2$K tokens of KV cache (for our Primed Hybrid models), which becomes negligible at modest context lengths. While the aggregate memory utilization is approximately equal, the two architectures distribute this traffic differently across layers: the Transformer makes~$n$ Attention calls each serving batch~$b$, while the Primed Hybrid makes~$(1{-}r)n$ Attention calls each serving the larger batch.

A decode forward pass comprises three main cost components.  Writing $b' = b/(1{-}r)$ for the Primed Hybrid batch size:
\begin{enumerate}[nosep]
  \item \emph{Attention layers}, dominated by KV cache reads;
    per-layer cost depends on both batch size and context length.  The
    Transformer has~$n$ such layers, our Primed Hybrid has~$(1{-}r)n$.
  \item \emph{MLP layers}, dominated by weight reads; cost scales
    with batch size, independent of context length.  All~$n$ MLP layers
    are present in both architectures.
  \item \emph{SSM layers}, state update cost scales with batch size,
    independent of context length.  Present only in our Primed Hybrid
    ($rn$~layers).
\end{enumerate}
Let~$t_{\mathrm{attn}}(b, l)$, $t_{\mathrm{mlp}}(b)$, and~$t_{\mathrm{ssm}}(B)$ denote the per-layer wall-clock time of a single Attention, MLP, and SSM layer respectively.  The aggregate forward-pass times for the source Transformer ($T_{\mathrm{ST}}$) and our Primed Hybrid ($T_{\mathrm{PH}}$) are then:
\begin{align}
  T_{\mathrm{ST}}   &= n\,t_{\mathrm{attn}}(b, l)
                        + n\,t_{\mathrm{mlp}}(b),
                        \label{eq:T-st} \\
  T_{\mathrm{PH}} &= (1{-}r)n\,t_{\mathrm{attn}}(b', l)
                        + n\,t_{\mathrm{mlp}}(b')
                        + rn\,t_{\mathrm{ssm}}(b').
                        \label{eq:T-ph}
\end{align}
The Transformer produces~$b$ tokens per step and our Primed Hybrid produces~$b'$.  The decode throughput ratio is therefore
\begin{equation}
  R \;=\;
  \frac{b'}{b}\,\frac{T_{\mathrm{ST}}}{T_{\mathrm{PH}}}
  \;\approx\;
  \frac{1}{1{-}r}\,
  \frac{t_{\mathrm{attn}}(b, l) + t_{\mathrm{mlp}}(b)}
       {(1{-}r)t_{\mathrm{attn}}(b', l) + t_{\mathrm{mlp}}(b') + r t_{\mathrm{ssm}}(b')}.
  \label{eq:throughput-ratio}
\end{equation}
\paragraph{Context-length dependence.}
At large~$l$, each Attention layer reads a KV cache proportional to~$l$, while MLP weight reads and SSM state updates remain fixed regardless of~$l$.  The per layer Attention cost~$t_{\mathrm{attn}}$ therefore dominates the forward pass, and the MLP and SSM contributions become negligible.  In this regime the ratio simplifies to
\begin{equation}
  R \;\xrightarrow{}\;
  \frac{1}{1{-}r}\,
  \frac{t_{\mathrm{attn}}(b, l)}
       {(1{-}r)t_{\mathrm{attn}}(b', l)}
  \;=\;
  \frac{1}{(1{-}r)^2}\,
  \frac{t_{\mathrm{attn}}(b, l)}{t_{\mathrm{attn}}(b', l)}.
  \label{eq:throughput-limit}
\end{equation}
The limiting throughput ratio is determined entirely by how the single-layer Attention forward pass scales with batch size. If~$t_{\mathrm{attn}}$ scales linearly in~$b$ (scaling the batch by $1/(1{-}r)$ scales the time equally), then~$R \to 1/(1{-}r)$.  If the scaling is sub-linear, the Attention kernel becomes more efficient at larger batch sizes, for example through better HBM bandwidth utilization, then~$t_{\mathrm{attn}}(b', l) < t_{\mathrm{attn}}(B, L)/(1{-}r)$ and~$R > 1/(1{-}r)$.  Conversely, super-linear scaling would yield~$R < 1/(1{-}r)$. All Primed Hybrid models in this paper use $r = 1/2$, so $b' \approx 2b$ and the linear-scaling baseline is $R \to 1/(1{-}r)=2$. Our implementation uses FlashAttention-3~\citep{flashattention3} for all Attention layers; the empirical observation of~$R = 2.26\times$ at 128K indicates sub-linear scaling of this kernel at the operating points tested.

At short~$l$, the batch~$b$ is large and the MLP and SSM terms are no longer negligible, they add cost to~$T_{\mathrm{PH}}$ without a corresponding increase in~$T_{\mathrm{ST}}$, pulling~$R$ below the Attention-determined limit.  This explains the observed pattern: $R = 1.86\times$ at 16K growing to~$R = 2.26\times$ at 128K for our 8B GKA-Primed-HQwen3 model (\cref{tab:decode_8b}).  At very short contexts ($l \lesssim 4$K), $R$ can drop below~$1$ due to two compounding effects: the SSM layers carry fixed-size recurrent state equivalent to at most $2$K tokens of KV cache, which at short~$l$ erodes the batch-size advantage; and FlashAttention-3's highly optimized short-context kernels give the source Transformer a lower per-token decode cost at these short contexts.  In practice this regime is negligible: the workloads we target quickly accumulate context well beyond $4$K, after which the Primed Hybrid's throughput advantage takes hold and grows with context length.

We note that these gains are a function of the Attention kernel implementations and hardware.  Future kernel improvements on efficiency at large batch sizes, strengthening the sub-linear trend, would directly increase the Primed Hybrid throughput advantage.

\subsection{Decode Throughput}
\label{subsec:decode-throughput}

\paragraph{32B models.}
\cref{tab:decode_32b} reports sustained decode throughput on
$8\times$~H200 GPUs (TP=8) under KV-cache saturation. GDN-Primed-HQwen3 reaches
$2.11\times$ at 128K, and GKA-Primed-HQwen3, whose symmetric-tiled decode kernel is detailed in \cref{kernel}, reaches $2.02\times$ at 128K at the default
\texttt{num\_iter}=30, and $2.06\times$ at \texttt{num\_iter}=10.  GKA's throughput can be further modulated via the variable test-time compute tradeoff described in \cref{subsec:chebyshev}.

\begin{table}[t]
\centering
\caption{%
  Measured decode throughput (tokens/s) for our 32B Primed Hybrid models on $8\times$ H200
  (TP=8).  Speedup over Qwen3-32B in parentheses.
}
\label{tab:decode_32b}
\small
\begin{tabular}{lcccc}
\toprule
\textbf{Model} & \textbf{16K} & \textbf{32K} & \textbf{64K} & \textbf{128K} \\
\midrule
Source Transformer (Qwen3-32B) & 5{,}299 & 2{,}865 & 1{,}308 & 586 \\
GKA-Primed-HQwen3 (\texttt{num\_iter}=30) & 7{,}172\;(1.35$\times$) & 4{,}253\;(1.48$\times$) & 2{,}448\;(1.87$\times$) & 1{,}187\;(2.02$\times$) \\
GKA-Primed-HQwen3 (\texttt{num\_iter}=10) & 7{,}922\;(1.50$\times$) & 4{,}565\;(1.59$\times$) & 2{,}556\;(1.95$\times$) & 1{,}206\;(2.06$\times$) \\
GDN-Primed-HQwen3 & 8{,}133\;(1.53$\times$) & 4{,}876\;(1.70$\times$) & 2{,}688\;(2.06$\times$) & 1{,}238\;(2.11$\times$) \\
\bottomrule
\end{tabular}
\end{table}

\paragraph{8B models.}
\cref{tab:decode_8b} reports the same benchmark at 8B scale.  All three pure-SSM-based Primed Hybrid types, GKA, GDN, and Mamba2, show the pattern predicted by Equation~\ref{eq:throughput-ratio}: speedups grow monotonically with context length, from $1.86$--$1.95\times$ at 16K to $2.26$--$2.33\times$ at 128K.  GKA-Primed-HQwen3 is approximately $5$--$10\%$ slower than GDN- and Mamba2-based Hybrids across all context lengths, reflecting the additional compute of the Chebyshev solve within each GKA layer; reducing \texttt{num\_iter} narrows this gap (\cref{subsec:chebyshev}).

\begin{table}[t]
\centering
\caption{%
  Measured decode throughput (tokens/s) for our 8B Primed Hybrid models on $8\times$ H200 (TP=8) under KV-cache saturation.  Speedup over the Qwen3-8B Transformer baseline shown in parentheses.
}
\label{tab:decode_8b}
\small
\begin{tabular}{lcccc}
\toprule
\textbf{Model} & \textbf{16K} & \textbf{32K} & \textbf{64K} & \textbf{128K} \\
\midrule
Source Transformer (Qwen3-8B) & 8{,}951 & 5{,}174 & 2{,}740 & 1{,}227 \\
GKA-Primed-HQwen3 (\texttt{num\_iter}=30) & 16{,}613\;(1.86$\times$) & 9{,}341\;(1.81$\times$) & 5{,}230\;(1.91$\times$) & 2{,}772\;(2.26$\times$) \\
GKA-Primed-HQwen3 (\texttt{num\_iter}=10) & 17{,}368\;(1.94$\times$) & 9{,}745\;(1.88$\times$) & 5{,}366\;(1.96$\times$) & 2{,}811\;(2.29$\times$) \\
GDN-Primed-HQwen3 & 17{,}479\;(1.95$\times$) & 10{,}080\;(1.95$\times$) & 5{,}521\;(2.01$\times$) & 2{,}863\;(2.33$\times$) \\
Mamba2-Primed-HQwen3 & 16{,}844\;(1.88$\times$) & 9{,}966\;(1.93$\times$) & 5{,}460\;(1.99$\times$) & 2{,}825\;(2.30$\times$) \\
BMOJOF-Primed-HQwen3 & 7{,}854\;(0.88$\times$) & 5{,}597\;(1.08$\times$) & 3{,}573\;(1.30$\times$) & 2{,}153\;(1.75$\times$) \\
\bottomrule
\end{tabular}
\end{table}

\paragraph{B'MOJO-F inference.}
\label{subsec:bmojof-inference}
B'MOJO-F layers (\cref{sec:hybrid-arch}) combine an SSM module with Sliding Window Attention in a single sequence-mixing \emph{Hybrid layer}.  The current inference implementation issues two separate Flash Attention kernel calls, one for the eidetic (SWA) stream and one for the fading (SSM)
stream, and merges their outputs subsequently.  While mathematically equivalent to Attention over the union of both KV sets, this dual-call implementation leaves efficiency on the table: at 16K context, B'MOJO-F is $0.88\times$ thant the Transformer.  The crossover occurs at approximately 32K, beyond which the KV cache savings outweigh the per-layer overhead, reaching $1.75\times$ at 128K (\cref{tab:decode_8b}).  Fusing the two streams into a single Attention kernel call, providing both eidetic and fading stream KV as joint input to one softmax computation, would eliminate the double-call overhead and is expected to shift the crossover for B'MOJO-F to shorter contexts.

\subsection{Time to First Token (TTFT)}
\label{subsec:ttft}

The decode throughput analysis above concerns token generation, which is memory-bandwidth-bound.  Prefill, processing the input prompt before the first generated token, is a different regime: it is primarily \emph{compute-bound}, with Attention's $O(l^2)$ FLOP cost per layer as the dominant term.  Our Primed Hybrid models replace $n/2$ of these quadratic Attention computations with SSM layers whose chunked recurrence scales linearly in~$l$.  As context length grows, this quadratic-vs-linear gap widens, and the TTFT advantage increases.

While the theoretical cost advantage is clear, replacing $O(l^2)$ Attention with $O(l)$ SSM recurrence, two practical factors work against TTFT at short contexts.  First, the Priming procedure (\cref{subsec:stage0-theory}) introduces additional parameters in each SSM layer (gate projections, convolution weights, etc.) that are not present in the source Transformer, increasing the total weight-read cost during prefill.  Second, the superior efficiency of FlashAttention-3 in handling the quadratic Attention compute means that the practical gap between Attention and SSM prefill is smaller than the theoretical complexity suggests, FlashAttention-3 reflects sustained
long-term community investment in saturating GPU hardware specifically for autoregressive (causal) Attention, whereas SSM kernels have not yet received  omparable optimization effort in scale or time. Together, these effects imply that TTFT is comparable to or worse than the Transformer at shorter context lengths, with the advantage emerging only at longer context lengths where the lower compute cost of SSM recurrence eventually outweighs the increased parameter count and FlashAttention-3's efficiency.

All TTFT measurements use the \emph{same} batch size~$b$ for both architectures, set to saturate the Transformer's KV cache, a suitable operating condition for throughput-oriented production deployments.  Our Primed Hybrid models serve this batch with memory to spare but do not increase the batch, isolating the prefill compute benefit from the batching advantage discussed in the decode analysis.

\paragraph{32B models.}
\cref{tab:ttft_32b} reports mean TTFT at 32B scale.  The crossover where our Primed Hybrid models begin to outperform the Transformer on TTFT occurs around 64K. GKA-Primed-HQwen3 reaches $0.90\times$ at 128K with \texttt{num\_iter}=30 ($10\%$ faster), improving to $0.86\times$ at \texttt{num\_iter}=10. GDN-Primed HQwen3 reaches $0.77\times$ ($23\%$ faster) at 128K.

\begin{table}[t]
\centering
\caption{%
  Measured mean TTFT (ms) for our 32B Primed Hybrid models on
  $8\times$~H200 (TP=8).  Batch size set to saturate the Transformer's
  KV cache; ratio vs.\ Qwen3-32B in parentheses (lower is better).
}
\label{tab:ttft_32b}
\small
\begin{tabular}{lcccc}
\toprule
\textbf{Model} & \textbf{16K} & \textbf{32K} & \textbf{64K} & \textbf{128K} \\
\midrule
Source Transformer (Qwen3-32B) & 39{,}421 & 48{,}527 & 65{,}104 & 94{,}479 \\
GKA-Primed-HQwen3 (\texttt{num\_iter}=30) & 52{,}053\;(1.32$\times$) & 58{,}613\;(1.21$\times$) & 68{,}241\;(1.05$\times$) & 84{,}935\;(0.90$\times$) \\
GKA-Primed-HQwen3 (\texttt{num\_iter}=10) & 48{,}560\;(1.23$\times$) & 55{,}039\;(1.13$\times$) & 64{,}766\;(0.99$\times$) & 81{,}410\;(0.86$\times$) \\
GDN-Primed-HQwen3 & 42{,}492\;(1.08$\times$) & 48{,}417\;(1.00$\times$) & 57{,}525\;(0.88$\times$) & 73{,}145\;(0.77$\times$) \\
\bottomrule
\end{tabular}
\end{table}

\paragraph{8B models.}
\cref{tab:ttft_8b} reports the same measurement at 8B scale. At 128K context, GDN-Primed-HQwen3 and Mamba2-Primed-HQwen3 achieve $0.74\times$ the Transformer's TTFT ($26\%$ faster prefill), confirming the benefit of replacing quadratic Attention with linear-cost SSM recurrences at long contexts. GKA Primed-HQwen3 achieves $0.85\times$ at the default \texttt{num\_iter}=30, with the gap to GDN-/Mamba2-based Hybrids attributable to the Chebyshev solver overhead during prefill; reducing to \texttt{num\_iter}=10 improves this to $0.82\times$.  At 16K, the pattern reverses: GKA-Primed-HQwen3 is $1.26\times$ slower than the Transformer, and even GDN-Primed-HQwen3 is only at parity ($1.00\times$), reflecting the short-context penalty discussed above.  B'MOJO-F results are discussed separately in~\cref{subsec:bmojof-inference}.

\begin{table}[t]
\centering
\caption{%
  Measured mean TTFT (ms) for our 8B Primed Hybrid models on $8\times$~H200 (TP=8).  Batch size set to saturate the Transformer's KV cache; ratio vs.\ Qwen3-8B in parentheses (lower is better).
}
\label{tab:ttft_8b}
\small
\begin{tabular}{lcccc}
\toprule
\textbf{Model} & \textbf{16K} & \textbf{32K} & \textbf{64K} & \textbf{128K} \\
\midrule
Source Transformer (Qwen3-8B) & 27{,}736 & 32{,}661 & 42{,}462 & 62{,}922 \\
GKA-Primed-HQwen3 (\texttt{num\_iter}=30) & 35{,}013\;(1.26$\times$) & 38{,}502\;(1.18$\times$) & 44{,}893\;(1.06$\times$) & 53{,}606\;(0.85$\times$) \\
GKA-Primed-HQwen3 (\texttt{num\_iter}=10) & 33{,}008\;(1.19$\times$) & 36{,}334\;(1.11$\times$) & 42{,}076\;(0.99$\times$) & 51{,}404\;(0.82$\times$) \\
GDN-Primed-HQwen3 & 27{,}805\;(1.00$\times$) & 30{,}975\;(0.95$\times$) & 36{,}151\;(0.85$\times$) & 46{,}389\;(0.74$\times$) \\
Mamba2-Primed-HQwen3 & 28{,}668\;(1.03$\times$) & 31{,}405\;(0.96$\times$) & 36{,}666\;(0.86$\times$) & 46{,}618\;(0.74$\times$) \\
BMOJOF-Primed-HQwen3 & 44{,}763\;(1.61$\times$) & 47{,}600\;(1.46$\times$) & 52{,}272\;(1.23$\times$) & 61{,}702\;(0.98$\times$) \\
\bottomrule
\end{tabular}
\end{table}

\paragraph{Limiting factor at long contexts.}
Even at 128K, the TTFT advantage is bounded because the retained $n/2$ Attention layers still dominate prefill time.  Reducing \texttt{num\_iter} has limited additional impact at long contexts for the same reason: the Chebyshev solver in GKA contributes a diminishing fraction of total prefill compute as Attention's quadratic cost grows.

\section{Related Work}
\label{sec:related-work}

Our work builds on and extends several lines of research: distilling Transformers into subquadratic
architectures, and designing hybrid Attention-SSM models. We organize the discussion around these themes and highlight how Priming differs from each prior approach.

\subsection{Attention, Sparse Attention, and State Space Models}

The quadratic cost of Self-Attention~\citep{Vaswani-NeurIPS2017} in sequence length has motivated a large body of work on more efficient alternatives. As models scale to million-token contexts, this cost has become a practical bottleneck for frontier agentic applications in coding, information retrieval, and scientific discovery~\citep{chen2024scienceagentbench, cui2025curie, jimenez2023swe}. Proposed remedies include sparse approximations that restrict each token's Attention to a subset of the sequence, for instance via locality-sensitive hashing~\citep{kitaev2020reformer}, as well as compressed-memory mechanisms that maintain a dynamically updated summary over sliding windows or sequence chunks~\citep{dai2019transformerxl, munkhdalai2024leave, mohtashami2023landmark}. While much of this literature targets sparsity-based reductions~\citep{nunez2024expansion, yuan2025NSA}, in this work we focus on State Space Models for Hybrid architectures.

State Space Models replace the softmax Attention mechanism with constant-size recurrent dynamical systems~\citep{ssd, Yang-ICML2024-gla, beck2024xlstm, yang2024gated, gatedkalmanet}. A wide range of State-Space Model (SSM) variants have been proposed, from designs closely related to Linear Attention~\citep{sun2023retentive} or Linear Time-Invariant systems~\citep{gu2021combining, zancato2022stacked}, to those that introduce adaptive or gated state updates~\citep{Yang-ICML2024-gla, ssd, orvieto2023resurrecting}. Despite their diversity, all SSMs share the working principle of classical state-space models~\citep{Kalman-1960}: the input sequence is processed through a \textit{fixed-size} latent state that serves as a compressed, lossy summary of all tokens seen so far. In finite-precision hardware this state inevitably ``fades away the past'' as more tokens are processed. Efficient SSM implementations exploit hardware-aware primitives such as associative scans~\citep{gu2023mamba, griffin}, chunked computation~\citep{ssd, Yang-ICML2024-gla}, and strategies that avoid materializing the full state in high-bandwidth memory~\citep{gu2023mamba}.

A recurring theme in SSM design is the use of data-dependent gating to modulate the rate at which memory fades, trading off expressivity against scalability. For example, Mamba-2 replaced Mamba's channel-wise gating with head-wise gating for better Tensor Core utilization, and input-dependent gating has been shown to improve training stability more broadly~\citep{arora2023zoology, yang2024gated}, driving the progression from S4~\citep{alber_gu_s4} to Mamba~\citep{gu2023mamba} and from DeltaNet~\citep{yang2024parallelizing} to Gated DeltaNet~\citep{yang2024gated} and Gated KalmaNet \citep{gatedkalmanet}. In particular, Gated KalmaNet shows that such gating arises naturally from solving a weighted least-squares objective, connecting it to the well-studied numerical stability properties of adaptive filtering~\citep{LJUNG_RLS_stability, sayed2003fundamentals, sayed2011adaptive}.

\subsection{Hybrid State-Space Attention Models}
\label{section:appendix_hybrid_intro}

Although scaling the recurrent state of SSM layers has produced strong models, pure SSMs still underperform on tasks that require recalling information from the distant past~\citep{m2h, jelassi2024repeat}. Hybrid State-Space Models address this gap by pairing SSMs' fading memory with Attention layers that provide exact recall~\citep{ssd, griffin, jamba}. Common designs simply interleaved SSM and Attention layers at various ratios~\citep{m2h,gu2023mamba, ssd} or substituted full Attention with Sliding Window Attention~\citep{griffin}. More recent architectures refine this combination~\citep{zamba, bmojo}. B'MOJO~\citep{bmojo} was the first to fuse SSM and SWA \emph{within a single layer}, complementing the SSM's fading state with eidetic memory: within the sliding window, tokens attend to a selected subset of past tokens deemed difficult to predict via an asynchronous causal selection mechanism. Subsequent work~\citep{dong2024hymba, bae2025hybrid} has confirmed that this parallel-fusion design outperforms layer-level stacking at equivalent compute.

Thanks to their reduced memory footprint and favorable scaling over long sequences, Hybrid architectures are increasingly adopted for long-range agentic tasks and have recently been combined with Reinforcement Learning at scale~\citep{chen2025minimax, qwen3codernext2026, nemotron3}. Coupled with system-level optimizations such as prefix caching~\citep{pan2024marconi} and specialized serving engines~\citep{vllm}, Hybrid models can sustain more rollouts per unit time, directly improving end-to-end RL performance.

\subsection{Distilling Transformers into Subquadratic Models}
\label{subsec:rw_distillation}

\paragraph{Mamba in the Llama.}
\citet{mambainllama} demonstrated that pre-trained Transformer models can be distilled into Mamba based models by reusing the linear projection weights from Attention layers to initialize SSM parameters. Their approach uses progressive distillation followed by supervised fine-tuning, requiring ${\sim}20$B tokens.
The resulting model retains a quarter of the original Attention layers and while having a non negligible gap compared to the source Transformer on benchmarks it achieves competitive chat performance.
Priming directly builds on the weight initialization scheme introduced by Mamba in the Llama (our Stage~0, \cref{subsec:stage0-theory}), but extends it in several directions: we generalize the initialization to five SSM layer types beyond Mamba2, introduce a memory-efficient fused architecture for layerwise distillation that scales to 32B parameters, and support multiple Transformer families (Qwen, Llama, Ministral).
Critically, our controlled comparison under identical Priming conditions reveals that the choice of SSM layer matters, GKA consistently outperforms Mamba-2, a finding that was not possible in prior work where each SSM type was evaluated under different conditions.

\paragraph{Mohawk.}
\citet{mohawk} proposed a progressive distillation framework that matches Transformer and SSM representations at three levels of granularity: mixing matrices, hidden states, and end-to-end predictions. Mohawk distilled a Mamba-2 variant from the Phi-1.5 architecture using only 3B tokens.
The key conceptual insight, that both Transformers and SSMs apply different forms of mixing matrices over token sequences, is shared with our realization-theoretic perspective (\cref{sec:why-priming}).
However, Mohawk was demonstrated only at the 1.3B scale with a single SSM type (Mamba-2), whereas Priming scales to 32B and supports five SSM layer types.
Our fused architecture (\cref{subsec:fused-arch}) also provides a more memory-efficient alternative to Mohawk's separate teacher-student forward passes, enabling distillation at scales where maintaining two full model copies would be prohibitive.

\paragraph{RADLADS.}
\citet{radlads} presented a rapid conversion protocol for transforming softmax attention Transformers into purely recurrent SSM-based models, requiring only 350--700M tokens, less than $0.005\%$ of the teacher's pre-training data.
RADLADS introduced two simplified SSM architectures optimized for the conversion setting and released converted Qwen models at 7B, 32B, and 72B scales.
While RADLADS achieves impressive token efficiency for \emph{pure} SSM conversion, Priming targets \emph{hybrid} architectures that retain a fraction of attention layers, yielding stronger absolute quality (particularly on recall-intensive and long-context tasks where pure SSMs struggle). Additionally, Priming's controlled SSM comparison and support for variable-compute layers (GKA) address dimensions not explored by RADLADS.

\paragraph{SUPRA.}
\citet{supra} introduced the idea of \emph{uptraining} a pre-trained Transformer into a recurrent model by replacing softmax Attention with linear Attention and continuing training.
SUPRA demonstrated that this approach can recover competitive performance on standard NLP benchmarks at a fraction of the pre-training cost (${\sim}5\%$ of the original token budget).
However, SUPRA requires ${\sim}100$B tokens of continued training and reports persistent shortfalls in in-context learning and long-context modeling, even at the largest scales.
In contrast, Priming focuses on Hybrid models and achieves better quality with less than $0.5\%$ of the pre-training budget by combining informed initialization, layerwise MSE distillation, and cross-entropy fine-tuning in a structured three-stage pipeline.

\paragraph{Liger.}
\citet{liger} proposed converting Transformers into gated recurrent structures by repurposing the pre-trained key projection weights to construct gating mechanisms, avoiding the introduction of additional parameters.
Combined with LoRA fine-tuning and an intra-layer hybrid Attention mechanism (Liger Attention), this approach recovers ${\sim}93\%$ of the original Transformer's quality at only $0.02\%$ of the pre-training token cost.
Liger's key insight, that gating can be derived from existing weights, is related to our Stage~0 informed initialization, where we map Attention parameters to SSM parameters.
However, Priming goes further by supporting a broader range of SSM architectures (not limited to gated linear recurrences), operating at larger scales (up to 32B), and achieving near-zero quality gaps through the combination of layer-wise alignment and end-to-end fine-tuning.

\paragraph{Layer selection}
The importance of data-driven layer selection is independently recognized by several concurrent efforts. \citet{klguided} find a stark dichotomy: long-context recall is highly sensitive to which layers retain softmax Attention, while short-context reasoning is almost entirely insensitive. \citet{jetnemotron} use a supernet with beam search and find that only two to three layers are critical per task, with the critical set differing across tasks. 
The exact selection mechanisms vary, KL-divergence scoring, supernet search, recall-vs-CSR importance, but all converge on the same conclusion: data driven selection substantially outperforms uniform or random layer placement.
Our importance scoring procedure (\cref{subsec:layer_selection}) confirms this finding.  Unlike the above methods, ours is both training-free, requiring only per-layer SWA mask replacement and evaluation, and motivated by realization theory: SWA directly bounds the effective \ref{eq:max_hankel_rank} of each layer's mixing matrix, connecting the empirical score to a formal measure of how hard the layer is to realize as a fixed-size SSM (\cref{sec:why-priming}).

\subsection{Functional Specialization in Trained Transformers}
\label{subsec:rw_interp}

A central premise of Priming is that trained Transformer layers, despite sharing identical architectures, are functionally heterogeneous: some perform predominantly local mixing while others implement global circuits that are difficult to compress into a fixed-size recurrent state.  We formalize this through realization theory (\cref{subsec. realization theory}), where the effective Hankel rank of each layer's mixing matrix quantifies the difficulty of SSM replacement.
Compatible observations have been made independently in the mechanistic interpretability literature, where trained Transformers are found to develop specialized circuits such as induction heads~\citep{inductionheads} and long-range retrieval patterns~\citep{transformercircuits}.  These findings provide empirical support from a complementary perspective for the layer-level functional heterogeneity that our realization-theoretic analysis predicts and that our importance scoring procedure (\cref{subsec:layer_selection}) exploits.

\section{Limitations}
\label{sec:limitations}

\paragraph{Scope of the SSM hierarchy.}
The full three-way comparison (GKA vs.\ GDN vs.\ Mamba-2) is conducted at 50\% Hybrid ratio, 8B scale, in the Qwen family. At 32B we evaluate only GKA and GDN, since Mamba-2 consistently underperforms at 8B. The GKA advantage over GDN is task-dependent at this scale: the two are competitive on IT benchmarks, while GKA leads on reasoning tasks, consistent with the 8B finding that more expressive state updates matter most on harder tasks requiring long chains of thought. Preliminary experiments suggest the hierarchy generalizes to larger scales and higher Hybrid ratios (up to ${\sim}75\%$ with limited quality regression), but a systematic evaluation across these axes has not been conducted.

\paragraph{Context extension.}
State composition (\cref{subsec:state-composition}) extends native context from 128K to 256K with strong quality retention, and to 512K ($4\times$) with meaningful but degraded performance (e.g., GKA-based 32B Hybrid model drops from 88\% to 61\% accuracy on RULER NIAH between $2\times$ and $4\times$ extension). Whether this degradation can be closed with improved composition strategies (e.g., better chunk-boundary handling, improved merging coefficients, or per-layer adaptive scaling) remains open. Additionally, the evaluation is limited to synthetic retrieval and reasoning tasks (RULER NIAH, BABILong); behavior on more challenging long-document and agentic tasks at these extended lengths is untested.

\paragraph{Inference measurements.}
All throughput numbers are measured on 8$\times$H200. At long context, the Hybrid's speedup over the Transformer is governed by Amdahl's law applied to the Attention fraction of decode time (\cref{sec:inference}): hardware with different compute-to-bandwidth ratios, lower TP configurations where weight-read costs dominate, or short-context workloads where the KV cache is small relative to model weights will shift this fraction and change the observed speedup. In production, servers handle requests of varying lengths concurrently and reuse cached prefixes across turns, both of which change the effective memory pressure and may shift the Hybrid's relative advantage compared to the uniform single-length regime we benchmark.

\paragraph{Quantization.}
All models in this work are served in BF16 with the recurrent states kept in FP32 for numerical stability (e.g.~GKA's state is kept in FP32 to preserve numerical stability of the Chebyshev solver). We do not explore weight quantization (e.g., GPTQ, AWQ) or KV-cache quantization (FP8, INT4). Weight quantization reduces the memory footprint of both Transformers and Hybrids, but may interact differently with SSM layers whose recurrent dynamics are more sensitive to precision than Attention layers. KV-cache quantization directly compresses the Transformer's main memory bottleneck, potentially narrowing the Hybrid's relative advantage at long context. Conversely, lower-precision weights would make decode even more KV-cache memory-bandwidth-bound (smaller weights to read, same KV traffic), which could amplify the Hybrid's edge. Determining whether the GKA's FP32 state can be modified such that it can be relaxed to BF16 or FP8 without degrading the Chebyshev solver's convergence, and how weight and KV-cache quantization interact with Hybrid inference gains, requires further study.

\section{Conclusion}
\label{sec:conclusion}

This work introduces Priming, a method for building Hybrid State-Space models from source pre-trained Transformers at less than 0.5\% of the source model's pre-training token budget. We summarize the main contributions below.

\paragraph{A general method for Hybrid model construction.}
Priming converts the problem of designing Hybrid architectures from one that requires full pre-training into a knowledge-transfer problem. The method is easily adaptable to the source Transformer family of choice (Qwen, Llama, Mistral) and is agnostic to model class (dense or MoE), scale (8B--32B), and target layer type (Mamba-2, GDN, GKA, or the Hybrid B'MOJO-F layer). Its three-stage structure, informed initialization, layerwise alignment, and task adaptation, is grounded in classical realization theory, which identifies when an SSM can recover an Attention layer's input-output behavior and guides the design of each stage.

\paragraph{A controlled comparison of SSM layer types.}
By holding the source model, data, hyperparameters, and Hybrid ratio constant while varying only the SSM layer, Priming enables the first head-to-head evaluation of SSM families at scale under identical conditions. We establish the empirical hierarchy GKA~$>$~GDN~$>$~Mamba-2 on long-context tasks, consistent with the theoretical expressiveness ordering. We further demonstrate that fading-memory SSM layers consistently outperform Sliding Window Attention when paired with full Attention in Hybrid models. We empirically show this holds true even when the sliding window's KV cache is sized to be several times larger than the SSM state, evidence that compressed summaries of the full history are more useful than verbatim access to a bounded recent window.

\paragraph{Hybrid reasoning models at scale.}
We scale GKA-based Primed Hybrids to 32B parameters with 128K native context, achieving reasoning performance that is competitive with its source Transformer post-trained with the same recipe while offering up to $2.3{\times}$ decode throughput and ${\sim}2{\times}$ concurrent sequence capacity on the same hardware. GKA's iterative solver exposes a runtime compute-quality knob that no other SSM offers: a single trained model serves multiple accuracy-efficiency trade-offs by adjusting the Chebyshev iteration count $r$ at deployment time. On AIME~2025, this translates to up to $1.6{\times}$ faster time-to-accuracy under realistic concurrent workloads, a property directly beneficial for reinforcement learning post-training where data generation rate gates training throughput.

\paragraph{Open release.}
We release a model zoo of Primed Hybrid models (8B-32B) for long-context reasoning and instruction following,\footnote{Model zoo: \url{https://huggingface.co/collections/amazon/primed-hybrid-models-collection}} together with the infrastructure that enables them: the Priming pipeline, Sequence Parallelism algorithms for SSM layers, optimized GKA kernels, and vLLM serving plugin, all under Apache~2.0.\footnote{Training and inference code: \url{https://github.com/awslabs/hybrid-model-factory}}

\subsection*{Future Directions}

Several questions remain open:

\begin{itemize}[nosep]
  \item \emph{Scaling beyond 32B.} Whether the Hybrid's advantage persists at 100B+ scale, and whether Priming's token efficiency continues to hold as the source model grows, is untested.
  \item \emph{Higher Hybrid ratios.} Preliminary results suggest Priming scales to ${\sim}75\%$ ratio with limited quality regression. Systematically characterizing how SSM expressiveness, inference efficiency, and downstream quality trade off as the Hybrid ratio increases would inform the design of even more scalable Hybrid architectures.
  \item \emph{Dynamic iteration scheduling.} GKA's iterative solver quality $r$ knob is currently set globally at serving time. Adapting $r$ per-layer or per-token within a generation pass, using fewer iterations early in the sequence when context is short, more as context grows or when more reasoning is necessary, could improve the efficiency-quality frontier further.
  \item \emph{Extreme context extension.} State composition extends context to 256K with strong retention and to 512K with meaningful but degraded performance. Closing the gap at $4\times$ and beyond, through improved merging strategies, adaptive scaling, and evaluation on more challenging long-document and agentic tasks, is a natural next step.
  \item \emph{RL post-training.} Our reasoning models use SFT. Whether the Hybrid architecture's throughput advantage compounds under RL training loops, where more rollouts per unit time translates to faster policy improvement, is an important open practical question.
\end{itemize}

\bibliographystyle{plainnat}
\bibliography{references}

\newpage
\appendix

\section{Priming Implementation Details}

\subsection{Gate Initialization}
\label{sec:priming_gate_init}

SSM layers include an output gate $\mathbf{g}_t = \mathbf{W}_G \mathbf{X}_t$ that modulates the layer's output before the final projection. Prior work~\citep{mambainllama} initializes $\mathbf{W}_G$ randomly. We instead initialize it as a blend of the transposed output projection and the GQA-expanded value projection:
\begin{equation}
\label{eq:gate-init}
\mathbf{W}_G = 0.5 \cdot \bigl(
  \mathbf{W}_O^\top +
  \texttt{repeat\_interleave}(\mathbf{W}_V,\, m)
\bigr),
\end{equation}
where $m = H_Q / H_{\mathrm{KV}}$ is the GQA group size and $\texttt{repeat\_interleave}(\mathbf{W}_V, m) \in \mathbb{R}^{H_Q \cdot d_{\mathrm{head}} \times d_{\mathrm{model}}}$ repeats each of the $H_{\mathrm{KV}}$ value heads $m$ times to match the shape of $\mathbf{W}_O^\top$. The gate controls how much of the recurrent output reaches the output projection, making it dependent on both the output space structure and the input content. $\mathbf{W}_O^\top$ provides alignment with the output space; the expanded $\mathbf{W}_V$ captures the input content structure. Initializing the gate in this manner allows our Primed Hybrid models to effectively use the knowledge stored in the pre-trained weights of the source transformer rather than learning from scratch. We apply this initialization uniformly across GKA, GDN, and Mamba2. 

\begin{figure}[h]
\centering
\includegraphics[width=0.5\linewidth]{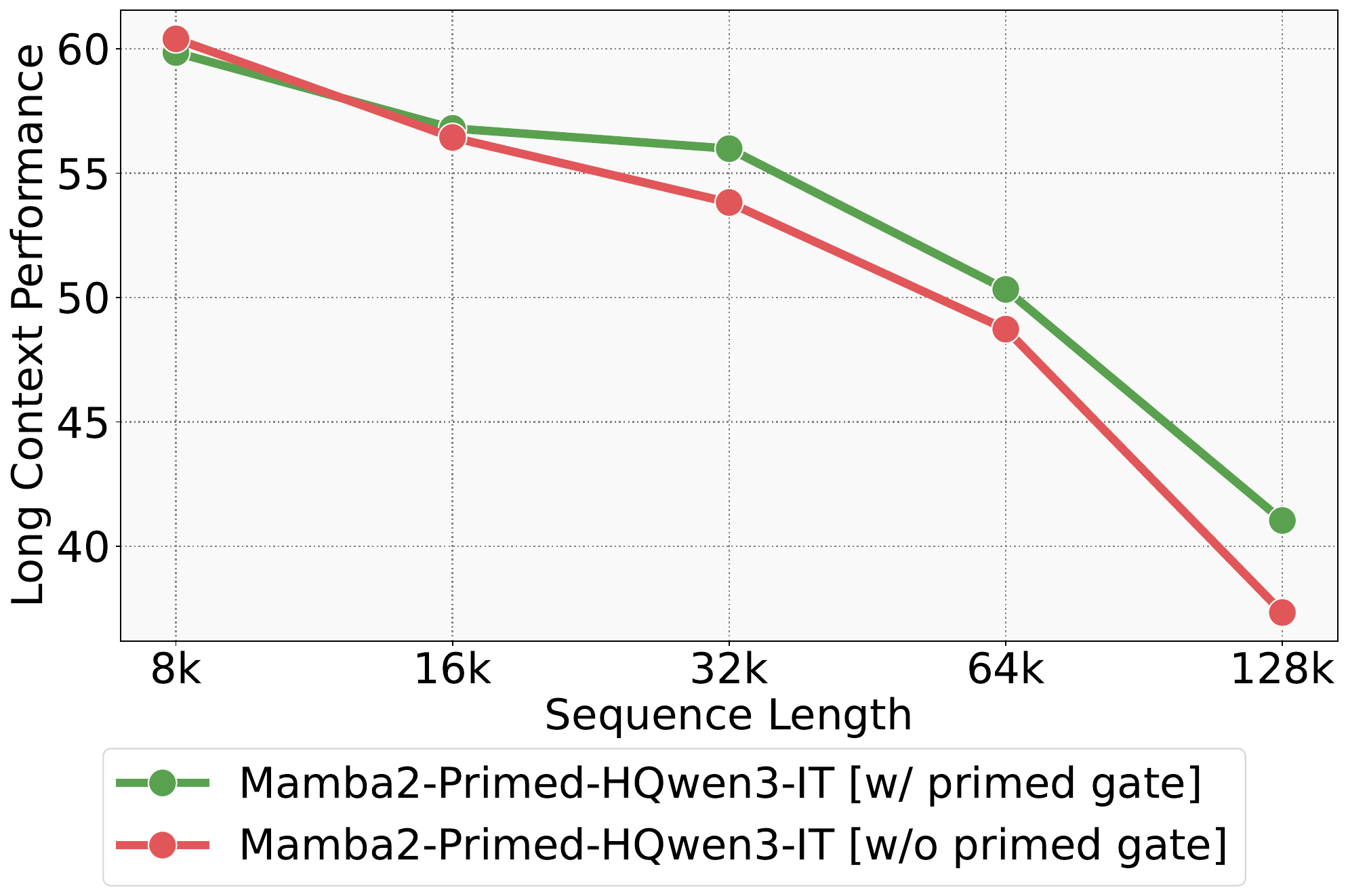}
\caption{\textbf{Effect of the primed gate initialization on long-context performance.} Per-context-length scores of Mamba2-Primed-HQwen3-IT at 8B scale, with (green) and without (red) the gate initialization from \cref{eq:gate-init}. The ``w/o primed gate'' variant follows \citet{mambainllama} and initializes $\mathbf{W}_G$ randomly. Both variants use the same Stage~0/1/2 recipe and differ only in $\mathbf{W}_G$ initialization. Scores are the average over long-context tasks discussed in \cref{subsec: IT_results_long_ctx} reported per context length.}
\label{fig:gate_init_ablation}
\end{figure}

\paragraph{Empirical validation.} \cref{fig:gate_init_ablation} confirms that this effective use of pre-trained knowledge translates into measurable long-context gains on Mamba2-Primed-HQwen3-IT 8B. The two variants are competitive at 8K, and the gap grows with context length, reaching a 9.9\% (relative) improvement of the primed gate over random initialization at 128K. Random initialization forces the gate to learn a good configuration from scratch during Stage~1 and Stage~2, and the Priming token budget ($<$0.5\% of the source Transformer's pre-training budget) is not enough to fully close the gap. We validate this on Mamba2 and apply the same initialization to GKA and GDN, since the mechanism is architecture-agnostic.

\subsection{QK Norm Transfer}
\label{sec:priming_qk_norm}

Some Transformer architectures (e.g., Qwen3) apply normalization to the Q and K projections before computing attention. Standard SSM layers do not natively include these norms. Since Stage~0 seeds SSM parameters from the Attention projections, we add corresponding norm layers to the SSM architecture and initialize them with the source Transformer's QK norm weights. This ensures the SSM layers operate in the same normalized space as the Attention layers from which they were initialized.

\subsection{Adaptive GQA (AGQA)}
\label{sec:priming_agqa}

As discussed in \cref{subsec:stateexp-theory}, standard GQA head repetition is lossy for SSM layers because identical keys and values produce identical state updates across grouped heads. AGQA replaces the fixed repetition with a learned low-rank projection plus a residual connection:
\begin{equation}
\label{eq:agqa}
F_{\mathrm{AGQA}}(\mathbf{X}) =
  \operatorname{\texttt{mat}}\bigl(\mathbf{W}_2 \,\sigma(\mathbf{W}_1
  \operatorname{\texttt{vec}}(\mathbf{X}))\bigr)
  + \mathbf{R}_{\mathrm{AGQA}} \, \mathbf{X},
\end{equation}
where $r$ is the rank of the low-rank projection, $\mathbf{W}_1 \in \mathbb{R}^{r \times H_{\mathrm{KV}} \cdot d_{\mathrm{head}}}$ and $\mathbf{W}_2 \in \mathbb{R}^{H_Q \cdot d_{\mathrm{head}} \times r}$ are learnable parameters, $\sigma$ is SiLU, and
$\mathbf{R}_{\mathrm{AGQA}} \in \mathbb{R}^{H_Q \times H_{\mathrm{KV}}}$ is a residual matrix; $\operatorname{\texttt{vec}}: \mathbb{R}^{H_{\mathrm{KV}} \times d_{\mathrm{head}}} \mapsto \mathbb{R}^{H_{\mathrm{KV}} \cdot d_{\mathrm{head}}}$ flattens to a vector, $\operatorname{\texttt{mat}}: \mathbb{R}^{H_Q \cdot d_{\mathrm{head}}} \mapsto \mathbb{R}^{H_Q \times d_{\mathrm{head}}}$ reshapes to a matrix. We initialize $\mathbf{W}_2$ to zeros and $\mathbf{R}_{\mathrm{AGQA}}$ to the standard GQA replication matrix
$\mathbf{R}_{\mathrm{GQA}} = \mathbf{I}_{H_{\mathrm{KV}}} \otimes \mathbf{1}_{m \times 1}$, so that $F_{\mathrm{AGQA}} = F_{\mathrm{GQA}}$ at the start of training. The model begins with standard GQA and learns to deviate from it only as needed. $\mathbf{R}_{\mathrm{AGQA}}$ can optionally be made learnable, allowing the model to adjust how KV heads are mixed across query groups beyond what the low-rank branch provides.
See \cref{subsubsec:agqa_vs_gqa_ablations} for experimental results.

\subsection{B'MOJO-F Initialization}
\label{sec:priming_bmojof_init}

B'MOJO-F is a dual-stream layer that combines Sliding Window Attention (SWA) with a configurable SSM (GKA, GDN, or Mamba2) within a single decoder layer. During Stage~0, both components are initialized from the same source Attention layer: the SWA component receives a direct copy of $\mathbf{W}_Q$, $\mathbf{W}_K$, $\mathbf{W}_V$, $\mathbf{W}_O$, while the SSM component is initialized using the standard mapping described in \cref{subsec:stage0-theory}. Both components are then trained independently during Stages~1 and~2.

\subsection{Complete Initialization Mapping}
\label{sec:priming_init_mapping}

\Cref{tab:init-mapping} summarizes the full Stage~0 parameter transfer. For models with GQA, key and value heads are expanded via AGQA (\cref{sec:priming_agqa}) or standard $\texttt{repeat\_interleave}$.

\begin{table}[h]
\centering
\caption{\textbf{Stage~0 initialization mapping from Attention projections to SSM parameters.} For models with GQA, key and value heads are expanded via AGQA or standard head repetition. All remaining layer-specific parameters $\boldsymbol{\theta}$ (e.g., convolutions, transition matrices) are initialized randomly.}
\label{tab:init-mapping}
\begin{tabular}{ll}
\toprule
\textbf{SSM Parameter} & \textbf{Initialized From} \\
\midrule
$\mathbf{W}_Q$ & Attention $\mathbf{W}_Q$ \\
$\mathbf{W}_K$ (+ AGQA expansion) & Attention $\mathbf{W}_K$ \\
$\mathbf{W}_V$ (+ AGQA expansion) & Attention $\mathbf{W}_V$ \\
$\mathbf{W}_O$ & Attention $\mathbf{W}_O$ \\
$\mathbf{W}_G$ & $0.5 \cdot (\mathbf{W}_O^\top + \texttt{repeat\_interleave}(\mathbf{W}_V, m))$ \\
QK Norms & Attention QK norms \\
$\boldsymbol{\theta}$ (layer-specific) & Random \\
\bottomrule
\end{tabular}
\end{table}

\section{Realization Theory}
\label{app:realization-theory}

\subsection{Mixing Matrix and Finite-Horizon Input-Output Matrix}
\label{sec: mixing matrix and finite horizon input-output matrix}

All SSM layers considered in this work (Mamba-2, GDN, GKA) share the following general linear time-varying state-space form. Let $\bm{v}_t\in\mathbb R^{d_v}$ denote the value input, $\bm{y}_t\in\mathbb R^{d_v}$ the output at time~$t$, and $\mathbf S_t\in\mathbb R^{d_v\times n}$ the matrix-valued latent state of state dimension~$n$. The dynamics equations are
\[
\mathbf S_{t}
  = \mathbf S_{t-1}\,\mathbf A_t(\mathbf X_{\le t})
    + \bm{v}_t\,\mathbf B_t(\mathbf X_{\le t}),
\qquad
\bm{y}_t = \mathbf S_t\,\mathbf C_t(\mathbf X_{\le t})
           + \bm{v}_t\,D_t(\mathbf X_{\le t}),
\]
with $\mathbf S_0=\mathbf 0$, where the system matrices
$\mathbf A_t(\mathbf X_{\le t})\in\mathbb R^{n\times n}$,
$\mathbf B_t(\mathbf X_{\le t})\in\mathbb R^{1\times n}$,
$\mathbf C_t(\mathbf X_{\le t})\in\mathbb R^{n\times 1}$, and
$D_t(\mathbf X_{\le t})\in\mathbb R$
are all functions of the input sequence up to time~$t$ in their most general form.

\paragraph{Row-wise decomposition and Single-Input Single-Output (SISO) reduction.}
The dynamics above simplify because the matrices $\mathbf A_t$, $\mathbf B_t$, $\mathbf C_t$, $D_t$ are shared across all $d_v$ rows of $\mathbf S_t$. Let $\mathbf S_t^{(p)}\in\mathbb R^{1\times n}$ denote the $p$-th row of $\mathbf S_t$ and $(v_t)_p,(y_t)_p\in\mathbb R$ the $p$-th coordinates of $\bm{v}_t$ and $\bm{y}_t$. In the state update, the term $\bm{v}_t\,\mathbf B_t$ is the outer product $\bm{v}_t\in\mathbb R^{d_v\times 1}$ times $\mathbf B_t\in\mathbb R^{1\times n}$, so its $p$-th row is simply $(v_t)_p\,\mathbf B_t$.
In the output, $\bm{y}_t=\mathbf S_t\,\mathbf C_t+\bm{v}_t\,D_t$ gives $(y_t)_p=\mathbf S_t^{(p)}\,\mathbf C_t+(v_t)_p\,D_t$. Row by row, the dynamics therefore decouple into $d_v$ independent scalar (SISO) systems, each driven by a different coordinate of $\bm{v}_t$ but governed by the same matrices:
\[
\mathbf S_{t}^{(p)}
  =\mathbf S_{t-1}^{(p)}\,\mathbf A_t+(v_t)_p\,\mathbf B_t,
\qquad
(y_t)_p=\mathbf S_t^{(p)}\,\mathbf C_t+(v_t)_p\,D_t,
\qquad p=1,\dots,d_v.
\]
Because the finite horizon input-output matrix $\mathbf T$ is identical for every coordinate~$p$, it suffices to analyse a single SISO channel. Writing $v_t,y_t\in\mathbb R$ for that coordinate and $s_t\in\mathbb R^{1\times n}$ for the corresponding row of $\mathbf S_t$, the dynamics become
\[
s_{t} = s_{t-1}\,\mathbf A_t + v_t\,\mathbf B_t,
\qquad
y_t = s_t\,\mathbf C_t + v_t\,D_t.
\]
Unrolling the recurrence with $s_0=\bm 0$ gives
\[
y_i = \sum_{j=1}^{i-1} v_j\;\mathbf B_j\,\mathbf A_{j+1}\cdots
      \mathbf A_i\,\mathbf C_i
      \;+\; v_i\bigl(\mathbf B_i\,\mathbf C_i + D_i\bigr).
\]

The input-output relationship is $\mathbf y=\mathbf T\,\mathbf v$, where $\mathbf y=(y_1,\dots,y_T)^\top$, $\mathbf v=(v_1,\dots,v_T)^\top$, and $\mathbf T\in\mathbb R^{T\times T}$ is the \emph{finite horizon input-output matrix}:
\begin{equation}\label{eq:transfer_matrix_v2}
T_{ij}=
\begin{cases}
  \mathbf B_j\,\mathbf A_{j+1}\cdots\mathbf A_i\,\mathbf C_i
    & \text{if } i>j,\\[6pt]
  \mathbf B_i\,\mathbf C_i + D_i
    & \text{if } i=j,\\[4pt]
  0 & \text{if } i<j.
\end{cases}
\end{equation}
The matrix $\mathbf T$ is lower-triangular by causality. Moreover, $\mathbf T$ is a \emph{semi-separable matrix of order~$n$}: every submatrix lying strictly below the diagonal has rank at most~$n$. To see why, note that the $(i,j)$ entry for $i>j$ factors as $\mathbf B_j\cdot(\text{state-transition from }j\text{ to }i)\cdot \mathbf C_i$, where the middle factor is $n\times n$; any subblock of such entries therefore has rank bounded by the state dimension~$n$. All SSMs considered in this work, Mamba-2, GDN, GKA, with state dimension~$n$ produce a finite horizon input-output matrix of this form, and the semi-separable order is the fundamental bottleneck that limits how faithfully an SSM can reproduce a given mixing matrix.

\subsection{Proof of Theorem~\ref{thm:realization_of_mixing_matrices}}

The following result adapts the classical time-varying realization theorem \citep[Theorem~3.7]{DewildeVanderVeen1998} to the causal Attention setting.

\begin{theorem}[Finite-Horizon Per-Instance Realization of the Attention Map]
\label{thm:realization_of_mixing_matrices_appendix}
Fix a sequence $\mathbf X\in\mathbb R^{T\times d}$, let $\mathbf V(\mathbf X)\in\mathbb R^{T\times d_v}$ be the sequence of values, and let $\mathbf M(\mathbf X)\in\mathbb R^{T\times T}$ be the causal Attention mixing matrix of a single head.
Let $\mathbf X_{\le t}$ denote the input sequence up to time~$t$. Then there exists a linear time-varying state-space realization
\[
\mathbf S_{t}
  = \mathbf S_{t-1}\,\mathbf A_t(\mathbf X_{\le t})
    + \bm{v}_t\,\mathbf B_t(\mathbf X_{\le t}),
\qquad
\bm{y}_t = \mathbf S_t\,\mathbf C_t(\mathbf X_{\le t})
           + \bm{v}_t\,D_t(\mathbf X_{\le t}),
\]
with $\bm{v}_t\in\mathbb R^{d_v}$, $\bm{y}_t\in\mathbb R^{d_v}$, zero initial state, and state dimension $n_{\min}(\mathbf X)$, whose \textit{finite-horizon input-output matrix} $\mathbf T(\mathbf X)$ is exactly the \textit{mixing matrix} $\mathbf M(\mathbf X)$. Moreover, no realization of this finite-horizon input-output map with smaller state dimension exists.
\end{theorem}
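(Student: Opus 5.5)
Since $\mathbf X$ is fixed, $\mathbf M:=\mathbf M(\mathbf X)$ is just a constant lower-triangular $T\times T$ matrix. I will build numerical system matrices $(\mathbf A_t,\mathbf B_t,\mathbf C_t,D_t)$ for this instance and regard them as (trivially) functions of $\mathbf X_{\le t}$. Dependence on future inputs is not needed: the theorem is per-instance, so the matrices may simply be constants determined by the fixed $\mathbf X$. Following the row-wise decomposition of \cref{sec: mixing matrix and finite horizon input-output matrix}, the $d_v$ value coordinates decouple into identical SISO systems that share the same $(\mathbf A_t,\mathbf B_t,\mathbf C_t,D_t)$. Since $\mathbf Y=\mathbf M\mathbf V$ acts column-wise, it is enough to realize $\mathbf M$ as the finite-horizon input-output matrix \eqref{eq:transfer_matrix_v2} of one SISO system. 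The same matrices then work for every coordinate, which gives the matrix-state statement.

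\emph{Construction.} The plan is the classical Hankel-factorization argument of \citet[Theorem~3.7]{DewildeVanderVeen1998}, run over a finite horizon. For each cut $k$, set $\mathbf H_k=\mathbf M_{k+1:T,1:k}$ and $n_k=\operatorname{rank}\mathbf H_k$. I will allow the state dimension to vary in time, $n_k$, and at the end pad with zeros up to $n=\max_k n_k=n_{\min}(\mathbf X)$; padding keeps every product in \eqref{eq:transfer_matrix_v2} unchanged. Factor each Hankel block as $\mathbf H_k=\mathcal O_k\mathcal R_k$, where $\mathcal O_k\in\mathbb R^{(T-k)\times n_k}$ has full column rank and $\mathcal R_k\in\mathbb R^{n_k\times k}$ has full row rank. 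The shift structure links consecutive cuts: the rows of $\mathbf H_{k}$ from index $k+2$ on, i.e.\ rows $2{:}$ of $\mathbf H_k$, are the first $k$ columns of $\mathbf H_{k+1}$; and the last column of $\mathbf H_{k+1}$ is $\mathbf M_{k+2:T,\,k+1}$. Concretely, I will choose the factorizations recursively. First fix $\mathcal R_k$ and define $\mathcal O_k$ by the factorization. Then exploit $\operatorname{rowspace}(\mathbf H_k)\supseteq$ that of its row-truncation to find $\mathbf A_{k+1}$ with $\mathcal R_{k+1}=[\,\mathcal R_k^{\top}\mathbf A_{k+1}\ \ \mathbf B_{k+1}^{\top}]$ in transposed form, i.e.\ $\mathcal R_{k+1}=[\mathbf A_{k+1}^{\top}\mathcal R_k,\ \mathbf B_{k+1}^\top]$ in the row-vector convention of the paper. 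Take $\mathbf C_{k}$ to be the first row of $\mathcal O_{k}$ (transposed), and $D_k=M_{kk}-\mathbf B_k\mathbf C_k$. Existence of $\mathbf A_{k+1}$ uses full row rank of $\mathcal R_k$: the first $k$ columns of $\mathbf H_{k+1}$ equal rows $2{:}$ of $\mathcal O_k\mathcal R_k$, so they lie in the row space of $\mathcal R_k$. One then verifies by induction on $i-j$ that $\mathbf B_j\mathbf A_{j+1}\cdots\mathbf A_i\mathbf C_i=M_{ij}$ for $i>j$, and the diagonal holds by the choice of $D_i$.

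\emph{Minimality.} Any realization with state dimension $n$ has, by \eqref{eq:transfer_matrix_v2}, $\mathbf H_k=\mathcal O_k\mathcal R_k$. Here $\mathcal O_k$ has rows $\mathbf A_{k+1}\cdots\mathbf A_i\mathbf C_i$ and $\mathcal R_k$ has columns $\mathbf B_j\mathbf A_{j+1}\cdots\mathbf A_k$. Both inner dimensions equal $n$, so $\operatorname{rank}\mathbf H_k\le n$ for every $k$, hence $n\ge n_{\min}(\mathbf X)$. This is the semi-separability remark already made in the appendix.

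\emph{Main obstacle.} The main obstacle is the bookkeeping in the construction. The recursive factorizations must be chosen so that one matrix $\mathbf A_{k+1}$ simultaneously links $\mathcal R_k\to\mathcal R_{k+1}$ and $\mathcal O_{k+1}\to\mathcal O_k$, with the index conventions for row versus column states handled correctly. If this gets messy, I would instead use the canonical reachability construction. In that version the state at time $k$ is $\mathcal R_k\mathbf v_{1:k}$, with $\mathcal R_k$ a basis matrix for the row space of $\mathbf H_k$; $\mathbf A_{k+1},\mathbf B_{k+1}$ are read off from the fact that the row space of $[\mathbf H_k$ truncated$\,|\,$new column$]$ sits inside the space spanned by $\mathcal R_k$ extended by $e_{k+1}$; and $\mathbf C$ is read off directly from $\mathbf M$. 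The remaining edge cases are harmless: cuts with rank $0$, and the endpoints $k=0$ and $k=T$ where the state is empty. The $\mathbf X$-dependence is simply inherited from $\mathbf M(\mathbf X)$.
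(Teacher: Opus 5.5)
Your minimality half is exactly the paper's lower bound and is correct. The gap is in the construction, at the index bookkeeping you flagged as the main obstacle. Since $\mathcal O_k$ is the left factor of $\mathbf H_k=\mathbf M_{k+1:T,1:k}$, the vector $\mathbf C_k:=(\mathcal O_k)_{1,:}^\top$ carries the coefficients of $\bm y_{k+1}$, not of $\bm y_k$. Your recursion $\mathcal R_{k+1}=[\mathbf A_{k+1}^\top\mathcal R_k,\ \mathbf B_{k+1}^\top]$ gives $(\mathcal R_i)_{:,j}=(\mathbf B_j\mathbf A_{j+1}\cdots\mathbf A_i)^\top$, hence
\[
\mathbf B_j\mathbf A_{j+1}\cdots\mathbf A_i\mathbf C_i=(\mathcal O_i)_{1,:}(\mathcal R_i)_{:,j}=M_{i+1,j},
\]
not $M_{ij}$, so the induction you announce cannot close. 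What you have actually built is a realization in the read-before-write convention $\bm y_t=\mathbf S_{t-1}\mathbf C_{t-1}+\bm v_tM_{tt}$, which is the Dewilde--van der Veen convention. The statement and \eqref{eq:transfer_matrix_v2} instead read the output from the \emph{updated} state $\mathbf S_t$. Your remark that the state is empty at $k=T$ is a symptom of this: in the stated model $\mathbf S_T$ must still carry $\sum_{j<T}M_{Tj}\bm v_j\neq 0$. Suppose you keep your $\mathbf A_t,\mathbf B_t$ and choose a new readout for the stated model. You would need $\mathbf C_{k+1}$ with $\mathbf A_{k+1}\mathbf C_{k+1}=(\mathcal O_k)_{1,:}^\top$, i.e.\ $\mathbf M_{k+1,1:k}\in\operatorname{rowspace}(\mathbf M_{k+2:T,1:k})$. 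This already fails at $k=T-1$, because softmax weights are strictly positive.

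Nor can this be repaired at dimension $n_{\min}(\mathbf X)$. In the stated model, $\bm y_k,\dots,\bm y_T$ see $\bm v_1,\dots,\bm v_k$ only through $\mathbf S_k$ (apart from $D_k$). So for every $k$ the block
\[
\begin{bmatrix}\mathbf M_{k,1:k-1} & \ast\\ \mathbf M_{k+1:T,1:k-1} & \mathbf M_{k+1:T,k}\end{bmatrix},\qquad \ast=\mathbf B_k\mathbf C_k,
\]
must have rank at most $n$, and this can exceed $\max_k\operatorname{rank}\mathbf H_k$. For a concrete case, take $T=5$ and let the rows of $\mathbf M$ be the normalizations of $(1)$, $(1,1)$, $(1,2,1)$, $(1,1,1,1)$, $(1,1,2,1,1)$. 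This matrix is strictly positive and row-stochastic, hence a softmax pattern (e.g.\ $\bm k_j=e_j$, $(\bm q_i)_j=\log M_{ij}$ for $j\le i$). Every $\mathbf H_k$ has rank at most $2$. But for $k=3$ the block above is, after row rescaling, $\left[\begin{smallmatrix}1&2&x\\1&1&1\\1&1&2\end{smallmatrix}\right]$ with $x=4\,\mathbf B_3\mathbf C_3$. Its determinant is $-1$ for every $x$, which forces $n\ge 3$.

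So the statement as written is false, and no bookkeeping fixes your proof. The paper's own proof of Lemma~\ref{lem:semisep_realization_v2} has the same off-by-one. Its $\sigma_{k+1}$ is read through $\mathbf C_{k+1}$ before $\mathbf A_{k+1}$ acts, so it really produces $\mathbf B_j\mathbf A_{j+1}\cdots\mathbf A_{i-1}\mathbf C_i$. Its choice $D_t=M_{tt}-\mathbf B_t\mathbf C_t$ also mixes the two conventions.

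For the corrected, read-before-write statement your argument is valid, and it is the dual of the paper's. You fix a row-space (reachability) basis $\mathcal R_k$ and obtain $\mathbf A_{k+1}$ from $\operatorname{rowspace}(\mathbf M_{k+2:T,1:k})\subseteq\operatorname{rowspace}(\mathcal R_k)$. The paper fixes a column-space (future-tail) basis $Q_k$ and obtains $\mathbf A_{k+1}$ from $\operatorname{im}(P_kQ_k)\subseteq\operatorname{im}(Q_{k+1})$.
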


\begin{proof}
Fix $\mathbf X$ and write $\mathbf M=\mathbf M(\mathbf X)$. As noted above, the SISO reduction applies row-wise, so it suffices to work with the scalar system
\[
s_{t}=s_{t-1}\,\mathbf A_t+v_t\,\mathbf B_t,
\qquad
y_t=s_t\,\mathbf C_t+v_t\,D_t,
\]
with $s_0=\bm 0$. Since $\mathbf M$ is causal, it is lower triangular.

\medskip\noindent\textbf{Finite horizon input-output matrix.}\;
The induced input-output matrix $\mathbf T$ is lower triangular with
\[
T_{ii}=\mathbf B_i\,\mathbf C_i+D_i,
\qquad
T_{ij}=\mathbf B_j\,\mathbf A_{j+1}\cdots\mathbf A_i\,\mathbf C_i,
\quad i>j.
\]

\medskip\noindent\textbf{Lower bound.}\;
For every cut $k$, the past-to-future block satisfies
\[
\mathbf T_{k+1:T,\;1:k}
=\bigl(\mathcal R_k\,\mathcal O_k\bigr)^\top,
\]
where
\[
\mathcal R_k
=\begin{bmatrix}
\mathbf B_1\,\mathbf A_2\cdots\mathbf A_k\\[2pt]
\mathbf B_2\,\mathbf A_3\cdots\mathbf A_k\\
\vdots\\
\mathbf B_{k-1}\,\mathbf A_k\\[2pt]
\mathbf B_k
\end{bmatrix}
\in\mathbb R^{k\times n}
\]
is the reachability map (its $j$-th row is
$\mathbf B_j\prod_{\ell=j+1}^{k}\mathbf A_\ell$, with the empty product
equal to $\mathbf I_n$ when $j=k$), and
\[
\mathcal O_k
=\begin{bmatrix}
\mathbf A_{k+1}\,\mathbf C_{k+1}
& \mathbf A_{k+1}\mathbf A_{k+2}\,\mathbf C_{k+2}
& \cdots
& \mathbf A_{k+1}\cdots\mathbf A_T\,\mathbf C_T
\end{bmatrix}
\in\mathbb R^{n\times(T-k)}
\]
is the observability map.
The $(j,i{-}k)$ entry of $\mathcal R_k\,\mathcal O_k$ is $\mathbf B_j\,\mathbf A_{j+1}\cdots\mathbf A_i\,\mathbf C_i=T_{ij}$, which is the $(i{-}k,j)$ entry of $\mathbf T_{k+1:T,\;1:k}$, confirming the transpose relationship. Note that the feedthrough terms $D_t$ do not appear in the past-to-future block (which is strictly below the diagonal), so the factorization depends only on $\mathbf A_t$, $\mathbf B_t$, $\mathbf C_t$. Since $\mathcal R_k$ has at most $n$ columns and $\mathcal O_k$ has at most $n$ rows,
\[
\operatorname{rank}\!\bigl(\mathbf T_{k+1:T,\;1:k}\bigr)
=\operatorname{rank}\!\bigl(\mathcal R_k\,\mathcal O_k\bigr)\le n
\]
for every~$k$.  Hence any realization of $\mathbf M$ must satisfy
\[
n\ge\max_{1\le k<T}
     \operatorname{rank}\!\bigl(\mathbf M_{k+1:T,\;1:k}\bigr)
    =n_{\min}(\mathbf X).
\]

\medskip\noindent\textbf{Upper bound.}\;
Let
\[
r=n_{\min}(\mathbf X)
 =\max_{1\le k<T}
  \operatorname{rank}\!\bigl(\mathbf M_{k+1:T,\;1:k}\bigr).
\]
Because every past-to-future block of $\mathbf M$ has rank at most~$r$, $\mathbf M$ is a lower-triangular semiseparable matrix of order~$r$. By Lemma~\ref{lem:semisep_realization_v2} below, the strictly lower-triangular entries of $\mathbf M$ admit a generator representation
\[
M_{ij}=\mathbf B_j\,\mathbf A_{j+1}\cdots\mathbf A_i\,\mathbf C_i,
\qquad i>j,
\]
with state dimension~$r$. The diagonal entries are realized independently by choosing
\[
D_t = M_{tt} - \mathbf B_t\,\mathbf C_t.
\]
With these choices, the induced finite horizon input-output matrix $\mathbf T$ satisfies
\[
T_{ii}=\mathbf B_i\,\mathbf C_i + D_i
      =\mathbf B_i\,\mathbf C_i + M_{ii} - \mathbf B_i\,\mathbf C_i
      =M_{ii},
\]
and, for $i>j$,
\[
T_{ij}=\mathbf B_j\,\mathbf A_{j+1}\cdots\mathbf A_i\,\mathbf C_i=M_{ij}.
\]
Since both $\mathbf T$ and $\mathbf M$ are lower triangular, this gives $\mathbf T=\mathbf M$.

Thus there exists a finite-horizon linear time-varying realization with state dimension $r=n_{\min}(\mathbf X)$, and the lower-bound argument shows that no realization with smaller state dimension can realize the same finite-horizon input-output map.
\end{proof}

\begin{lemma}[Finite-Horizon Semiseparable Realization]
\label{lem:semisep_realization_v2}
Let $\mathbf M\in\mathbb R^{T\times T}$ be lower triangular and define
\[
r=\max_{1\le k<T}
  \operatorname{rank}\!\bigl(\mathbf M_{k+1:T,\;1:k}\bigr).
\]
Then the strictly lower-triangular part of $\mathbf M$ admits a linear time-varying realization of state dimension~$r$ in the transposed convention (i.e. the state if a row vector multiplied by matrices on the right): there exist matrices
\[
\mathbf A_t\in\mathbb R^{r\times r},\qquad
\mathbf B_t\in\mathbb R^{1\times r},\qquad
\mathbf C_t\in\mathbb R^{r\times 1},
\]
such that, for all $i>j$,
\[
M_{ij}=\mathbf B_j\,\mathbf A_{j+1}\cdots\mathbf A_i\,\mathbf C_i.
\]
\end{lemma}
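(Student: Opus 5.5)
The plan is to follow the classical Ho--Kalman / Dewilde--van der Veen construction: factor every Hankel block $\mathbf H_k:=\mathbf M_{k+1:T,\;1:k}$ through its own rank, and then use the nested (shift) structure of consecutive Hankel blocks to read off time-varying generators. Let $r_k=\operatorname{rank}(\mathbf H_k)\le r$. For each cut $1\le k<T$ we fix a rank factorization
\[
\mathbf H_k=\mathcal O_k\,\mathcal R_k,\qquad \mathcal O_k\in\mathbb R^{(T-k)\times r_k}\ \text{full column rank},\qquad \mathcal R_k\in\mathbb R^{r_k\times k}\ \text{full row rank}.
\]
For example, the columns of $\mathcal O_k$ can be taken as a basis of $\operatorname{col}(\mathbf H_k)$. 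We first work in the standard column convention with a state of time-varying size $r_k$. At the end we transpose to the paper's row-vector convention and zero-pad every state to the common dimension $r$.

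The key structural fact relates $\mathbf H_{k-1}$ and $\mathbf H_k$ for $2\le k\le T-1$. The first $k-1$ columns of $\mathbf H_k$ are exactly $\mathbf H_{k-1}$ with its first row (row $k$ of $\mathbf M$) deleted. Denote by $\mathcal O_{k-1}^{\downarrow}$ the matrix $\mathcal O_{k-1}$ with its first row deleted. Then
\[
\mathcal O_{k-1}^{\downarrow}\mathcal R_{k-1}=(\mathbf H_k)_{:,1:k-1}=\mathcal O_k\,(\mathcal R_k)_{:,1:k-1}.
\]
Since $\mathcal R_{k-1}$ has full row rank, $\operatorname{col}(\mathcal O_{k-1}^{\downarrow})\subseteq\operatorname{col}(\mathcal O_k)$. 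Since $\mathcal O_k$ has full column rank, there is a unique $\mathbf F_k\in\mathbb R^{r_k\times r_{k-1}}$ with $\mathcal O_{k-1}^{\downarrow}=\mathcal O_k\mathbf F_k$. Substituting back and cancelling $\mathcal O_k$ on the left gives the companion identity $(\mathcal R_k)_{:,1:k-1}=\mathbf F_k\mathcal R_{k-1}$.

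The generators are then defined from these objects. The output map at time $i$ is the first row of $\mathcal O_{i-1}$, written $\mathbf c_i\in\mathbb R^{1\times r_{i-1}}$. The transition is $\mathbf F_k$. The input map is $\mathbf b_j:=$ the last column of $\mathcal R_j$, which lies in $\mathbb R^{r_j\times 1}$.

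Next we verify the entries. Iterating the companion identity shows that column $j$ of $\mathcal R_{i-1}$ equals $\mathbf F_{i-1}\cdots\mathbf F_{j+1}\mathbf b_j$. Entry $M_{ij}$ with $i>j$ is the $(1,j)$ entry of $\mathbf H_{i-1}=\mathcal O_{i-1}\mathcal R_{i-1}$. Therefore
\[
M_{ij}=\mathbf c_i\,\mathbf F_{i-1}\cdots\mathbf F_{j+1}\,\mathbf b_j .
\]

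It remains to convert to the lemma's convention. Set $\mathbf B_j=\mathbf b_j^\top$, $\mathbf C_i=\mathbf c_i^\top$, and take the transition matrices to be transposes of the $\mathbf F$'s. Then
\[
M_{ij}=\mathbf b_j^\top\,\mathbf F_{j+1}^\top\cdots\mathbf F_{i-1}^\top\,\mathbf c_i^\top .
\]
The lemma's product $\mathbf A_{j+1}\cdots\mathbf A_i$ has $i-j$ factors, whereas this expression has $i-j-1$. To match exactly, I would index the state as the post-update state. Concretely, set $\mathbf A_t=\mathbf F_{t-1}^\top$ for $t\ge3$ and let $\mathbf C_i$ absorb the last factor. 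The cleanest way to check this is to redo the bookkeeping so that $\mathbf C_i:=\mathbf c_i^\top$, $\mathbf A_{t}:=\mathbf F_{t-1}^\top$, and $\mathbf B_j$ is defined with one extra shift. This is a routine index check that I expect to work with a suitable convention, such as choosing $\mathbf A_{j+1}=\mathbf I$ or absorbing the factor into $\mathbf B_j$.

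Finally, pad every $r_k$-dimensional object with zero rows and columns to size $r$; this leaves all products unchanged. The boundary times $t=1$ and $t=T$ need their own conventions (for instance, $\mathbf B_T$ and $\mathbf C_1$ are arbitrary, say zero), because no Hankel block involves them on that side.

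The main obstacle is the shift-consistency step. It requires the rank factorizations at different cuts to be compatible, which is exactly where full column rank of $\mathcal O_k$ and full row rank of $\mathcal R_{k-1}$ are used to obtain a well-defined $\mathbf F_k$ and the companion identity. The subsequent index alignment with the lemma's product convention is the other place where care is needed.
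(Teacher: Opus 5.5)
Your argument is correct up to the identity $M_{ij}=\mathbf c_i\mathbf F_{i-1}\cdots\mathbf F_{j+1}\mathbf b_j$, and it is the paper's construction in different notation. Taking $\mathcal O_k=Q_k$ gives $\mathbf F_{k+1}=Q_{k+1}^+P_kQ_k$, $\mathbf b_{k+1}=Q_{k+1}^+b_{k+1}$ and $\mathbf c_{k+1}^\top=Q_k^\top e_1$. Your time-varying $r_k$ with zero-padding at the end is actually cleaner than padding $Q_k$ to $r$ full-rank columns, which is impossible once $T-k<r$.

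The gap is the step you call a routine index check. It cannot be carried out, because with $i-j$ transition factors the statement is false at state dimension $r$. Take $T=3$ and $\mathbf M$ with rows $(1,0,0)$, $(1,1,0)$, $(0,1,1)$. Both Hankel blocks $\mathbf M_{2:3,1}=(1,0)^\top$ and $\mathbf M_{3,1:2}=(0,1)$ have rank one, so $r=1$. Scalars with $M_{ij}=B_jA_{j+1}\cdots A_iC_i$ would then give $1=M_{21}M_{32}=(B_1A_2A_3C_3)(C_2B_2)=M_{31}C_2B_2=0$, a contradiction.

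Your specific patches fail for a structural reason: the $\mathbf A_t$ are shared by all pairs $(i,j)$.
\begin{itemize}
\item You cannot set $\mathbf A_{j+1}=\mathbf I$ for just one pair.
\item With $\mathbf A_t=\mathbf F_{t-1}^\top$, absorbing the extra leading factor into $\mathbf B_j$ forces $\mathbf F_j\mathbf B_j^\top=\mathbf b_j$, since the rows $\mathbf c_i\mathbf F_{i-1}\cdots\mathbf F_{j+1}$ of $\mathcal O_j$ span $\mathbb R^{1\times r_j}$. That is, it requires $\mathbf b_j\in\operatorname{im}\mathbf F_j$.
\item With $\mathbf A_t=\mathbf F_t^\top$, absorbing the trailing factor into $\mathbf C_i$ forces the first row of $\mathcal O_{i-1}$ to lie in the row space of $\mathcal O_{i-1}^{\downarrow}$.
\end{itemize}
Neither condition holds in general; both fail in the example above, where $\mathbf F_2=0$. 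Zero-padding gives no extra room when $r_k=r$.

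What your argument actually proves is the classical realization in which $y_i$ is read from the state \emph{before} $v_i$ is written: $M_{ij}=\mathbf B_j\mathbf A_{j+1}\cdots\mathbf A_{i-1}\mathbf C_i$, with dimension $r$. The paper's argument proves the same thing. Its recursion $\sigma_{k+2}=\sigma_{k+1}\mathbf A_{k+1}+v_{k+1}\mathbf B_{k+1}$ with readout $y_{k+1}^{\mathrm{past}}=\sigma_{k+1}\mathbf C_{k+1}$ chains to $\mathbf B_j\mathbf A_{j+1}\cdots\mathbf A_{i-1}\mathbf C_i$, so its Verification paragraph has the same off-by-one.

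To get the post-update form stated in the lemma, you must spend one extra state coordinate to store the past's contribution to the current output. Set
\[
\mathbf A_t=\begin{pmatrix}\mathbf F_t^\top & \mathbf c_t^\top\\ \mathbf 0 & 0\end{pmatrix},\qquad \mathbf B_t=\bigl(\mathbf b_t^\top,\;0\bigr),\qquad \mathbf C_t=\text{last standard basis vector}.
\]
Then $\mathbf B_j\mathbf A_{j+1}\cdots\mathbf A_i\mathbf C_i=\mathbf b_j^\top\mathbf F_{j+1}^\top\cdots\mathbf F_{i-1}^\top\mathbf c_i^\top=M_{ij}$. So dimension $r+1$ (after padding) suffices, and the example shows it can be necessary. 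You should therefore either restate the lemma with the product ending at $\mathbf A_{i-1}$, or replace $r$ by $r+1$.
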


The idea behind the proof is the following. At every time cut~$k$, the past inputs $v_1,\dots,v_k$ influence the future outputs $y_{k+1},\dots,y_T$ only through the \emph{future output tail} $z_{k+1}\in\mathbb R^{T-k}$, which is the vector of all future outputs
attributable to those past inputs. The set of all achievable future tails, $\mathcal X_k=\operatorname{im}(H_k)$, has dimension at most~$r$ by assumption. We pick an $r$-dimensional coordinate system $Q_k$ for this space at each cut, and then express the one-step evolution of the future tail (drop the output just emitted, add the new input's contribution) as a linear map in those coordinates. This yields the state-space matrices $\mathbf A_t$, $\mathbf B_t$, $\mathbf C_t$ for the strictly lower-triangular part directly. The diagonal is handled separately by the feedthrough term $D_t$ in the theorem.

\begin{proof}
For each cut $k=1,\dots,T-1$, define the past-to-future block
\[
H_k:=\mathbf M_{k+1:T,\;1:k}\in\mathbb R^{(T-k)\times k}.
\]
Thus $H_k$ maps a past input vector $(v_1,\dots,v_k)^\top$ to the future output tail $(y_{k+1},\dots,y_T)^\top$ generated by those past inputs.

Let
\[
\mathcal X_k:=\operatorname{im}(H_k)\subseteq\mathbb R^{T-k}.
\]
By definition, $\dim\mathcal X_k=\operatorname{rank}(H_k)\le r$. The space $\mathcal X_k$ is the reachable future-tail space at time~$k$: two past input histories are equivalent if they produce the same future output tail.

For each~$k$, choose a matrix
\[
Q_k\in\mathbb R^{(T-k)\times r}
\]
whose columns form a basis for $\mathcal X_k$. If $\dim\mathcal X_k<r$, extend to a set of $r$ columns that still has full column rank (e.g.\ by appending arbitrary vectors that complete the columns to a linearly independent set in $\mathbb R^{T-k}$). In particular, $Q_k$ has full column rank~$r$ for every~$k$, and
\[
\operatorname{im}(Q_k)\supseteq\mathcal X_k.
\]

\paragraph{Evolution of the future tail.}
Suppose the past inputs up to time~$k$ have produced the future tail
\[
z_{k+1}
=H_k
\begin{bmatrix}v_1\\\vdots\\v_k\end{bmatrix}
\in\mathcal X_k.
\]
Its first coordinate is the output $y_{k+1}$ caused by past inputs; the remaining coordinates are the outputs $y_{k+2},\dots,y_T$ caused by those same past inputs.

Let $P_k:\mathbb R^{T-k}\to\mathbb R^{T-k-1}$ delete the first coordinate. After advancing one step, the old past contribution to the remaining future tail is $P_k\,z_{k+1}$.

The new input $v_{k+1}$ contributes to future outputs $y_{k+2},\dots,y_T$ through the column tail
\[
b_{k+1}:=\mathbf M_{k+2:T,\;k+1}\in\mathbb R^{T-k-1}.
\]
Hence the new future tail is
\[
z_{k+2}=P_k\,z_{k+1}+b_{k+1}\,v_{k+1}.
\]
By construction this vector lies in $\mathcal X_{k+1}=\operatorname{im}(H_{k+1})$, because
\[
H_{k+1}
\begin{bmatrix}v_1\\\vdots\\v_k\\v_{k+1}\end{bmatrix}
=P_k\,H_k
\begin{bmatrix}v_1\\\vdots\\v_k\end{bmatrix}
+\mathbf M_{k+2:T,\;k+1}\,v_{k+1}.
\]
Therefore
\[
\operatorname{im}(P_k\,Q_k)\subseteq\operatorname{im}(Q_{k+1}),
\qquad
b_{k+1}\in\operatorname{im}(Q_{k+1}).
\]

\paragraph{State-space matrices.}
To express the future-tail recurrence in coordinates, substitute
$z_{k+1}=Q_k\,\sigma_{k+1}^\top$ and
$z_{k+2}=Q_{k+1}\,\sigma_{k+2}^\top$ into
$z_{k+2}=P_k\,z_{k+1}+b_{k+1}\,v_{k+1}$, giving
\[
Q_{k+1}\,\sigma_{k+2}^\top
=P_k\,Q_k\,\sigma_{k+1}^\top+b_{k+1}\,v_{k+1}.
\]
We are looking to find the recurrent relationship between the coordinate representation of the future tails $\sigma_{k+2}$, $\sigma_{k+1}$ and inputs $v_{k+1}$. For this to take the state-space form $\sigma_{k+2}=\sigma_{k+1}\,\mathbf A_{k+1}+v_{k+1}\,\mathbf B_{k+1}$ for all $\sigma_{k+1}$ and $v_{k+1}$, we transpose the desired update to column form,
$\sigma_{k+2}^\top=\mathbf A_{k+1}^\top\,\sigma_{k+1}^\top
+\mathbf B_{k+1}^\top\,v_{k+1}$,
and substitute into the left-hand side:
\[
Q_{k+1}\bigl(\mathbf A_{k+1}^\top\,\sigma_{k+1}^\top
+\mathbf B_{k+1}^\top\,v_{k+1}\bigr)
=P_k\,Q_k\,\sigma_{k+1}^\top+b_{k+1}\,v_{k+1}.
\]
Expanding and matching the coefficients of $\sigma_{k+1}^\top$ and $v_{k+1}$ independently (since the identity must hold for every input history and every new input), we obtain
\[
Q_{k+1}\,\mathbf A_{k+1}^\top=P_k\,Q_k,
\qquad
Q_{k+1}\,\mathbf B_{k+1}^\top=b_{k+1}.
\]
The first equation says that $\mathbf A_{k+1}^\top$ is the change of basis from the old coordinate system~$Q_k$ to the new one~$Q_{k+1}$ after the first output has been emitted (i.e.\ after applying~$P_k$). The second equation says that $\mathbf B_{k+1}^\top$ encodes the new input's contribution in the new coordinates.

Since $Q_{k+1}$ has full column rank~$r$ (and hence admits a left inverse), both equations have unique solutions:
\[
\mathbf A_{k+1}^\top=Q_{k+1}^+\,P_k\,Q_k,
\qquad
\mathbf B_{k+1}^\top=Q_{k+1}^+\,b_{k+1},
\]
where $Q_{k+1}^+=(Q_{k+1}^\top Q_{k+1})^{-1}Q_{k+1}^\top$ is the left inverse.
Transposing gives $\mathbf A_{k+1}$ and $\mathbf B_{k+1}$ in the row-vector convention directly.

The output at time $k+1$ caused by the past is the first coordinate of $z_{k+1}$.  Since $z_{k+1}=Q_k\,\sigma_{k+1}^\top$, this coordinate is
\[
y_{k+1}
=e_1^\top\,z_{k+1}
=e_1^\top\,Q_k\,\sigma_{k+1}^\top
=\sigma_{k+1}\,Q_k^\top\,e_1,
\]
where the last step transposes a scalar. Comparing with the desired output equation
$y_{k+1}=\sigma_{k+1}\,\mathbf C_{k+1}$ gives
\[
\mathbf C_{k+1}:=Q_k^\top\,e_1\in\mathbb R^{r\times 1},
\]
where $e_1$ is the first standard basis vector in $\mathbb R^{T-k}$.

\paragraph{Coordinate state.}
Let $\sigma_{k+1}\in\mathbb R^{1\times r}$ (a row vector) satisfy
$z_{k+1}=Q_k\,\sigma_{k+1}^\top$, i.e.\
$\sigma_{k+1}=(Q_k^+\,z_{k+1})^\top$.
The dynamics become
\[
\sigma_{k+2}=\sigma_{k+1}\,\mathbf A_{k+1}+v_{k+1}\,\mathbf B_{k+1},
\qquad
y_{k+1}^{\text{past}}=\sigma_{k+1}\,\mathbf C_{k+1},
\]
where $y_{k+1}^{\text{past}}$ denotes the contribution of past inputs $v_1,\dots,v_k$ to the output at time $k+1$. The contribution of the current input $v_{k+1}$ to $y_{k+1}$ is not captured by the future-tail construction; it is handled by the feedthrough term $D_{k+1}$ in the theorem.

\paragraph{Verification.}
Chaining the one-step updates, the contribution of input $v_j$ to output
$y_i$ for $i>j$ is
\[
v_j\;\mathbf B_j\,\mathbf A_{j+1}\cdots\mathbf A_i\,\mathbf C_i
=v_j\;M_{ij}.
\]
Hence $T_{ij}=M_{ij}$ for all $i>j$, and $T_{ij}=0$ for $i<j$ by causality. The diagonal entries are not determined by this construction; they are
realized by the feedthrough term $D_t$ in the theorem.

Thus the strictly lower-triangular part of $\mathbf M$ admits a linear time-varying realization of state dimension~$r$ in the transposed convention.
\end{proof}

\begin{remark}[Choice of basis $Q_k$]
\label{rem:choice_of_Q}
The proof requires only that $Q_k$ has full column rank~$r$ and spans $\mathcal X_k=\operatorname{im}(H_k)$; different choices yield different realizations $(\mathbf A_t,\mathbf B_t,\mathbf C_t)$ related by time-varying similarity transforms, all producing the same finite horizon input output matrix~$\mathbf T$. Several concrete choices are natural:
\begin{itemize}
\item \textbf{SVD basis.}
  Take $Q_k$ from the thin SVD of $H_k=U_k\Sigma_k V_k^\top$: use the first $\rho_k=\operatorname{rank}(H_k)$ left singular vectors as columns and pad with $r-\rho_k$ orthonormal vectors from the orthogonal complement. The resulting $Q_k$ has orthonormal columns, so the left inverse reduces to $Q_k^+=Q_k^\top$, giving $\mathbf A_{k+1}^\top=Q_{k+1}^\top P_k\,Q_k$ and $\mathbf B_{k+1}^\top=Q_{k+1}^\top b_{k+1}$.

\item \textbf{Balanced realization.}
  Choose $Q_k$ so that the reachability and observability Gramians of the resulting realization are equal and diagonal. Balanced realizations are the basis of balanced truncation~\citep{BTSurvery,TVBTSurvery}, which provides near-optimal model reduction with $H_\infty$-norm error bounds for time-invariant systems.
  For the time-varying finite-horizon setting considered here, balanced truncation remains applicable but the optimality guarantees are weaker~\citep{TVBTSurvery}.

\end{itemize}
When the goal is exact realization (state dimension equals $r$), all choices are equivalent up to similarity. The distinction matters when the state dimension is reduced below~$r$ (approximate realization): the truncated SVD basis minimizes the per-cut Frobenius error for an individual Hankel block, whereas balanced truncation uses reachability and observability information to produce a system-aware approximation of the entire input-output map.
\end{remark}

\section{Layer Selection Procedure}
\label{app:layer-selection}

Algorithm~\ref{alg:layer_selection} details the importance scoring procedure used to select which Attention layers to replace with SSM layers during Priming.  The procedure is training-free: for each layer, we substitute its full-context Attention with Sliding Window Attention (SWA) and measure the resulting degradation on a suite of long-context benchmarks.  Layers that tolerate SWA well are safe candidates for SSM instantiation, while layers that degrade significantly are kept as Attention.  As discussed in \cref{subsec:layer_selection}, SWA serves as an empirical proxy for the effective Hankel rank of each layer's mixing matrix, layers with high Hankel rank rely on long-range structure that a fixed-size recurrent state cannot compress.

\begin{algorithm}[t]
\caption{Layer Selection via Importance Scoring}\label{alg:layer_selection}
\KwIn{Source Transformer $\mathcal{M}$ with $L$ Attention layers; number of SSM layers $M$; SWA window size $w$; evaluation benchmark suite $\mathcal{B}$}
\KwOut{Set of layers $\mathcal{S}_{\mathrm{convert}}$ to instantiate as SSM layers}
\For{each Attention layer $i = 0, \ldots, L{-}1$}{
  Create variant $\mathcal{M}^{(i)}$: replace only layer $i$ of $\mathcal{M}$ with SWA (window size $w$)\;
  Evaluate $\mathcal{M}^{(i)}$ on $\mathcal{B}$ to obtain scores $s_t^{(i)}$ for each task $t$\;
}
Compute mean retained performance per layer: $\bar{s}^{(i)} = \frac{1}{|\mathcal{B}|}\sum_{t} s_t^{(i)}$\;
Rank layers by $\bar{s}^{(i)}$ in descending order (highest = least degradation)\;
$\mathcal{S}_{\mathrm{convert}} \leftarrow$ top-$M$ layers by rank\;
\Return{$\mathcal{S}_{\mathrm{convert}}$}
\end{algorithm}

\section{Additional Reasoning Results}
\label{app:additional-reasoning-results}

\subsection{SciCode Problem vs.~Subtask Accuracy}
\label{app:scicode-subtask}

SciCode~\citep{tian2024scicode} decomposes each research-coding problem into several \emph{subtasks}: a solution is counted as correct at the problem level only if \emph{every} subtask is solved, making problem-level accuracy much lower than subtask-level accuracy and highly sensitive to a single failing step.  In the main paper (\cref{tab:reasoning_8b,tab:reasoning_32b}) we report problem-level pass\,@1, which often yields single-digit absolute numbers for models at 8B.  Many publicly reported SciCode numbers instead use subtask-level pass\,@1, which runs roughly three to four times higher in absolute terms.

In \cref{tab:scicode_subtask} we report both metrics side by side for the models in our reasoning evaluation suite.  Relative rankings are identical
under the two metrics.  At 8B: Qwen3-8B $>$ GKA-Primed $>$ Qwen3-8B\,[Reasoner-SFT] $>$ GDN-Primed, on both problem and subtask accuracy. At 32B: Qwen3-32B $>$ Qwen3-32B\,[Reasoner-SFT] $>$ GKA-Primed, again on both metrics.  The qualitative conclusions of the main paper (both on ranking and on post-training regressions relative to the off-the-shelf checkpoint) are therefore unaffected by the choice of metric.

\begin{table}[ht]
\centering
\caption{\textbf{SciCode problem vs.~subtask accuracy.} Problem-level numbers
reproduce \cref{tab:reasoning_8b,tab:reasoning_32b} from the main paper.
Subtask-level numbers are the standard metric reported in the
SciCode~\citep{tian2024scicode} paper and by third parties. Relative rankings
are preserved under both metrics.}
\label{tab:scicode_subtask}
\small
\begin{tabular}{llcc}
\toprule
\textbf{Scale} & \textbf{Model} & \textbf{Problem acc.} & \textbf{Subtask acc.} \\
\midrule
\multirow{4}{*}{8B}
 & Qwen3-8B                        & 10.6 & 27.9 \\
 & Qwen3-8B\,[Reasoner-SFT]        &  4.1 & 20.1 \\
 & GKA-Primed-HQwen3-Reasoner   &  6.4 & 27.3 \\
 & GDN-Primed-HQwen3-Reasoner   &  2.5 & 11.0 \\
\midrule
\multirow{3}{*}{32B}
 & Qwen3-32B                       & 15.9 & 38.9 \\
 & Qwen3-32B\,[Reasoner-SFT]       & 13.1 & 37.1 \\
 & GKA-Primed-HQwen3-Reasoner   & 12.3 & 30.6 \\
\bottomrule
\end{tabular}
\end{table}

\subsection{Choosing Inference Parameters for Test-Time Scaling}
\label{app:inference-params-tts}

In our work we use vLLM~\citep{vllm} to scale inference deployment of our models. vLLM serves requests using continuous batching: the scheduler maintains waiting and running sets of requests and, at each engine step, selects a batch of prefill and/or decode work subject to token and KV-cache constraints. The running set is bounded by the KV-cache budget: when GPU memory reserved for KV caches is fully committed, the engine is \emph{KV-cache saturated} and no new requests can be admitted until a running one completes. If the running set later exceeds what the budget can sustain (e.g., admitted sequences grew longer than anticipated at admission time), the scheduler \emph{preempts} some requests and returns them to the waiting queue; when a preempted request is re-admitted, much of its prefill must be re-run, which results in some throughput overhead.

Our test-time scaling experiment (\cref{subsec:tts}) triggers exactly this preemption regime: AIME~2025 rollouts mix easy and hard problems, so short correct traces finish early while long traces from hard problems accumulate and saturate the KV budget, forcing the scheduler to preempt the remaining running requests. One way to limit this is to cap how many sequences the scheduler may admit concurrently, a lower cap trades early-run throughput on short-trace rollouts for less late-run preemption once long traces pile up. The vLLM argument \texttt{max\_\allowbreak num\_\allowbreak seqs} is exactly this cap on the running queue. The right setting is architecture-dependent: the Hybrid's smaller per-sequence KV footprint supports more concurrent sequences before saturation than the Transformer's, so their optima differ. We sweep it per model and report the fastest configuration.

\paragraph{Protocol.} For each (model, \texttt{max\_\allowbreak num\_\allowbreak seqs}) pair we run the full AIME~2025 set (30 problems) at five rollout counts $n_b \in \{480, 720, 960, 1200, 1440\}$ on 8$\times$H200 (TP$=$8) with identical sampling parameters. From the per-request server logs we take each run's wall-clock as $\text{wc}_b = \max(\text{end}) - \min(\text{start})$, i.e., end-to-end elapsed time from first dispatch to last completion. We summarize each pair with a rollout-weighted mean, $\text{sec/rollout} = \sum_b \text{wc}_b \,/\, \sum_b n_b$, where $b$ indexes the five rollout counts.

\paragraph{Observations.} GKA-Primed-HQwen3-Reasoner's throughput degrades monotonically above \texttt{max\_\allowbreak num\_\allowbreak seqs}${=}256$ (4.6$\to$5.0$\to$5.3 sec/rollout from 256 to 768): once the ceiling exceeds the KV-saturation concurrency, additional admissions trigger preemption, and the re-prefill cost dominates any batching gain. The Qwen3-32B\,[Reasoner-SFT] baseline is comparatively insensitive to this knob over the same range (5.9$\to$5.8 sec/rollout from 256 to 1024). At their respective optima (GKA-Primed: 256; Qwen3-32B\,[Reasoner-SFT]: 512), GKA-Primed-HQwen3-Reasoner serves AIME~2025 at 21\% lower sec/rollout (4.6 vs.\ 5.8) at comparable accuracy. Dropping Chebyshev iterations to $r{=}10$ (\cref{subsec:chebyshev}) at the same \texttt{max\_\allowbreak num\_\allowbreak seqs}${=}256$ reduces sec/rollout further to 4.3 with a $\sim$1\% accuracy cost.

\section{Hybrid State Composition Theoretical Grounding}
\label{subsubsec:theory}
\label{app:state-comp-theory}

For each SSM family, the recurrence structure admits a principled decomposition that justifies combining independently computed states (at least in the single layer setting).  We describe the CASO/PICASO argument for Mamba2 (where it was originally derived), show how it extends to GDN and GKA with different computational tradeoffs, and then show that GKA additionally admits a simpler additive composition in the ungated limit via the information filter.

\paragraph{Mamba2: CASO and PICASO.}
Recall from \cref{subsec:ssm-zoo} that Mamba2 updates its state as
$\mathbf{S}_t = \alpha_t\,\mathbf{S}_{t-1}
  + \mathbf{v}_t\,\mathbf{k}_t^\top$.
For a single-layer SSM, the state obtained by processing the concatenation of $K$ chunks $u_1, \ldots, u_K$ can be decomposed exactly~\citep{picaso}:
\begin{equation}\label{eq:caso}
  \mathbf{S}(u_1 \cdot \ldots \cdot u_K)
  = \mathbf{S}^{(K)}
    + \sum_{c=1}^{K-1}
      \Bigl(\prod_{j=c+1}^{K} \mathbf{A}^{(j)}\Bigr)\,
      \mathbf{S}^{(c)},
\end{equation}
where $\mathbf{S}^{(c)}$ is the state from chunk~$c$ alone and $\mathbf{A}^{(c)} = \prod_{t \in \text{chunk}\,c} \mathrm{diag}(\alpha_t)$ is the accumulated decay matrix for that chunk.  Because Mamba2 uses diagonal transitions, $\mathbf{A}^{(c)}$ is diagonal and the products reduce to element-wise arithmetic---no forward passes are needed.

This is the CASO (Compositional Aggregation of States as Observations) formula.  It is exact for a single layer but order-dependent: the result changes with the permutation of chunks.  PICASO~\citep{picaso} removes this dependence by averaging over permutations.  The cyclic variant (PICASO-R) averages over $K$ cyclic shifts and is computable in $O(K)$ time, and is what we implement as part of state composition for this layer.

\paragraph{GDN: PICASO with dense transition matrices.}
GDN's recurrence (\cref{subsec:ssm-zoo}) is
$\mathbf{S}_t = \alpha_t(\mathbf{I} - \beta_t\,\mathbf{k}_t\mathbf{k}_t^\top)\,
  \mathbf{S}_{t-1} + \beta_t\,\mathbf{v}_t\mathbf{k}_t^\top$. 
This has the same $\mathbf{S}_t = \mathbf{A}_t\,\mathbf{S}_{t-1} + \mathbf{B}_t$ structure as Mamba2, so the CASO decomposition
(Eq.~\ref{eq:caso}) applies directly with PICASO averaging restoring approximate permutation invariance.  The key difference is computational: Mamba2's diagonal $\mathbf{A}_t$ yields a diagonal accumulated product $\mathbf{A}^{(c)}$, so CASO reduces to element-wise operations in $O(d_k)$.  For GDN, each per-step transition $\mathbf{A}_t = \alpha_t(\mathbf{I} - \beta_t\,\mathbf{k}_t\mathbf{k}_t^\top)$ is a rank-1 perturbation of a scaled identity, and their product $\mathbf{A}^{(c)} = \prod_{t \in \text{chunk}\,c} \mathbf{A}_t$ is a product of Householder-like matrices.  In our current implementation we materialize the accumulated product $\mathbf{A}^{(c)}$ as a dense $d_k \times d_k$ matrix; however, the rank-1 structure of each factor could be exploited for a more compact representation, which we leave to future work.

\paragraph{GKA: additive composition via the information filter.}
GKA maintains a gated information matrix $\mathbf{H}_t$ and information state $\mathbf{U}_t$ as its internal states (see \cref{sec:gka_main_ideas} for the full parameterization, including the adaptive regularization $\lambda_t$):
\begin{align}
  \mathbf{H}_t &= \gamma_t\,\mathbf{H}_{t-1}
    + \beta_t \mathbf{k}_t\,\mathbf{k}_t^\top, \\
  \mathbf{U}_t &= \gamma_t\,\mathbf{U}_{t-1}
    + \beta_t  \mathbf{v}_t\,\mathbf{k}_t^\top,
\end{align}
where $\gamma_t \in [0,1]$ is a learned decay gate~\citep{gatedkalmanet}. Note that while the output state is a non-linear combination of these terms,
$\mathbf{S}_t = \mathbf{U}_t\,(\mathbf{H}_t + \lambda_t\mathbf{I})^{-1}$, the state updates themselves are affine.
Since $\mathbf{A}_t = \gamma_t\,\mathbf{I}$ is diagonal (a scalar times identity), the CASO decomposition (Eq.~\ref{eq:caso}) applies with the same element-wise efficiency as Mamba2, and PICASO averaging restores approximate permutation invariance.

However, GKA also admits a simpler composition.  In the ungated limit ($\gamma_t = 1$), both $\mathbf{H}_t$ and $\mathbf{U}_t$ accumulate purely additively, so states from independently processed chunks compose by summation:
\begin{equation}\label{eq:gka-compose}
  \mathbf{H}_{\mathrm{merged}} = \sum_{c=1}^{K} \mathbf{H}^{(c)},
  \qquad
  \mathbf{U}_{\mathrm{merged}} = \sum_{c=1}^{K} \mathbf{U}^{(c)}.
\end{equation}
This is the multi-sensor fusion identity from classical Kalman filtering~\citep{mutambara1998}: if $K$ sensors observe the same latent state with shared dynamics, their information matrices and states can be summed to obtain the fused posterior.  When $\gamma_t < 1$, the additive composition is no longer exact; however, for sufficiently long chunks the gated decay suppresses the contribution of early tokens, making the additive approximation effective in practice.

\section{Symmetric Tiled Triton Kernel: Implementation Details}
\label{app:tiled-kernel-details}

\noindent \textbf{Updating the Information Matrix $\mathbf{H}_t$ (State-update loop).} Let $\mathbf{H}
_t[i,j]$ denote the $(i,j)$-th tile. We iterate over lower-triangular tiles ($i \geq j$), and for each: \begin{enumerate} 
\item Load $\mathbf{H}_{t-1}[i,j]$, $\mathbf{k}_t[i]$, $\mathbf{k}_t[j]$ from HBM. 
\item Compute $\mathbf{H}_t[i,j] \leftarrow \gamma_t \mathbf{H}_{t-1}[i,j] + \beta_t \mathbf{k}_t[i]\mathbf{k}_t[j]^\top$. 
\item Accumulate the tile's contribution to $\|\mathbf{H}_t\|_F^2$, doubling the contribution for off-diagonal tiles ($i > j$) to account for the mirrored upper-triangular half. 
\item Store $\mathbf{H}_t[i,j]$ back to HBM. 
\end{enumerate} A final square root over the accumulator gives $\|\mathbf{H}_t\|_F$, which sets the regularization strength $\lambda_t = \alpha \|\mathbf{H}_t\|_F$ used by the CH. The Information State $\mathbf{U}_t$ is updated with the same tile-streaming pattern, but since it is not symmetric every tile is read, updated, and written back.

\noindent \textbf{CH iterations.} Each CH iteration requires the product $\mathbf{m}_t^{(k)} = \mathbf{H}_t \mathbf{x}_{t}^{(k-1)}$, where $\mathbf{x}_{t}^{(k-1)}$ is the CH solution obtained after $(k-1)$ iterations. This computation can be decomposed over tiles. Let $\mathbf{x}_t^{(k-1)}[i]$ denote the $i$-th tile of $\mathbf{x}_t^{(k-1)}$ and $\mathbf{m}_t^{(k)}[j]$ the $j$-th tile of the output. We loop over all tile pairs $(i,j)$: \begin{itemize} 
\item For $i \geq j$, reload $\mathbf{H}_t[i,j]$ from HBM, for \texttt{tiled\_large\_batch} variant. For \texttt{tiled\_small\_batch} variant, just use previously-computed $\mathbf{H}_t[i,j]$ tile which is already in registers.
\item For $i < j$, recompute $\mathbf{H}_t[i,j] = \mathbf{H}_t[j,i]^\top$ on the fly from its lower-triangular mirror; we do not persist this tile to HBM, as re-transposing in each CH iteration is cheaper on GPUs (like H200s) than an HBM write-back and reload. 
\item Accumulate $\mathbf{m}_t^{(k)}[i] \mathrel{+}= \mathbf{H}_t[i,j] , \mathbf{x}_t^{(k-1)}[j]$. 
\end{itemize} 

Once every tile pair has been processed, $\mathbf{m}_t^{(k)}$ is complete. The remaining per-iteration CH update~\citep[Algorithm 1]{gatedkalmanet} involves only pointwise operations on the full vectors, which do not benefit from tiling, so we join the tiles into contiguous vectors, perform the update, and split the resulting $\mathbf{x}_t^{(k)}$ back into tiles for the next iteration. \\

\end{document}